\documentclass{article}

\usepackage{microtype}
\usepackage{graphicx}
\usepackage{booktabs} 
\usepackage{url}
\usepackage{dsfont}
\usepackage{tikz}
\usepackage{caption}
\usepackage{multirow}
\usepackage{bm}
\usepackage{xcolor}
\usepackage{hyperref}

\usepackage[accepted]{icml2026}

\usepackage{amsmath}
\usepackage{amssymb}
\usepackage{mathtools}
\usepackage{amsthm}
\usepackage{newtxtext,newtxmath}
\usepackage[capitalize,noabbrev]{cleveref}
\DeclareMathOperator*{\argmax}{arg\,max}

\theoremstyle{plain}
\newtheorem{theorem}{Theorem}[section]
\newtheorem{proposition}[theorem]{Proposition}

\theoremstyle{definition}

\theoremstyle{remark}

\newcommand{\labeledfig}[2]{%
  \begin{tikzpicture}[baseline=(image.south)]
    \node[anchor=south west,inner sep=0] (image) {\includegraphics[width=\textwidth]{#1}};
    \node[fill=black,text=white,font=\rmfamily\fontsize{9}{12}\selectfont,
          anchor=south west,inner sep=0pt,minimum width=9pt,minimum height=9pt,
          align=center]
          at ([yshift=0pt]image.north west) {#2};
  \end{tikzpicture}%
}

\usepackage[disable,textsize=tiny]{todonotes}

\icmltitlerunning{On Revisiting Entropy for Identifying Mislabeled Images}

\begin{document}

\twocolumn[
  \icmltitle{On Revisiting Entropy for Identifying Mislabeled Images}

  \begin{icmlauthorlist}
    \icmlauthor{Chunlei Li}{yyy}
    \icmlauthor{Zixuan Zheng}{yyy}
    \icmlauthor{Yilei Shi}{yyy}
    \icmlauthor{Guanglu Dong}{sch1}
    \icmlauthor{Pengfei Li}{sch2}
    \icmlauthor{Jingliang Hu}{yyy}
    \icmlauthor{Xiao Xiang Zhu}{sch3}
    \icmlauthor{Lichao Mou}{yyy}
  \end{icmlauthorlist}

  \icmlaffiliation{yyy}{MedAI Technology (Wuxi) Co. Ltd., Wuxi, China}
  \icmlaffiliation{sch1}{Sichuan University, Chengdu, China}
  \icmlaffiliation{sch2}{University of Basel, Allschwil, Switzerland}
  \icmlaffiliation{sch3}{Technical University of Munich, Munich, Germany}

  \icmlcorrespondingauthor{Lichao Mou}{lichao.mou@gmail.com}

  \icmlkeywords{Machine Learning, ICML}

  \vskip 0.3in
]

\printAffiliationsAndNotice{} 

\begin{abstract}
  Mislabeled samples in training datasets severely degrade the performance of deep networks, as overparameterized models tend to memorize erroneous labels. We address this challenge by proposing a novel approach for mislabeled data detection that leverages training dynamics. Our method is grounded in the key observation that correctly labeled samples exhibit consistent entropy decrease during training, while mislabeled samples maintain relatively high entropy throughout the training process. Building on this insight, we introduce a signed entropy integral (SEI) statistic that captures both the magnitude and temporal trend of prediction entropy across training epochs. SEI is broadly applicable to classification networks and demonstrates particular effectiveness when integrated with contrastive language-image pretraining (CLIP) architectures. Through extensive experiments on four medical imaging datasets---a domain particularly susceptible to labeling errors due to diagnostic complexity---spanning diverse modalities and pathologies, we demonstrate that SEI achieves state-of-the-art performance in mislabeled data identification, outperforming existing methods while maintaining computational efficiency and implementation simplicity. Our code is available at \url{https://github.com/MedAITech/SEI}.
\end{abstract}

\section{Introduction}

Deep networks have achieved remarkable results in medical imaging, enabling applications from tumor segmentation to disease classification~\citep{bg1,nie2019difficulty,luo2021semi,chen2026propl,zhang2017mdnet,li2025scale,dong2026dual}. Yet, their performance hinges on the quality of training data~\citep{bg2}. In practice, medical datasets often contain mislabeled samples due to the complexity of diagnosis, inter-observer variability, and limited annotation resources~\citep{bg2, bg3}. For instance, in dermatology, the visual appearance of skin lesions can overlap heavily between malignant and benign conditions; melanomas may resemble benign nevi in early stages, and fungal infections can mimic inflammatory skin disorders. Even experienced dermatologists may disagree without histopathological confirmation. Similarly, in ophthalmology, subtle retinal changes in early diabetic retinopathy can be challenging to grade consistently, especially when annotation guidelines differ across graders. Such challenges mean that noisy labels are a realistic concern in many medical imaging datasets.
\par
Noisy labels pose a particular risk for overparameterized deep networks, which can fit even randomly assigned labels given sufficient capacity~\citep{DBLP:conf/iclr/ZhangBHRV17}. When this happens, a model effectively memorizes mislabeled samples by learning overly specific, non-generalizable features. For example, if an image of a benign skin nevus is mistakenly labeled as melanoma, the model may latch onto spurious visual patterns unique to that single image---such as lighting artifacts or sensor noise---rather than features indicative of melanoma. These spurious correlations will lead to overfitting and degraded performance.
\par
Our aim is to automatically identify and remove mislabeled samples from training data. This not only uncovers systematic annotation errors but also improves overall label quality. Moreover, because large medical datasets are often too extensive for exhaustive manual inspection, automated methods are essential for isolating mislabeled samples without overburdening domain experts.
\par
Prior works on mislabeled data detection typically involve multi-stage pipelines and tailored loss functions or modules \citep{incv,recov,o2unet,divide,cores,deft}, which can be tightly coupled to specific architectures or disrupt standard training workflows. We take a different approach: a simple yet effective plug-and-play method that works with existing training setups and requires minimal changes. Our method tracks training dynamics of a classification model with a contrastive loss \citep{clip} to align medical images and labels. We observe that the evolution of prediction entropy encodes subtle cues for distinguishing noisy from clean samples. Correctly labeled samples tend to exhibit a steady entropy decrease as learning progresses, whereas mislabeled samples often maintain relatively high entropy.
\par
However, entropy alone is insufficient to separate mislabeled data from hard but valuable clean samples, which may also retain high entropy despite having correct labels. To address this, we propose a signed entropy integral (SEI) statistic that captures both the magnitude and trend of entropy evolution. This signed formulation provides a richer characterization of training dynamics: hard clean samples and mislabeled samples exhibit distinct patterns in how their entropy changes over time, allowing SEI to differentiate the two.
\par

\begin{figure*}[t]
\centering
\includegraphics[width=0.48\textwidth]{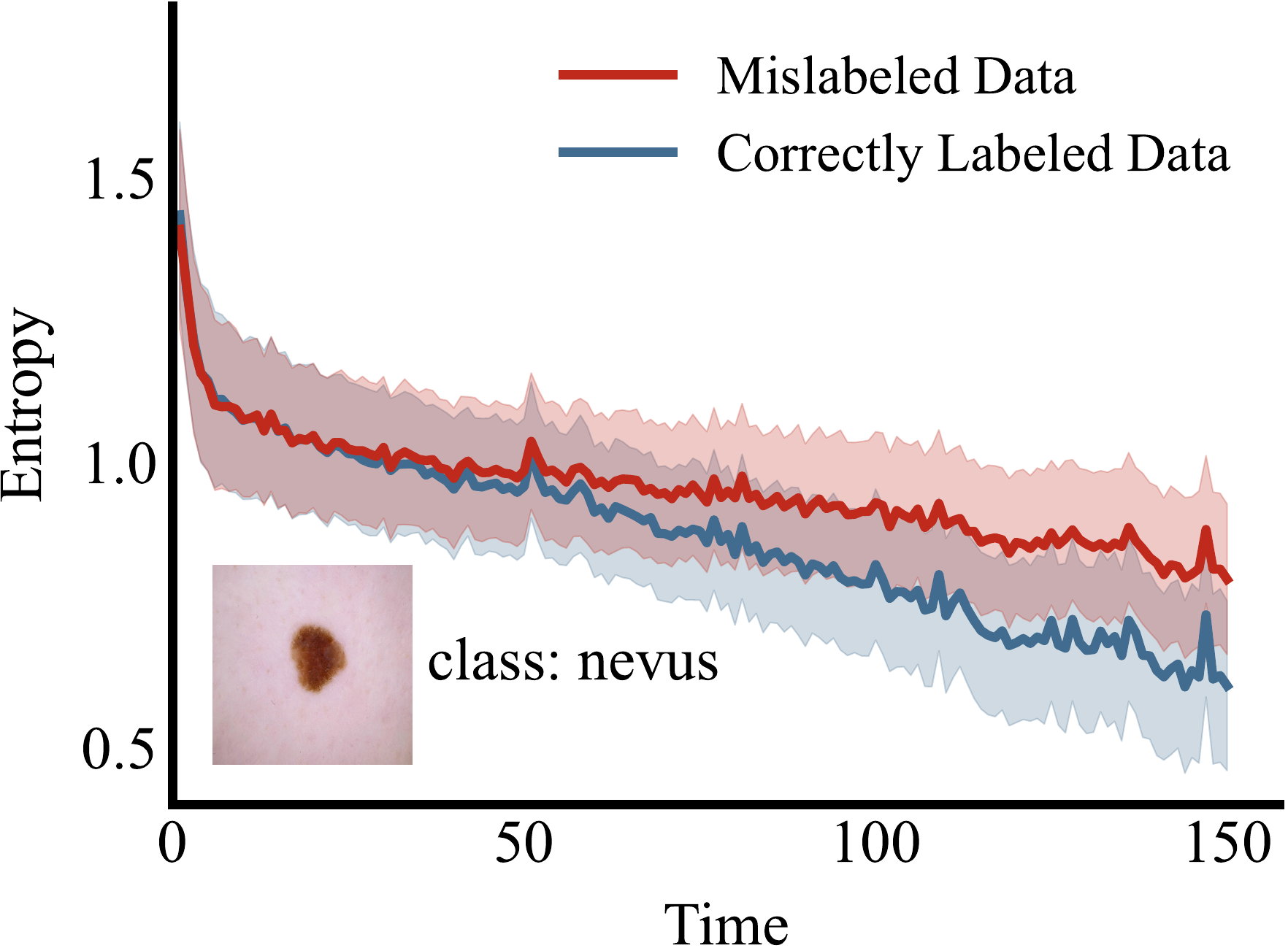}
\quad
\includegraphics[width=0.48\textwidth]{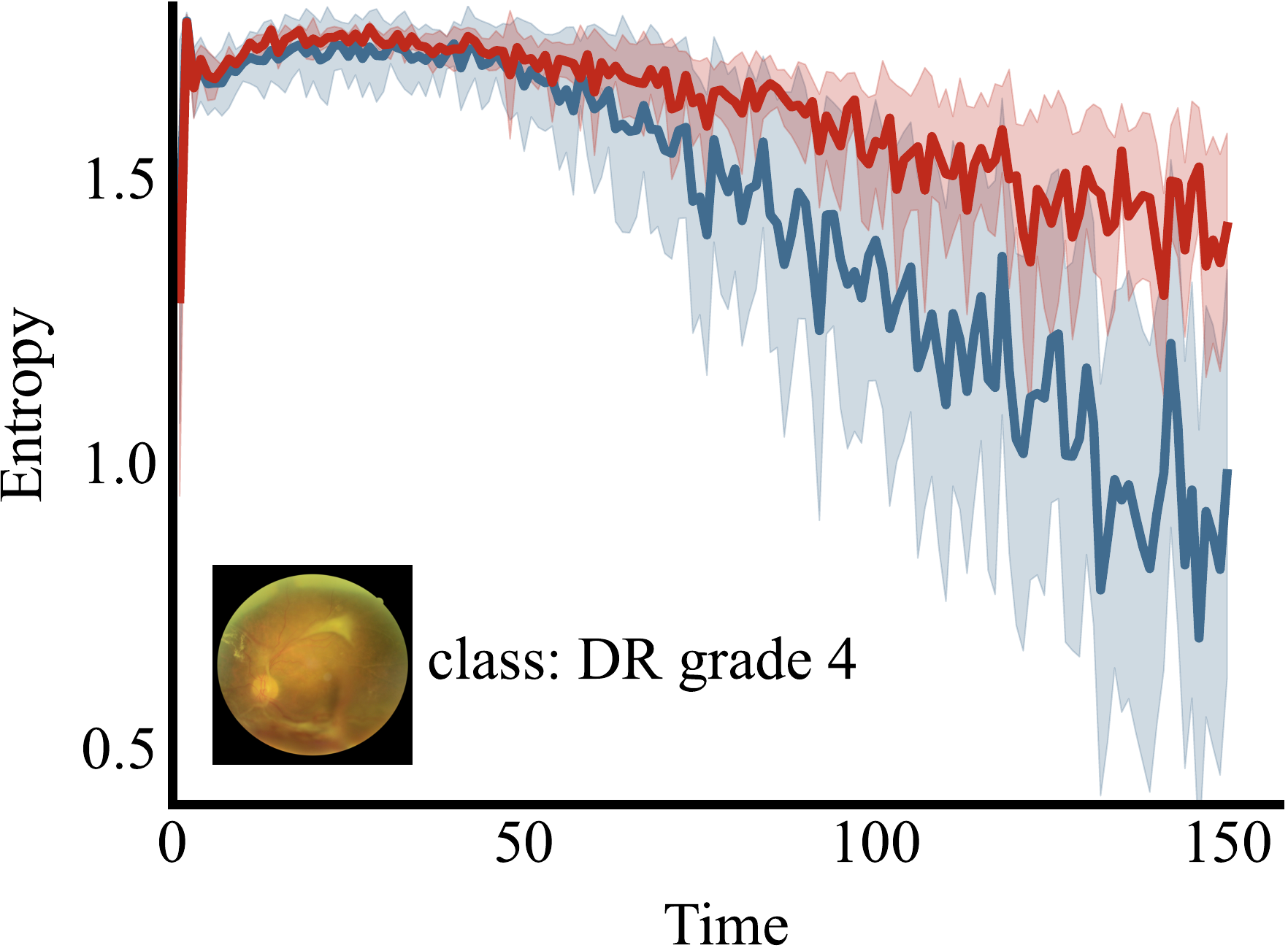}
\caption{Training dynamics of prediction entropy for correctly labeled versus mislabeled samples. Left: Nevus images from the ISIC dataset, comparing correctly labeled samples (ground truth: nevus; given label: nevus) with mislabeled samples (ground truth: nevus; given label: other skin lesion categories). Right: Grade 4 diabetic retinopathy (DR) images from the DeepDRiD dataset. In both cases, correctly labeled samples exhibit steadily decreasing entropy throughout training, while mislabeled samples maintain persistently high entropy, demonstrating the potential of entropy as an indicator for noisy label detection.}
\label{fig1}
\end{figure*}

Our work makes three key contributions:
\begin{itemize}
\item Through large-scale analysis of training dynamics, we identify two informative signals for distinguishing mislabeled from correctly labeled samples: entropy evolution and label-prediction consistency over time.
\item Building on these insights, we introduce \emph{signed entropy}, a novel extension of Shannon entropy that incorporates label consistency, and propose the SEI statistic that captures the cumulative training behavior of different sample types.
\item we demonstrate that our simple yet effective method achieves state-of-the-art performance on three medical imaging datasets spanning different modalities and pathologies, without requiring architectural modifications or complex training procedures.
\end{itemize}

\section{Methodology}
\subsection{Problem Formulation}
We consider the task of $K$-class image classification, where the objective is to train a model that predicts a label $y$ for an input image $\bm{x}$. The training dataset $\mathcal{D}_{\mathtt{train}}=\{(\bm{x}_i, y_i)\}$ contains two types of samples. A \emph{mislabeled} sample is one whose assigned label does not match its underlying semantic content (e.g., $\bm{x}$ is an image of a melanoma but the label $y$ is ``nevus’’). A \emph{correctly labeled} sample is one where the assigned label aligns with the true category. Our goal is to identify mislabeled data in $\mathcal{D}_{\mathtt{train}}$ by exploiting differences in their training dynamics compared to correctly labeled data.

\subsection{Preliminary: CLIP for Image Classification}
CLIP~\citep{clip} demonstrates strong zero-shot performance by learning joint visual-textual representations from large-scale image-text pairs using a contrastive objective. Classification is performed by measuring the similarity between image features and text embeddings derived from prompts such as ``a photo of $\texttt{[CLS]}$’’, where $\texttt{[CLS]}$ denotes a class name. The prediction probability is computed as
\begin{equation}
\label{eq:clip}
p(y=k|\bm{x})=\frac{\exp(\mathrm{sim}(\bm{v}, \bm{t}_k)/\tau)}{\sum_{j=1}^K\exp(\mathrm{sim}(\bm{v}, \bm{t}_j)/\tau)} \,,
\end{equation}
where $\mathrm{sim}(\bm{v}, \bm{t}_k)$ denotes the cosine similarity between image feature $\bm{v}$ and text embedding $\bm{t}_k$, and $\tau$ is a temperature parameter. In this framework, each image is compared against all $K$ class-specific text embeddings, allowing us to incorporate dataset labels directly while preserving the representational benefits of visual-textual alignment.

\subsection{Findings}
\label{sec:findings}
To understand training dynamics of mislabeled samples, we first examine their prediction \emph{entropy}. Figure~\ref{fig1} shows entropy trajectories for nevus images in the ISIC dataset and grade 4 diabetic retinopathy images in the DeepDRiD dataset~\citep{deepdrid}, comparing correctly labeled samples with mislabeled ones. Early in training, the two groups are entangled, exhibiting similar levels of uncertainty. As training progresses, however, a clear separation emerges: entropy for correctly labeled data decreases steadily, while mislabeled samples maintain elevated entropy. This suggests that entropy provides a promising signal for distinguishing mislabeled samples from clean ones. We observe similar trends across other classes and datasets (see Appendix~\ref{sec:more examples ei}).

\begin{figure}[t]
\centering
\includegraphics[width=0.40\textwidth]{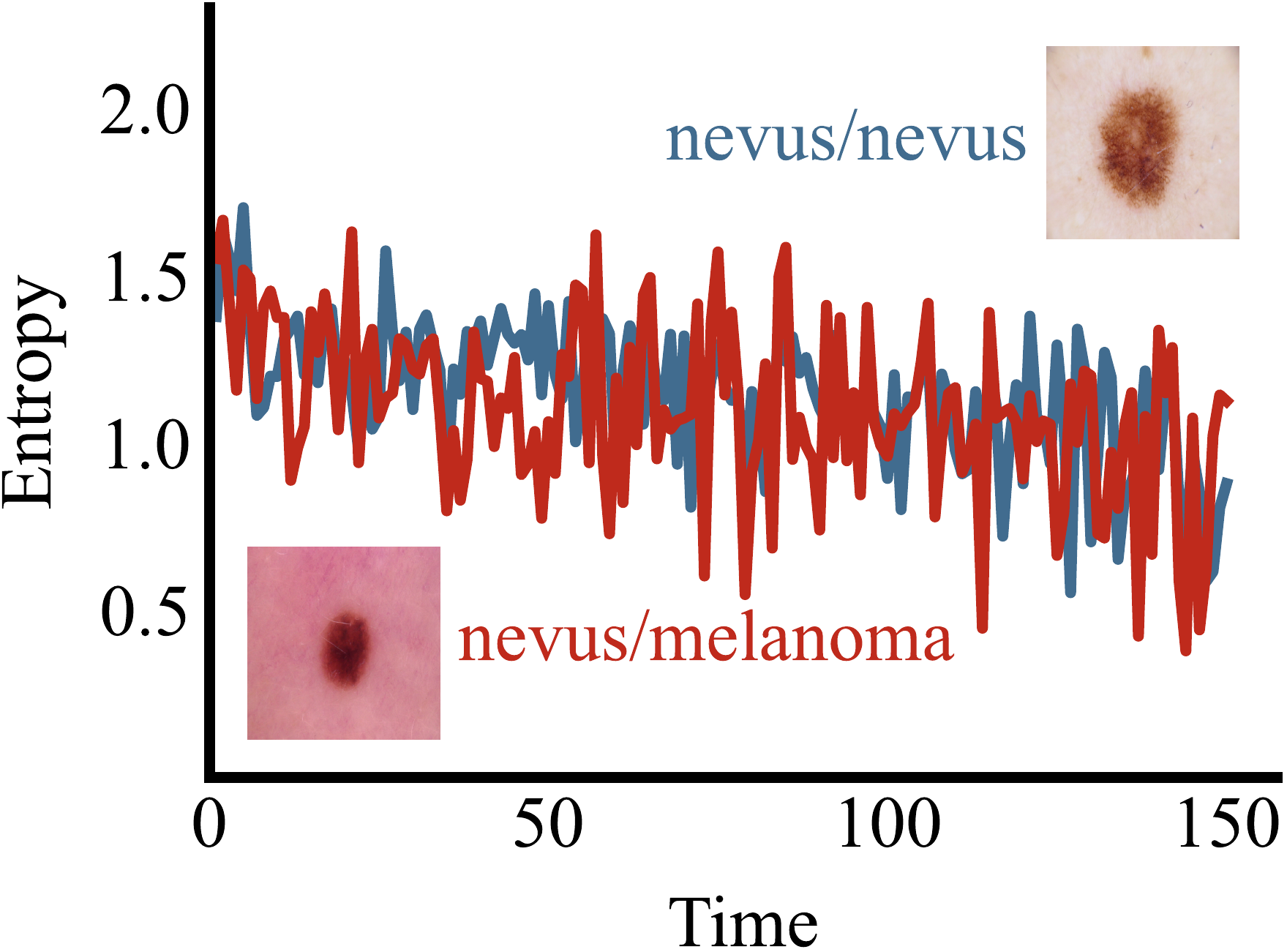}
\caption{Entropy trajectories for a mislabeled sample (ground truth: nevus; given label: melanoma) and a hard clean sample (ground truth: nevus; given label: nevus). Despite differing label correctness, their entropy curves are nearly indistinguishable. This illustrates that entropy alone cannot reliably distinguish mislabeled data from challenging but clean examples.}
\label{fig:learnable_prompts}
\end{figure}

Nevertheless, entropy alone proves insufficient for reliable detection. Figure~\ref{fig:learnable_prompts} illustrates this limitation by comparing a mislabeled sample with a hard-to-learn but correctly labeled sample. Despite their fundamentally different ground-truth status, their entropy curves are nearly indistinguishable, making it difficult to separate the two based solely on this statistic.
\par
To address this limitation, we investigate an additional aspect of training dynamics: the \emph{alignment between given labels and model predictions over time}. Figure~\ref{fig3} presents alignment statistics for three distinct sample categories: easy clean, hard clean, and mislabeled data, where circle size and opacity reflect frequency. We observe that: (1) most easy clean samples exhibit consistent alignment, with predictions typically matching their labels throughout training; (2) hard clean samples generally demonstrate mixed dynamics; (3) mislabeled samples are predominantly characterized by persistent misalignment with their given labels. These distinctive alignment patterns provide complementary information beyond entropy, enabling better discrimination between mislabeled samples and challenging yet correctly labeled ones. 

\subsection{Identifying Mislabeled Data}
\subsubsection{Signed Entropy}
The observations in Section~\ref{sec:findings} suggest that two complementary cues---entropy dynamics and label-prediction consistency---can be jointly exploited for identifying mislabeled data. We therefore introduce \emph{signed entropy}, a quantity that extends Shannon entropy with a label consistency term.
\par
Formally, for any $(\bm{x}, y)\in \mathcal{D}_{\mathtt{train}}$, let $\bm{p}(\bm{x})=(p_1(\bm{x}),\dots,p_K(\bm{x}))$ denote the model’s posterior distribution over $K$ classes (cf. Eq.~\ref{eq:clip}). We define signed entropy as
\begin{equation}
\label{eq:signed_entropy}
\mathcal{H}(\bm{p}(\bm{x}), y)
= (-1)^{\mathds{1}[y=\argmax_k p_k(\bm{x})]}
\sum_{k=1}^{K} p_k(\bm{x}) \log p_k(\bm{x}) \,,
\end{equation}
where the exponent introduces a sign depending on whether the assigned label $y$ matches the model's prediction. In other words, $\mathcal{H}$ reduces to Shannon entropy when $y$ agrees with the prediction, but flips its sign under misalignment.
\par
\paragraph{Discussion}
Shannon's entropy is always nonnegative and only reflects distributional uncertainty, making it blind to label correctness: both hard clean samples and mislabeled ones may exhibit high entropy. In contrast, the signed entropy in Eq.~\ref{eq:signed_entropy} attaches a directionality:
\begin{itemize}
\item \textbf{Positive signed entropy} indicates both uncertainty and label consistency, as typically seen in clean samples (easy or hard).  
\item \textbf{Negative signed entropy} emerges when the model contradicts the given label, flagging potential annotation errors.  
\end{itemize}

\begin{figure*}[t]
\centering
\labeledfig{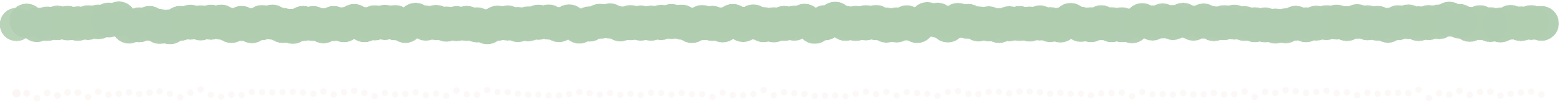}{a} \\[1ex]
\labeledfig{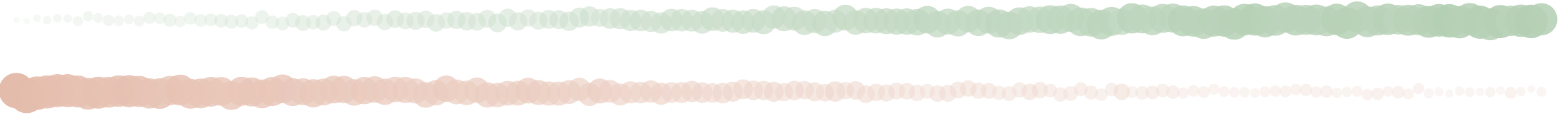}{b} \\[1ex]
\labeledfig{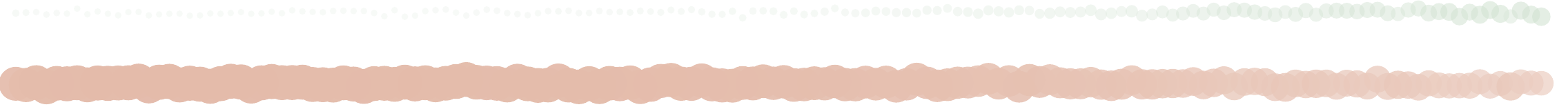}{c}
\caption{Training dynamics analysis through label-prediction alignment patterns over time. Bubble timeline charts illustrate the evolution of three sample categories during training: (a) easy clean samples, (b) hard clean samples, and (c) mislabeled samples. At each time step, \textcolor[RGB]{176,205,176}{\large$\bullet$} green circles indicate alignment between predicted and given labels, while \textcolor[RGB]{228,187,170}{\large$\bullet$} red circles denote misalignment. Circle size and opacity encode frequency.}
\label{fig3}
\end{figure*}

\subsubsection{Signed Entropy Integral}
\label{sec:sei}

While signed entropy at a single epoch can provide useful information, training dynamics often fluctuate, making individual snapshots unreliable for detecting mislabeled samples. Moreover, it is unclear a priori which epochs contain the most discriminative signals. Aggregating \emph{cumulative behavior} across training offers a more stable criterion, as it naturally smooths out such fluctuations~\cite{o2unet,aum,jiang2022delving}. To this end, we introduce SEI, which accumulates signed entropy values over the entire training trajectory.
\par
Let $\bm{p}^{(t)}(\bm{x})$ denote the posterior distribution at epoch $t$. SEI is defined as
\begin{equation} 
\label{eq:sei} 
\mathrm{SEI}(\bm{x}, y)=\sum_{t=1}^{T} \mathcal{H}(\bm{p}^{(t)}(\bm{x}), y) \,, 
\end{equation}
where $T$ is the total number of training epochs.

\paragraph{Analysis}
Figure~\ref{fig4} illustrates how SEI separates different sample types. For an easy clean sample, the signed entropy remains mostly positive, as predictions stay consistent with its assigned label, resulting in a large positive integral. A hard clean sample may initially exhibit misalignment and accumulate negative contributions, but as training progresses, the signed entropy turns positive; the resulting positive and negative areas partly cancel, leading to a smaller integral. By contrast, a mislabeled sample typically shows persistent disagreement, so its signed entropy curve stays negative throughout training, yielding a strongly negative integral.
\par
In this way, SEI produces a single scalar that naturally ranks samples: mislabeled ones cluster at the negative end, while correctly labeled ones occupy positive values. This simple measure proves effective across datasets and modalities for isolating annotation errors from both easy and hard clean samples (see Appendix \ref{sec:more examples sei}).

\begin{figure*}[t]
\centering
\includegraphics[width=0.31\textwidth]{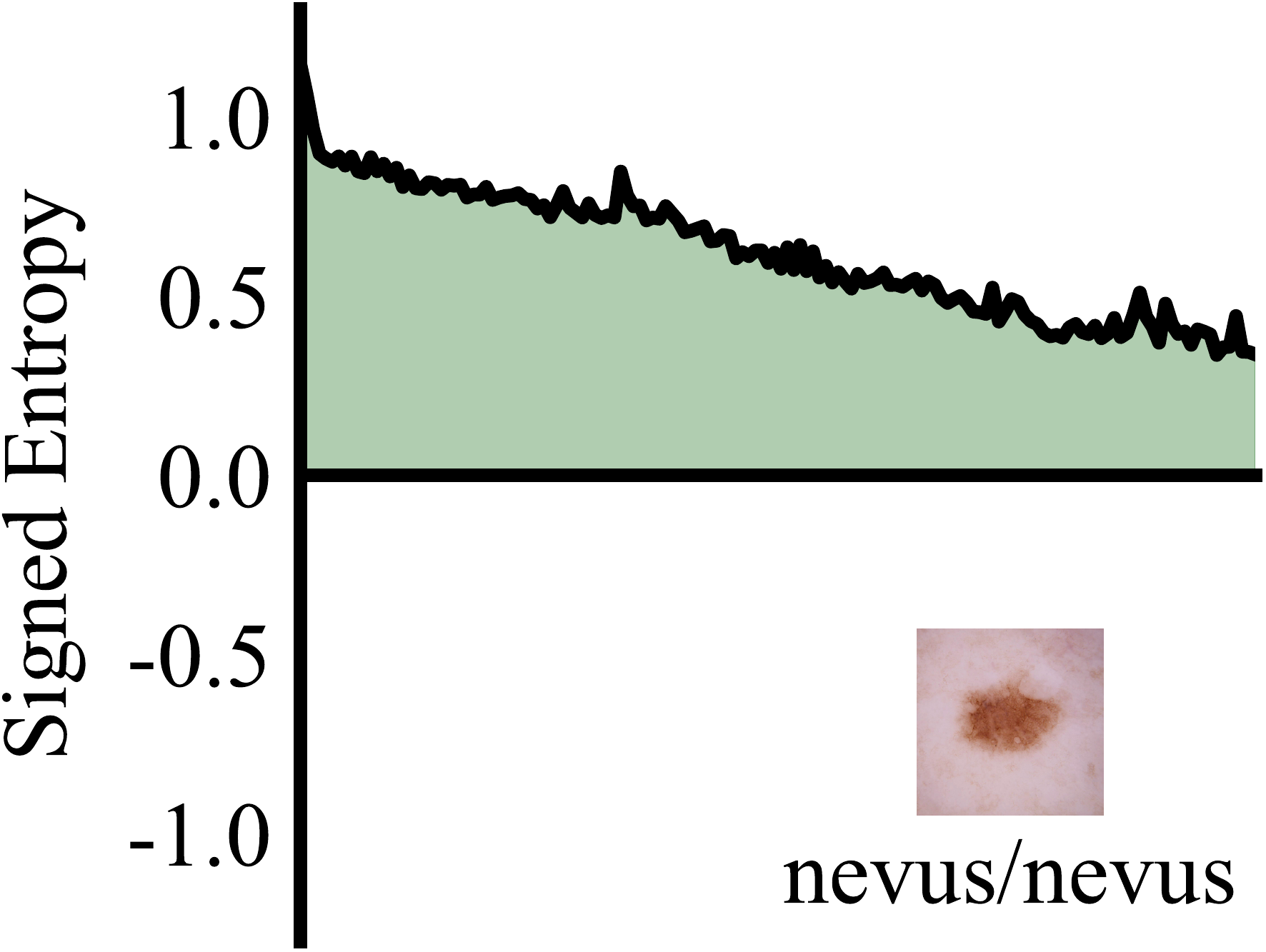}
\quad
\includegraphics[width=0.31\textwidth]{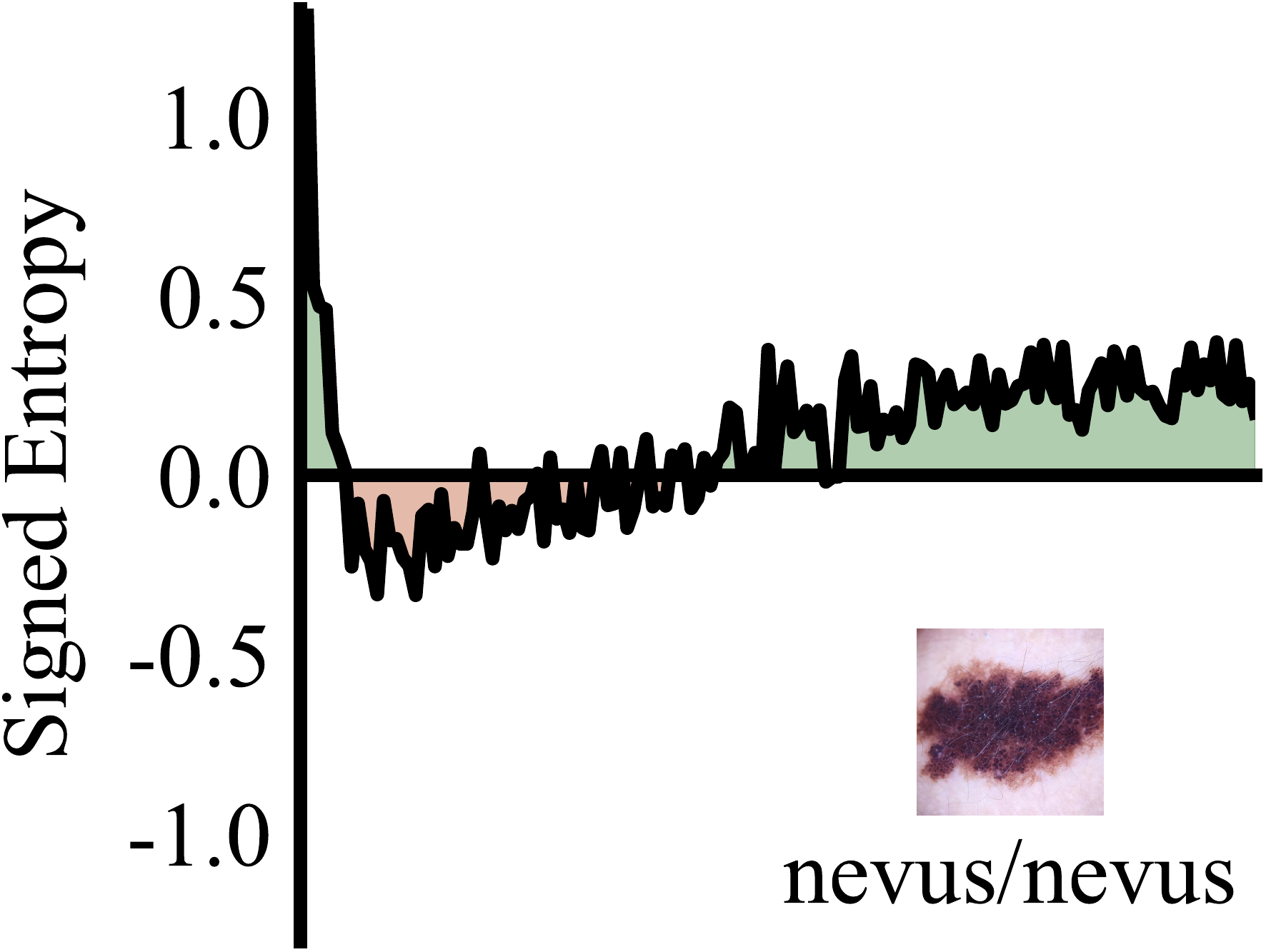}
\quad
\includegraphics[width=0.31\textwidth]{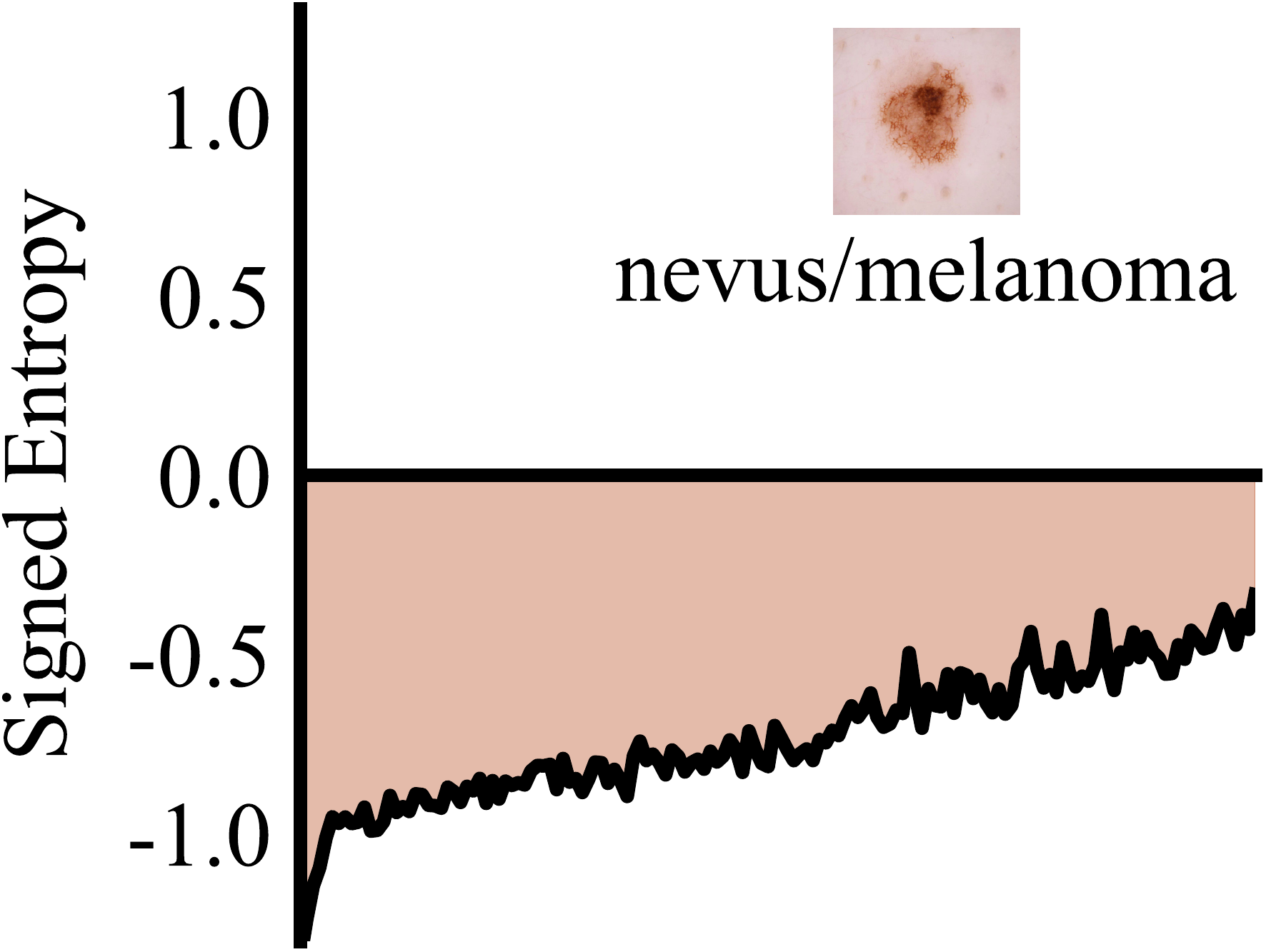}
\caption{Illustration of SEI. The plots depict signed entropy curves over training epochs for easy clean (left), hard clean (middle), and mislabeled (right) samples. Each curve is averaged over 200 samples, and the signed area under each curve corresponds to the SEI. Correctly labeled samples yield larger SEIs than mislabeled ones.}
\label{fig4}
\end{figure*}

\subsection{Thresholding}
We require a threshold to distinguish between clean and mislabeled data. A fixed threshold is impractical since the distribution of SEI values varies across datasets and training configurations. Inspired by ~\citet{yang2023learning,aum,kim2025splitnet}, we use an adaptive thresholding strategy that leverages artificially mislabeled samples as reference points. 
\par
Concretely, given $N$ training samples across $K$ classes, we randomly select $N/(K+1)$ instances and reassign their labels to an auxiliary class $K+1$ that does not exist in the dataset. In the CLIP setting, this corresponds to using a synthetic prompt such as ``a dermoscopic image showing other lesions’’. Since the auxiliary class is designed to be semantically meaningless, these relabeled samples serve as natural surrogates for mislabeled data. Moreover, choosing $N/(K+1)$ instances ensures that the auxiliary class appears with a frequency comparable to the original classes, preventing imbalance in calibration.
\par
We then compute SEIs for all auxiliary-class samples and use their average value as the decision threshold. Any sample whose SEI falls below this threshold is flagged as mislabeled. This simple scheme yields an adaptive, data-driven cutoff without requiring manual tuning. Nevertheless, we acknowledge that this strategy remains partly heuristic.

\section{Related Work}
Addressing noisy labels in datasets has spawned two primary research directions: (1) Explicit mislabeled data detection aims to identify and remove incorrectly labeled instances to improve data quality, while (2) Noise robust learning develops algorithms that maintain performance despite label noise.

\paragraph{Mislabeled Data Detection}
Most methods exploit training signals as proxies for label correctness. Loss-based approaches leverage the intuition that higher losses often indicate incorrect labels. O2U-Net \citep{o2unet} alternates between overfitting and underfitting phases, identifying mislabeled samples through consistently higher normalized losses. CORES \citep{cores} progressively filters incorrectly labeled instances using training loss in a proposed sample sieve framework. Beyond loss, various proxy measures have been developed, including gradient-based metrics \citep{gradient2} and prediction-based statistics \citep{cl, aum, pre3}. For instance, \citet{cl} filter low-confidence samples using class-conditional thresholds on predicted probabilities.

Recent work has explored training-free approaches. SIMIFEAT \citep{SIMIFEAT} detects noisy labels through k-nearest neighbors in the feature space of a pre-trained model, while DEFT \citep{deft} leverages CLIP's image-text alignment to learn class-specific prompts for mislabel detection. LEMoN~\citep{lemon} exploits multimodal neighborhoods of image-caption pairs in the latent space of CLIP to automatically identify label errors. However, the effectiveness of these methods depends heavily on the generalization capacity of pre-trained models, which may be limited in specialized domains like medical imaging. ReCoV \citep{recov} identifies mislabeled medical data through cross-validation, though this increases computational overhead for large datasets. Some approaches \citep{noisegpt} assume access to clean subsets, while we consider the more restrictive but realistic setting where no training data can be trusted.

\paragraph{Noise Robust Learning}
Rather than targeting specific noisy instances, robust learning methods design modules enabling effective training on noisy datasets. This includes novel architectures \citep{lnlmodel1,lnlmodel2}, loss functions \citep{lnlloss1, lnlloss2}, regularization techniques \citep{lnlreg1, lnlreg2}, and training strategies \citep{lnltraining1, lnltraining2, lnltraining3}. Some works integrate noisy label detection into training pipelines through loss re-weighting \citep{lnlreweighting1, lnlreweighting2} or re-annotation \citep{lnlreanno1, m-correction, lnlreanno2}. Our work focuses on identifying reliably labeled data rather than recycling mislabeled samples. For simplicity, we discard instances flagged as mislabeled; nevertheless, our method is compatible with approaches that attempt to reuse them (cf. Section \ref{sec:4.3.4}).

\section{Experiments}
We evaluate our approach through two complementary assessments: first, we measure mislabeled data detection performance on synthetically corrupted datasets to directly assess identification capability; second, we evaluate downstream classification performance.

\subsection{Experimental Setup}
\subsubsection{Datasets}
We conduct experiments on three medical imaging datasets spanning different modalities and diagnostic tasks.

\textbf{ISIC 2018.} We use the ISIC 2018 Challenge\footnote{\url{https://challenge.isic-archive.com/data/\#2018}} Task 3 dataset for skin lesion diagnosis, containing dermoscopic images across seven categories: melanoma, nevus, basal cell carcinoma, actinic keratosis/intraepithelial carcinoma, benign keratosis, dermatofibroma, and vascular lesion. The dataset comprises 10,015 training images and 1,512 test images.

\textbf{DeepDRiD.} This fundus photography dataset~\citep{deepdrid} targets diabetic retinopathy severity grading using a 5-point scale (0-4) following the International Clinical Diabetic Retinopathy standard. Each patient contributes dual-view images (macula-centered and optic disc-centered) for both eyes. We use 1,200 training images and 400 test images from the official split.

\textbf{PANDA.} This dataset\footnote{\url{https://kaggle.com/competitions/prostate-cancer-grade-assessment}} contains 10,616 whole slide images (WSIs) of digitized H\&E-stained prostate biopsies from Radboud University Medical Center and Karolinska Institute. We focus on the Radboud subset, which provides pixel-level annotations distinguishing background, stroma, benign epithelium, and cancerous epithelium (subdivided into Gleason patterns 3, 4, and 5). Since Gleason grading depends solely on epithelial architecture, we consider four classes: benign epithelium, Gleason 3, Gleason 4, and Gleason 5. Each WSI is tiled into $224\times224$ patches, with background-dominated patches discarded. For remaining patches, we compute area proportions of epithelial categories from pixel-level masks and apply a dominant-label rule: patches are assigned the highest-grade category present (priority: Gleason 5 $>$ Gleason 4 $>$ Gleason 3 $>$ benign epithelium) if it covers more than 80\% of non-background pixels; otherwise, patches are discarded. This yields 4,102 benign epithelium patches and 6,914, 6,413, and 6,956 patches for Gleason grades 3, 4, and 5, respectively. We use a 4:1 train/test split.

\textbf{CheXpert}. To evaluate SEI under real-world clinical label noise, we use a subset of the CheXpert test set~\citep{CheXpert}. The released report-derived labels are treated as naturally noisy annotations, while the 5-radiologist majority-vote labels serve as clean references. Before evaluation, we exclude samples with uncertain or missing labels from annotation source.

\textbf{Besides}, to further evaluate SEI on natural images with real-world noise, we conduct experiments on CIFAR-100N~\citep{cifar-100n}, a natural image dataset with human annotation noise. Although not a medical dataset, CIFAR-100N contains realistic instance-level label errors that serve our evaluation purpose. We adopt its official training and test splits.

\subsubsection{Synthetic Noise Generation}
To simulate real-world annotation errors, we synthesize mislabeled samples at five noise rates $\eta\in\{0.1,0.2,0.3,0.4,0.5\}$ using two corruption strategies:

\textbf{Symmetric Noise.}
In a $K$-class setting, each sample with ground-truth label $a\in\{1,\dots,K\}$ retains its correct label with probability $1-\eta$. With probability $\eta$, it is corrupted to a different class $b\neq a$, chosen uniformly from the $K-1$ alternatives:
\begin{equation}
p_{a\to b}=\frac{\eta}{K-1}, \quad b\neq a \,.
\end{equation}

\textbf{Confusion-Calibrated Noise.}
To simulate realistic errors, we first train a reference ResNet-50 classifier and compute an empirical confusion matrix $\bm{T}\in\mathbb{R}^{K\times K}$ from its predictions. For a sample with ground-truth label $a$, the label remains unchanged with probability $1-\eta$. With probability $\eta$, it is corrupted to class $b$ ($b\neq a$) according to:
\begin{equation}
p_{a\to b}=\eta\,\frac{\exp(T_{ab})}{\sum_{k\neq a}\exp(T_{ak})},\quad b\neq a \,.
\end{equation}
This preserves the target corruption rate $\eta$ while aligning errors with observed class confusions.

\subsubsection{Evaluation Metrics}
For mislabeled data detection, we use F1 score as our primary metric, following prior works \citep{SIMIFEAT,cvpr2024f1}. F1 score balances precision and recall, providing a more reliable assessment. For downstream image classification, we report accuracy, F1 score, and AUC for comprehensive evaluation.

\subsubsection{Implementation Details}
All experiments use PyTorch and run on a single NVIDIA RTX 4090 GPU. For mislabeled data identification, we employ CLIP with a Transformer text encoder and ResNet-50 vision encoder. Training uses SGD with momentum 0.9, weight decay $1\times10^{-4}$, batch size 128, and initial learning rate $1\times10^{-3}$ for 150 epochs. The learning rate decays by 0.1 at epochs 75 and 115. Images are resized to $224\times224$ and normalized using ImageNet statistics \citep{imagenet}. For downstream classification tasks, we use the official dataset splits.

\begin{table*}[!t]
\Large
\centering
\caption{Comparison with state-of-the-art mislabeled data detection methods under symmetric noise. Results are reported in F1 score (\%) across five noise rates ($\eta\in\{0.1,0.2,0.3,0.4,0.5\}$) on three medical datasets (ISIC, DeepDRiD, and PANDA). The best results for each dataset and noise rate are highlighted in \textbf{bold}.}
\resizebox{\textwidth}{!}{
\begin{tabular}{l|ccccc|ccccc|ccccc}
\toprule
 & \multicolumn{5}{c|}{\textbf{ISIC}} & \multicolumn{5}{c|}{\textbf{DeepDRiD}} & \multicolumn{5}{c}{\textbf{PANDA}} \\
 & 0.1 & 0.2 & 0.3 & 0.4 & 0.5 & 0.1 & 0.2 & 0.3 & 0.4 & 0.5 & 0.1 & 0.2 & 0.3 & 0.4 & 0.5\\
\midrule
INCV & 38.60 & 42.72 & 47.21 & 55.93 & 61.67 & 33.05 & 44.56 & 49.29 & 57.87 & 63.32 & 53.30& 57.69 & 61.23 & 68.45 & 67.11 \\
BMM & 30.49 & 39.59 & 40.65 & 55.89 & 55.73  & 30.23 & 39.44 & 46.11 & 50.66 & 56.21 & 54.37 & 62.88 & 66.19 & 71.09 & 72.43 \\
GMM & 36.31 & 47.49 & 49.67 & 64.79 & 67.10 & 35.89 & 45.39 & 52.28 & 58.87 & 66.27 & 59.60 & 62.79 & 68.24 & 71.11 & 70.76 \\
AUM & 48.65  & 62.97  & 72.60  & 77.70  & 81.67  & 
39.66  & 54.96  & 60.92  & 69.20  & 75.75 & 65.23 & 72.39 & 75.95 & 75.30 & 77.18 \\
CORES & 36.20  & 56.90  & 67.84  & 76.72  & 82.67  & 26.21  & 42.68  & 56.38  & 69.22  & 68.26  & 60.73 & 63.61 & 66.28 & 73.43 & 70.51 \\
CL & 34.15 & 39.98  & 43.53  & 44.55  & 43.56  & 33.99  & 47.12  & 53.84 & 51.28 & 62.17 & 54.81 & 59.87 & 62.40 & 69.09 & 72.30 \\
SIMIFEAT & 32.01  & 39.33  & 43.95  & 44.82  & 41.62  & 36.04  & 45.72  & 53.67  & 55.06  & 61.53  & 55.82 & 61.15 & 68.12 & 70.56 & 69.76 \\
DEFT & 25.67 & 38.06 & 44.01 & 52.78 & 55.40 & 29.84 & 42.36 & 45.14 & 52.08 & 62.95 & 53.69 & 59.87 & 62.25 & 67.99 & 69.59 \\
ReCoV & 42.15 & 46.78 & 52.90 & 61.12 & 64.31 & 37.09 & 50.94 & 54.75 & 57.69 & 63.30 & 69.24 & 68.84 & 73.06 & 72.09 & 72.49 \\
LEMoN & 38.41 & 55.86 & 61.97 & 73.70 & 75.35 & 31.82 & 41.64 & 48.20 & 64.35 & 69.55 & 59.60 & 62.54 & 73.95 & 74.30 & 75.58 \\
Ours & \textbf{50.44} & \textbf{65.64} & \textbf{74.80} & \textbf{80.07} & \textbf{83.93} & \textbf{45.20} & \textbf{56.53} & \textbf{63.43} & \textbf{71.26} & \textbf{78.19} & \textbf{72.57} & \textbf{77.22} & \textbf{81.46} & \textbf{83.11} & \textbf{81.18} \\
\bottomrule
\end{tabular}
}

\label{tab:results1}
\end{table*}

\begin{table*}[!t]
\Large
\centering
\caption{Comparison with state-of-the-art mislabeled data detection methods under confusion-calibrated noise.}
\resizebox{\textwidth}{!}{
\begin{tabular}{l|ccccc|ccccc|ccccc}
\toprule
 & \multicolumn{5}{c|}{\textbf{ISIC}} & \multicolumn{5}{c|}{\textbf{DeepDRiD}} & \multicolumn{5}{c}{\textbf{PANDA}} \\
 & 0.1 & 0.2 & 0.3 & 0.4 & 0.5 & 0.1 & 0.2 & 0.3 & 0.4 & 0.5 & 0.1 & 0.2 & 0.3 & 0.4 & 0.5\\
\midrule
INCV & 32.99 & 40.44 & 44.03 & 51.62 & 56.64 & 33.15 & 43.50 & 46.54 & 48.60 & 50.68 & 41.19 & 50.44 & 59.96 & 63.54 & 66.90 \\
BMM & 37.01  & 26.92  & 36.10  & 57.36  & 52.22  & 31.50  & 36.63 & 45.24 & 50.18 & 52.09  & 46.59  & 63.23  & 55.36 & 60.51 & 65.10 \\
GMM & 31.17 & 36.38 & 42.14 & 48.94 & 59.09 & 37.82 & 47.33 & 54.34 & 56.46 & 57.72 & 54.15 & 58.11 & 64.58 & 67.93 & 69.24 \\
AUM & 42.96 & 57.24 & 65.96 & 64.54 & 71.42 & 41.35  & 48.21  & 56.47  & 60.55  & 61.84  & 61.30 & 68.29 & 72.31 & 76.08 & 76.52  \\
CORES & 35.75 & 55.64  & 65.56  & 62.49  & 42.87  & 26.59  & 39.93  & 44.01  & 36.17  & 47.67  & 45.66 & 54.72 & 60.25 & 63.09 & 65.64 \\
CL & 38.14  & 43.44  & 48.66  & 45.46  & 43.28  & 34.06 & 37.60  & 43.06  & 42.78  & 51.58 & 51.87 & 60.77 & 69.19 & 71.53 & 72.35 \\
SIMIFEAT & 38.54  & 48.21  & 49.66  & 50.65  & 46.17  & 32.13  & 40.98  & 41.57  & 42.38  & 48.57  & 59.97 & 61.08 & 68.25 & 69.10 & 71.94 \\
DEFT & 15.33  & 24.67  & 32.72  & 37.03  & 34.41  & 18.46  & 28.74  & 39.68  & 46.53  & 50.34 & 40.66 & 54.72 & 65.48 & 67.83 & 69.93 \\
ReCoV & 40.90 & 44.21  & 61.16  & 64.66  & 66.61 & 42.63 & 48.49 & 50.93 & 52.67 & 61.79 & 59.50 & 63.15 & 64.58 & 65.91 & 69.87 \\
LEMoN & 30.59 & 42.50 & 56.70 & 61.91 & 67.07  & 34.06 & 42.62 & 55.80 & 63.86 & 66.82 & 56.40 & 66.66 & 70.22 & 74.80 & 75.13 \\
Ours & \textbf{47.75} & \textbf{59.35} & \textbf{67.25} & \textbf{74.98} & \textbf{79.91} & \textbf{46.59} & \textbf{52.65} & \textbf{62.84} & \textbf{68.35} & \textbf{73.04} & \textbf{73.17} & \textbf{78.21} & \textbf{81.86} & \textbf{81.96} & \textbf{81.85} \\
\bottomrule
\end{tabular}
}

\label{tab:results2}
\end{table*}

\subsection{Comparison with State-of-the-Art Methods}
We evaluate our approach against existing mislabeled data indentification methods, including INCV~\citep{incv}, BMM~\citep{m-correction}, GMM~\citep{divide}, AUM~\citep{aum}, CORES~\citep{cores}, CL~\citep{cl}, SIMIFEAT~\citep{SIMIFEAT}, DeFT \citep{deft}, ReCoV \citep{recov}, and LEMoN \citep{lemon}. As summarized in Tabels \ref{tab:results1} and \ref{tab:results2}, our approach consistently outperforms all competing methods across all datasets and noise levels.
\par
Under the more challenging confusion-calibrated noise setting, our method delivers significant improvements in mislabeled data detection. On the ISIC dataset, it outperforms the second-best approach by 4.79\%, 2.11\%, 1.29\%, 10.32\%, and 8.49\% across different noise rates. On DeepDRiD, the gains are 3.96\%, 4.16\%, 6.37\%, 4.49\%, and 6.22\%, while on PANDA, our method achieves improvements of 11.87\%, 9.92\%, 9.55\%, 5.88\%, and 5.33\%. Under the symmetric noise setting, we again observe consistent and notable performance boosts, further confirming the effectiveness of our approach. As shown in Table~\ref{tab:CheXpert}, SEI achieves the best performance, outperforming the second-best method by 3.25\%, further demonstrating its effectiveness in real-world clinical scenarios.

\subsection{Analysis and Discussion}
\subsubsection{Effectiveness of the Signed Term}
To assess the contribution of the signed component in Eq.~\ref{eq:signed_entropy}, we compare SEI against an unsigned counterpart using standard Shannon entropy. As reported in Table~\ref{tab:ab}, SEI consistently outperforms the unsigned variant, validating the effectiveness of signed entropy.
\par
We further visualize score distributions for clean and mislabeled samples under both formulations in Appendix~\ref{sec:signed}.

\begin{table*}[!t]
    \centering
    \caption{Mislabeled sample detection on CheXpert. We report F1 scores (\%) for identifying mislabeled samples. The best result is highlighted in \textbf{bold}.}
    \resizebox{0.95\textwidth}{!}{
    \label{tab:CheXpert}
    \begin{tabular}{l|ccccccccccc}
        \toprule
        \textbf{Method} & \textbf{INCV} & \textbf{BMM} & \textbf{GMM} & \textbf{AUM} & \textbf{CORES} & \textbf{CL} & \textbf{SIMIFEAT} & \textbf{DeFT} & \textbf{ReCoV} & \textbf{LEMoN} & \textbf{Ours} \\
        \midrule
        F1 (\%) & 65.23 & 60.74 & 70.31 & 80.34 & 76.91 & 68.93 & 70.32 & 71.73 & 74.67 & 79.42 & \textbf{83.59} \\
        \bottomrule
    \end{tabular}}
\end{table*}

\begin{table*}[t]
\Large
\centering
\caption{Ablation study on the effectiveness of the signed term and temporal integration. EI denotes the unsigned entropy integral (standard Shannon entropy), SE@T represents signed entropy at the final epoch, SE@T/2 denotes signed entropy at mid-training, and SEI is our full signed entropy integral method.}
\resizebox{0.98\textwidth}{!}{
\begin{tabular}{l|ccccc|ccccc|ccccc}
\toprule
 & \multicolumn{5}{c|}{\textbf{ISIC}} & \multicolumn{5}{c|}{\textbf{DeepDRiD}} & \multicolumn{5}{c}{\textbf{PANDA}} \\
 & 0.1 & 0.2 & 0.3 & 0.4 & 0.5 & 0.1 & 0.2 & 0.3 & 0.4 & 0.5 & 0.1 & 0.2 & 0.3 & 0.4 & 0.5\\
\midrule
EI & 35.31  & 45.05  & 50.46  & 60.77  & 63.65  & 42.91  & 47.86  & 53.85  & 59.28  & 61.14  & 51.76 & 61.50 & 64.26 & 67.26 & 65.86 \\
SE@T & 35.73  & 44.64  & 52.01  & 57.89  & 63.20  & 39.16  & 44.33  & 51.96  & 57.93  & 61.29  & 41.69 & 56.26 & 62.27 & 62.91 & 56.85 \\
SE@T/2 & 36.97  & 48.19  & 56.30  & 63.72  & 68.73  & 40.32  & 48.87  & 54.89  & 62.84  & 66.58  & 57.35 & 60.06 & 64.80 & 66.26 & 66.88 \\
SEI & {\color[HTML]{000000} \textbf{47.75}}  & {\color[HTML]{000000} \textbf{59.35}}  & {\color[HTML]{000000} \textbf{67.25}}  & {\color[HTML]{000000} \textbf{74.98}}  & {\color[HTML]{000000} \textbf{79.91}}  & {\color[HTML]{000000} \textbf{46.59}}  & {\color[HTML]{000000} \textbf{52.65}}  & {\color[HTML]{000000} \textbf{62.84}}  & {\color[HTML]{000000} \textbf{68.35}}  & {\color[HTML]{000000} \textbf{73.04}}  & {\color[HTML]{000000} \textbf{73.17}} & {\color[HTML]{000000} \textbf{78.21}} & {\color[HTML]{000000} \textbf{81.86}} & {\color[HTML]{000000} \textbf{81.96}} & {\color[HTML]{000000} \textbf{81.85}} \\
\bottomrule
\end{tabular}
}

\label{tab:ab}
\end{table*}

\subsubsection{Effectiveness of Temporal Integration}
To evaluate the integral component, we compare SEI with two single-epoch baselines: SE@T (signed entropy at the final epoch) and SE@T/2 (signed entropy at mid-training). Table~\ref{tab:ab} demonstrates that SEI achieves superior F1 scores, indicating that single snapshots provide unreliable signals while temporal integration yields robust detection.

The integral accumulates directional evidence over time, with each epoch contributing a signed cue: positive when predictions align with assigned labels, negative otherwise. Mislabeled samples accumulate predominantly negative values, while hard clean samples eventually offset early negative contributions through later positive ones. Single-epoch measurements suffer from training fluctuations, which temporal integration effectively smooths. This enables the integral to better capture long-term consistency patterns and reduces false detection of hard clean samples.

\par
Besides, we analyze the timing of temporal evidence in Appendix~\ref{sec:windows}, comparing windowed integrals: SEI@Early (epochs 1–75) and SEI@Late (epochs 76–150).

\begin{table*}[!t]
\Large
\centering
\caption{Architecture generalizability of SEI. We compare performance using standard classification networks (ResNet-50, ViT-B/16) and CLIP with different visual backbones (ResNet-50, ViT-B/16). Results demonstrate that SEI remains effective across diverse architectures.}
\resizebox{\textwidth}{!}{
\begin{tabular}{l|ccccc|ccccc|ccccc}
\toprule
 & \multicolumn{5}{c|}{\textbf{ISIC}} & \multicolumn{5}{c|}{\textbf{DeepDRiD}} & \multicolumn{5}{c}{\textbf{PANDA}} \\
 & 0.1 & 0.2 & 0.3 & 0.4 & 0.5 & 0.1 & 0.2 & 0.3 & 0.4 & 0.5 & 0.1 & 0.2 & 0.3 & 0.4 & 0.5\\
\midrule
RNet-50 & 45.68 & 57.58 & 65.07 & 73.04 & 76.33  & 43.99 & 51.34 & 61.17 & 66.89 & 71.75  & 71.21 & 75.64 & 79.21 & 79.28 & 78.25 \\
ViT-B/16 & 44.25 & 56.71 & 64.57 & 72.12 & 75.91  & 42.88 & 50.47 & 60.71 & 65.68 & 71.31  & 69.44 & 74.17 & 78.89 & 78.86 & 76.03 \\
CLIP$_{\texttt{RNet-50}}$ & {\color[HTML]{000000} \textbf{47.75}}  & {\color[HTML]{000000} \textbf{59.35}} & {\color[HTML]{000000} \textbf{67.25}} & {\color[HTML]{000000} \textbf{74.98}} & {\color[HTML]{000000} \textbf{79.91}} & {\color[HTML]{000000} \textbf{46.59}} & {\color[HTML]{000000} \textbf{52.65}} & {\color[HTML]{000000} \textbf{62.84}} & {\color[HTML]{000000} \textbf{68.35}} & {\color[HTML]{000000} \textbf{73.04}} & {\color[HTML]{000000} \textbf{73.17}} & {\color[HTML]{000000} \textbf{78.21}} & {\color[HTML]{000000} \textbf{81.86}} & {\color[HTML]{000000} \textbf{81.96}} & {\color[HTML]{000000} \textbf{81.85}} \\
CLIP$_{\texttt{ViT-B/16}}$ & 46.66 & 58.36 & 66.89 & 73.59 & 77.25  & 44.58 & 51.78 & 61.27 & 67.74 & 72.41  & 71.92 & 76.52 & 80.63 & 81.28 & 79.45 \\
\bottomrule
\end{tabular}
}

\label{tab:backbone}
\end{table*}

\begin{table*}[!t]
    \centering
    \caption{Mislabeled sample detection on CIFAR-100N. We report F1 scores (\%) for identifying mislabeled samples. The best result is highlighted in \textbf{bold}.}
    \resizebox{0.95\textwidth}{!}{
    \label{tab:cifar100n_detection}
    \begin{tabular}{l|ccccccccccc}
        \toprule
        \textbf{Method} & \textbf{INCV} & \textbf{BMM} & \textbf{GMM} & \textbf{AUM} & \textbf{CORES} & \textbf{CL} & \textbf{SIMIFEAT} & \textbf{DeFT} & \textbf{ReCoV} & \textbf{LEMoN} & \textbf{Ours} \\
        \midrule
        F1 (\%) & 59.77 & 63.55 & 63.83 & 74.54 & 38.52 & 67.64 & 79.21 & 75.03 & 67.59 & 78.40 & \textbf{81.41} \\
        \bottomrule
    \end{tabular}}
\end{table*}

\subsubsection{Architecture Generalizability}
To assess the broader applicability of our approach beyond CLIP, we evaluate SEI with standard classification networks including ResNet-50~\citep{resnet} and ViT~\citep{vit}. Table~\ref{tab:backbone} shows that while performance decreases, results remain competitive. This demonstrates that: (1) strong mislabeled data detection performance stems primarily from our proposed SEI rather than the CLIP architecture itself; (2) CLIP nevertheless provides advantages, likely due to its contrastive learning objective.

We also evaluate CLIP with various vision encoder backbone to assess generalization across both CNN and Transformer architectures, confirming consistent performance.

\begin{figure*}[!t]
\centering
\includegraphics[width=0.225\textwidth]{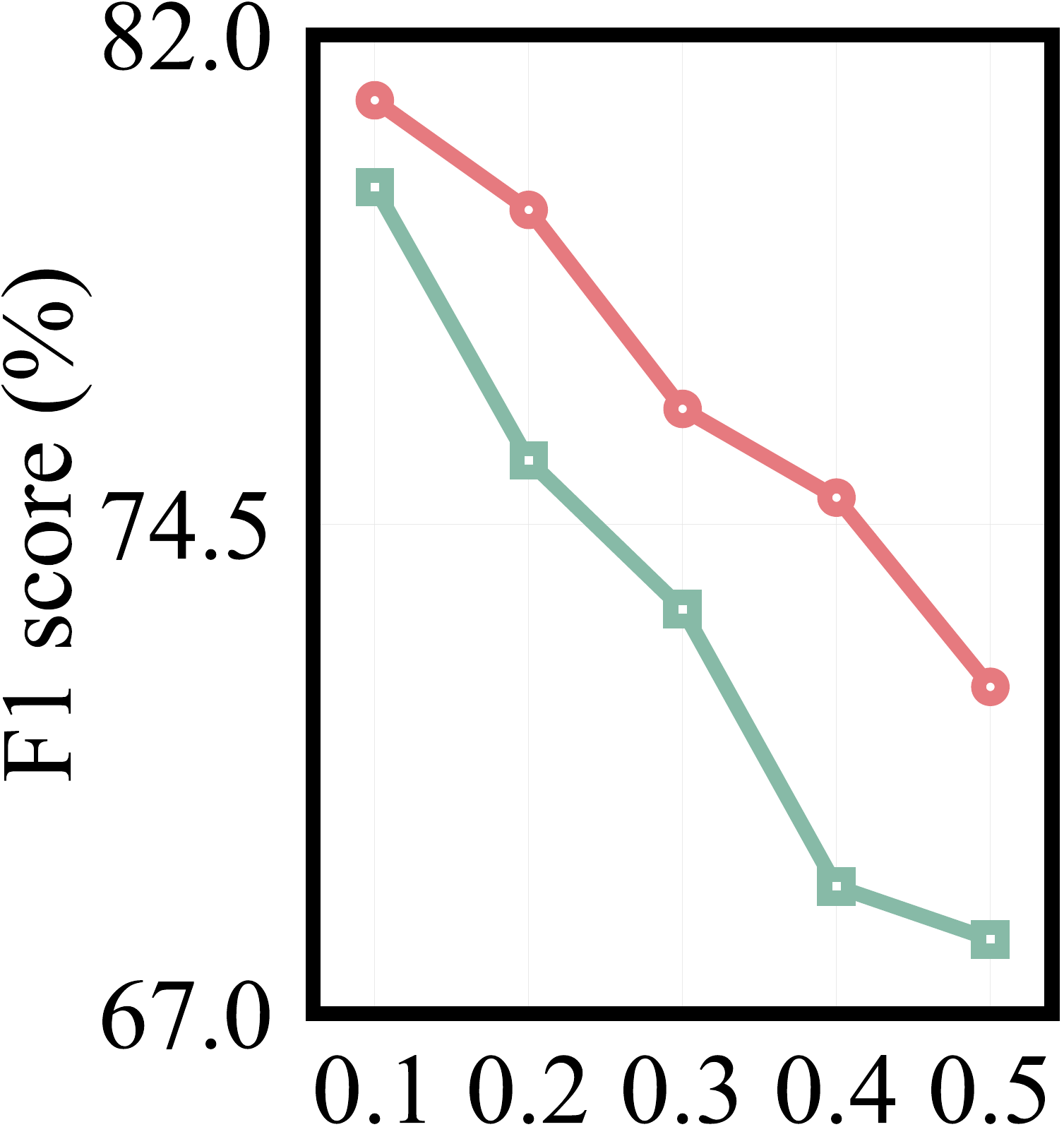}
\quad
\includegraphics[width=0.225\textwidth]{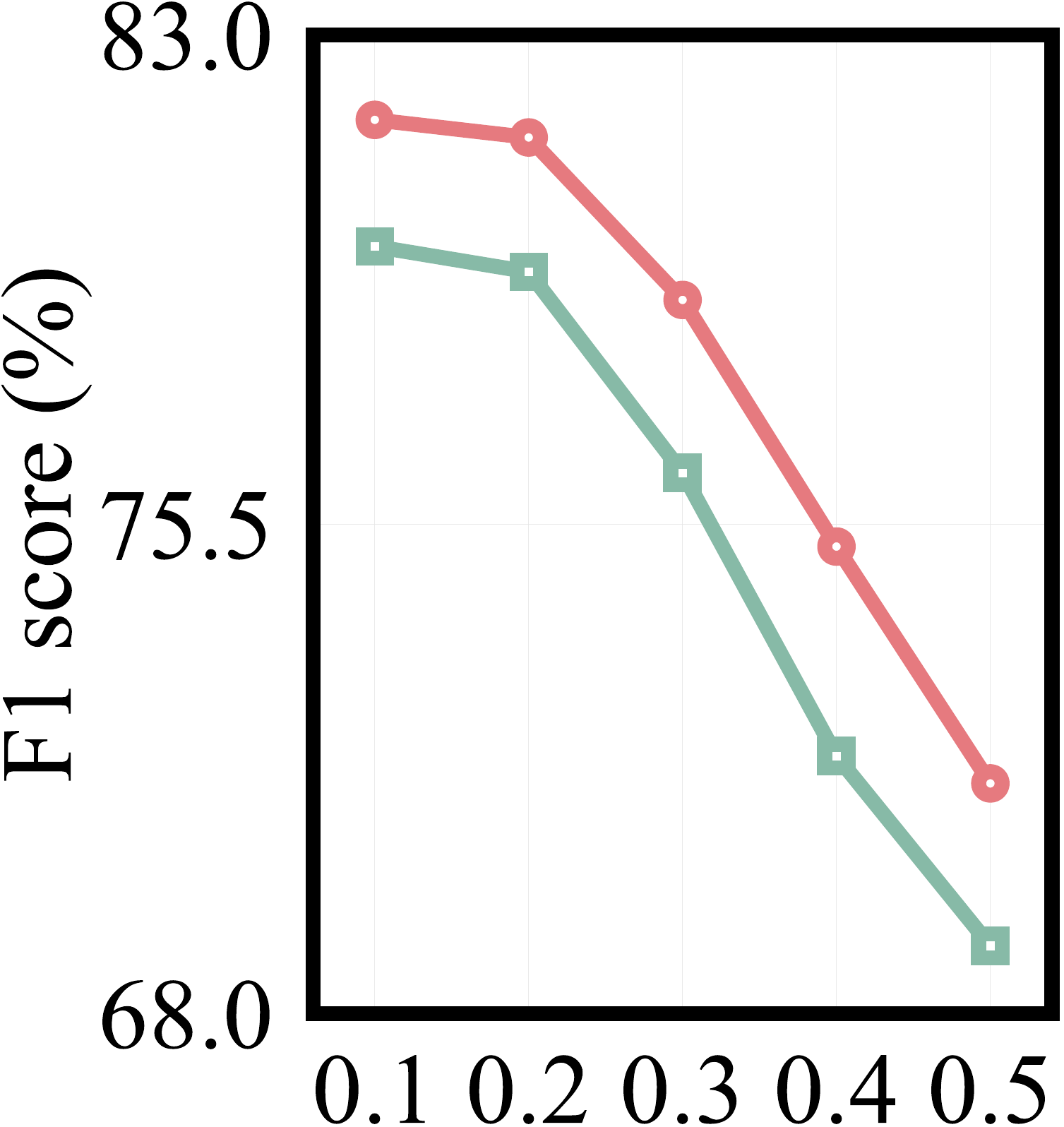}
\quad
\includegraphics[width=0.225\textwidth]{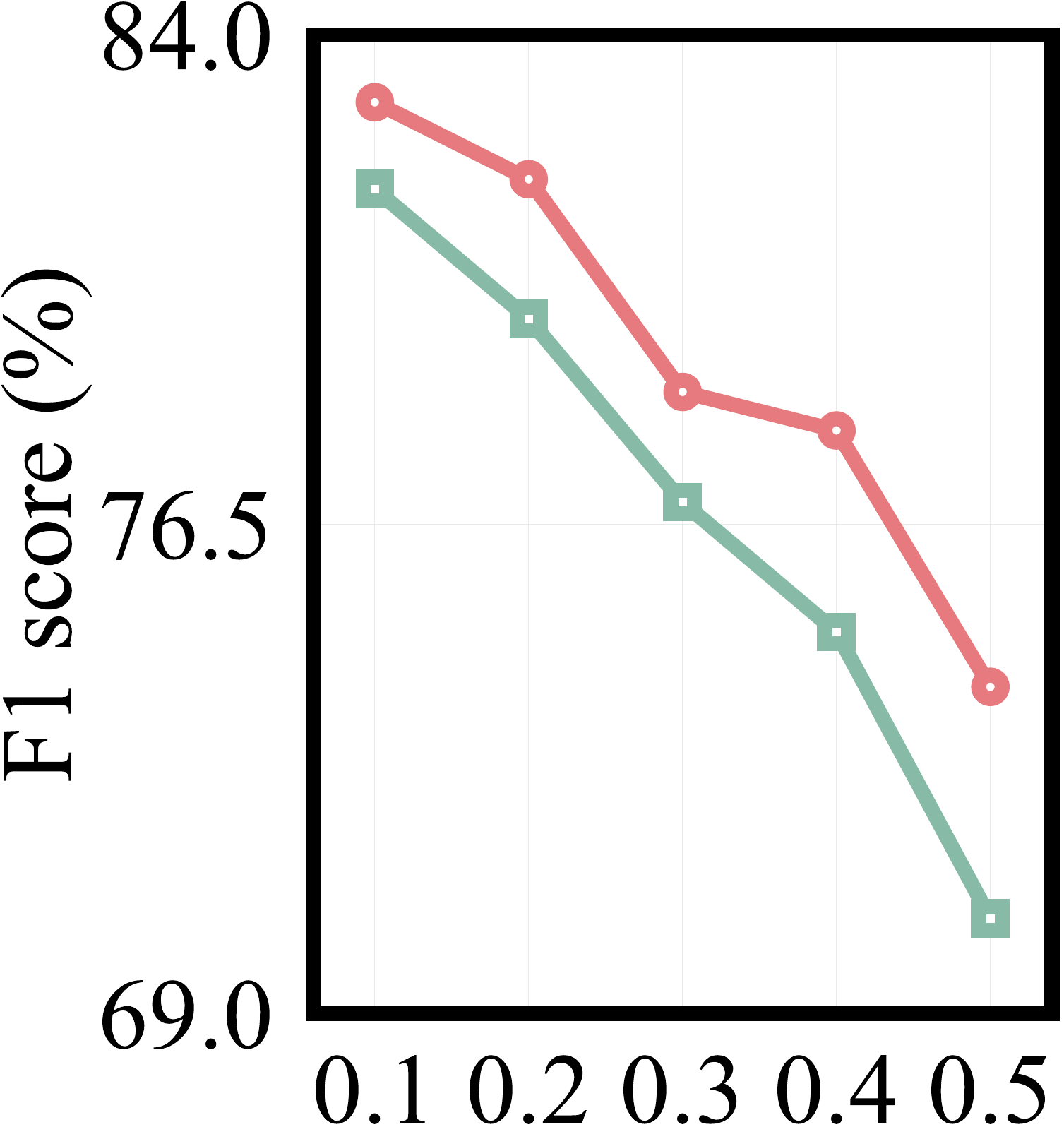}
\quad
\includegraphics[width=0.225\textwidth]{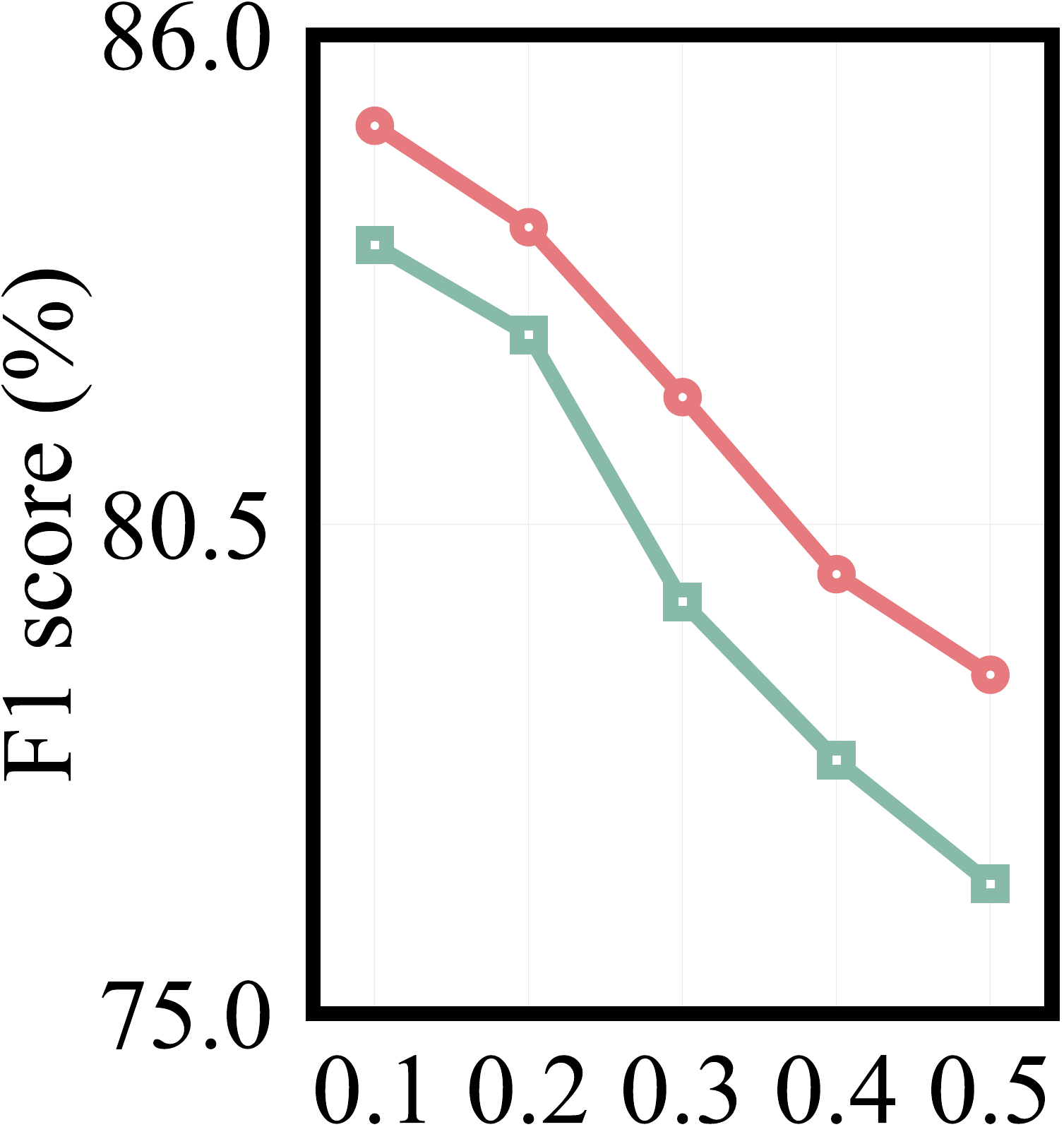}
\caption{F1 score comparison between baseline noisy label learning methods (green) and their SEI-enhanced variants (red). From left to right: SCE, M-correction, DivideMix, and ProMix.}

\label{fig6}
\end{figure*}

\subsubsection{Synergy between SEI and Learning with Noisy Labels}
\label{sec:4.3.4}
To further demonstrate the utility of our approach, we integrate SEI as a data cleaning module with four representative noisy label learning methods: SCE~\citep{lnlloss1}, M-correction~\citep{m-correction}, DivideMix~\citep{divide}, and ProMix~\citep{promix}. The resulting variants, SCE+SEI, M-correction+SEI, DivideMix+SEI, and ProMix+SEI are evaluated against their respective baselines on the ISIC dataset under confusion-calibrated noise. As shown in Figure~\ref{fig6}, incorporating SEI consistently improves performance in F1 score (results on accuracy and AUC are provided in Appendix~\ref{sec:auc}). These gains highlight the plug-and-play nature of SEI: it is architecture-agnostic and integrates seamlessly into diverse noisy label learning frameworks.

\subsubsection{Generalization of SEI to Natural Images with Real-World Noise}

To assess the generalization ability of SEI beyond the medical domain and to evaluate its behavior under real-world human annotation noise, we conducted additional experiments on the CIFAR-100N. CIFAR-100N is a noisy-label variant of CIFAR-100 where each training image is re-annotated by human annotators, while the original CIFAR-100 labels are kept as clean ground truth.

On CIFAR-100N, we compare SEI against the same set of noisy label detection baselines used in the main paper. As shown in Table ~\ref{tab:cifar100n_detection}, SEI again achieves the best F1 score for mislabeled data detection, outperforming the second-best method by 2.2\%. These results indicate that SEI is not domain-specific. The signed entropy dynamics remain effective on a standard natural image dataset, and SEI robustly handles real human annotation noise—not only synthetic symmetric or confusion-calibrated noise.
\subsubsection{Limitation}
Our work is designed for the standard hard-label setting, where each sample has a single assigned label and the goal is to identify mislabeled instances under this assumption. Settings involving uncertainty, annotator disagreement, or ambiguous annotations are closely related, but they correspond to different problem formulations and are beyond the scope of this paper. We acknowledge this as a limitation of our current work and view extending SEI to such settings as an important direction for future research.

\section{Conclusion}
We present SEI, a simple yet effective metric for detecting mislabeled data by leveraging signed entropy dynamics during training, which integrates seamlessly into standard training workflows. Extensive experiments on diverse medical imaging datasets demonstrate that SEI achieves state-of-the-art performance while remaining efficient and easy to apply.

\section*{Impact Statement}
This paper presents work whose goal is to advance the field of machine learning. There are many potential societal consequences of our work, none of which we feel must be specifically highlighted here.

\nocite{langley00}

\bibliography{example_paper}
\bibliographystyle{icml2026}

\newpage
\appendix
\label{sec:appendix}
\onecolumn

\section{Datasets}
Figure~\ref{fig:dataset images} presents representative samples from each class across the three datasets employed in our study: ISIC, DeepDRiD, and PANDA. We display one exemplar image per class, organized with rows corresponding to individual datasets and columns representing distinct classes. This visualization facilitates direct comparison of class-specific visual characteristics. The corresponding text prompts utilized for training CLIP models are detailed in Table~\ref{tab:text prompt}, including auxiliary class prompts for each dataset.

\begin{figure}[h]
\raggedright
  \begin{minipage}{0.137\textwidth}  
        \small
        \centering
        \fboxsep=0pt
        \fboxrule=0pt
        \fbox{\parbox{\textwidth}{\centering \textbf{MEL}}}
    \end{minipage}%
  \hspace{0.0005\textwidth}  
   \begin{minipage}{0.137\textwidth}  
        \small
        \centering
        \fboxsep=0pt
        \fboxrule=0pt
        \fbox{\parbox{\textwidth}{\centering \textbf{NV}}}
    \end{minipage}%
  \hspace{0.0005\textwidth}  
   \begin{minipage}{0.137\textwidth}  
        \small
        \centering
        \fboxsep=0pt
        \fboxrule=0pt
        \fbox{\parbox{\textwidth}{\centering \textbf{BCC}}}
    \end{minipage}%
  \hspace{0.0005\textwidth}  
   \begin{minipage}{0.137\textwidth}  
        \small
        \centering
        \fboxsep=0pt
        \fboxrule=0pt
        \fbox{\parbox{\textwidth}{\centering \textbf{AKIEC}}}
    \end{minipage}%
  \hspace{0.0005\textwidth}
    \begin{minipage}{0.137\textwidth}
        \small
        \centering
        \fboxsep=0pt
        \fboxrule=0pt
        \fbox{\parbox{\textwidth}{\centering \textbf{BKL}}}
    \end{minipage}%
  \hspace{0.0005\textwidth}
    \begin{minipage}{0.137\textwidth}
        \small
        \centering
        \fboxsep=0pt
        \fboxrule=0pt
        \fbox{\parbox{\textwidth}{\centering \textbf{DF}}}
    \end{minipage}%
  \hspace{0.0005\textwidth}
    \begin{minipage}{0.137\textwidth}
        \small
        \centering
        \fboxsep=0pt
        \fboxrule=0pt
        \fbox{\parbox{\textwidth}{\centering \textbf{VASC}}}
    \end{minipage}%

  \vspace{2pt}
  \raggedright
  \begin{minipage}{0.137\textwidth}
    \scriptsize
    \resizebox{\linewidth}{!}{\includegraphics{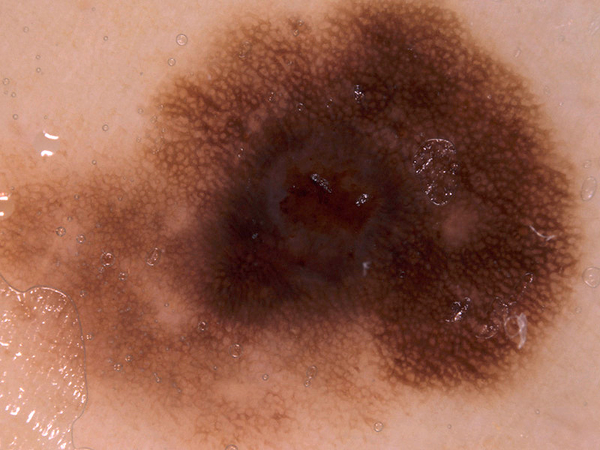}}
  \end{minipage}%
  \hspace{0.001\textwidth}
  \begin{minipage}{0.137\textwidth}
   \scriptsize
    \resizebox{\linewidth}{!}{\includegraphics{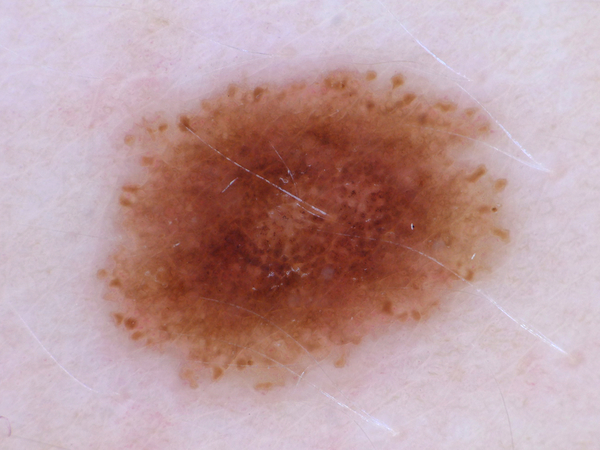}}
  \end{minipage}%
  \hspace{0.001\textwidth}
  \begin{minipage}{0.137\textwidth}
    \scriptsize
    \resizebox{\linewidth}{!}{\includegraphics{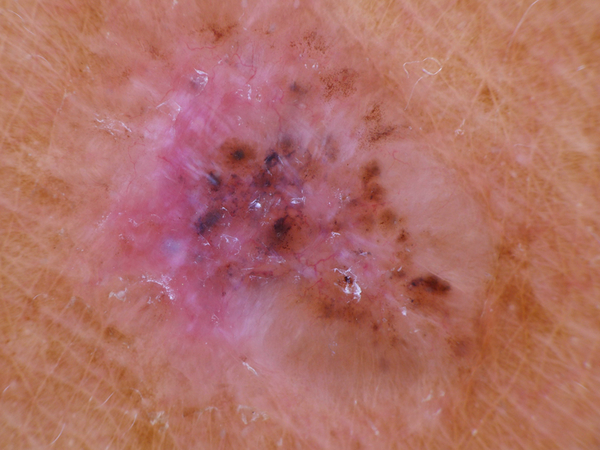}}
  \end{minipage}%
  \hspace{0.001\textwidth}
  \begin{minipage}{0.137\textwidth}
    \scriptsize
    \resizebox{\linewidth}{!}{\includegraphics{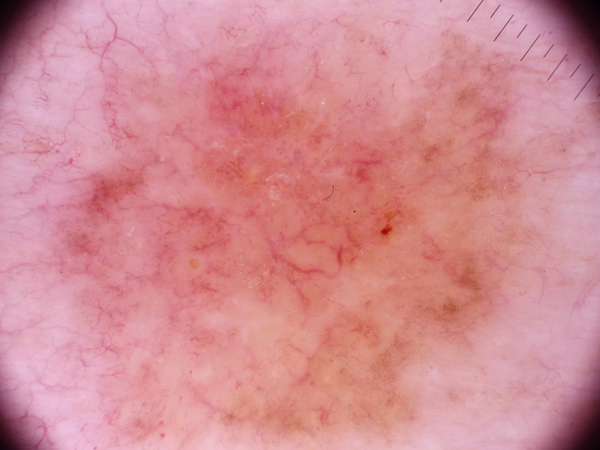}}
  \end{minipage}%
  \hspace{0.001\textwidth}
  \begin{minipage}{0.137\textwidth}
    \scriptsize
    \resizebox{\linewidth}{!}{\includegraphics{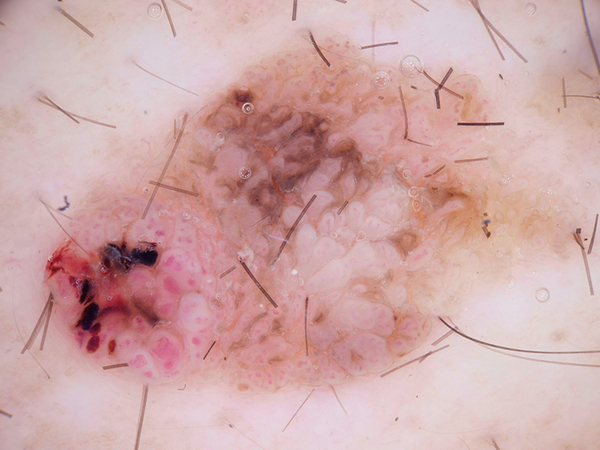}}
  \end{minipage}%
  \hspace{0.001\textwidth}
  \begin{minipage}{0.137\textwidth}
    \scriptsize
    \resizebox{\linewidth}{!}{\includegraphics{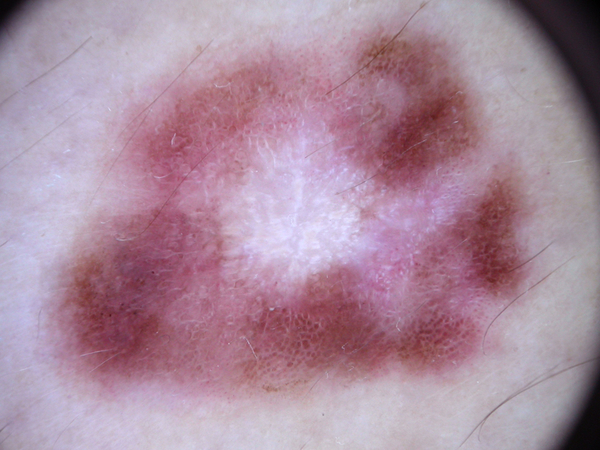}}
  \end{minipage}%
  \hspace{0.001\textwidth}
  \begin{minipage}{0.137\textwidth}
    \scriptsize
    \resizebox{\linewidth}{!}{\includegraphics{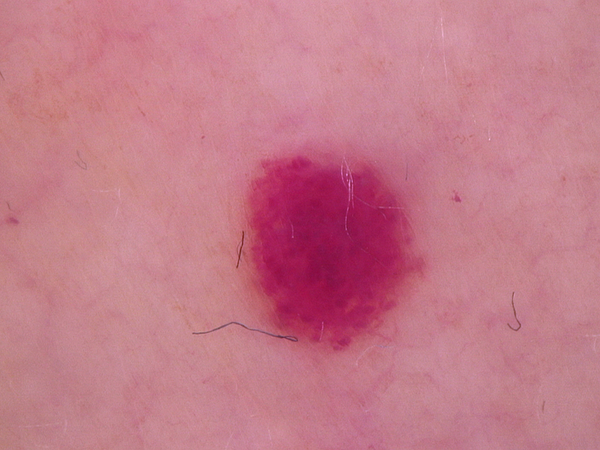}}
  \end{minipage}%

    \vspace{4pt}
    \begin{minipage}{0.137\textwidth}
        \small
        \centering
        \fboxsep=0pt
        \fboxrule=0pt
        \fbox{\parbox{\textwidth}{\centering \textbf{DR: 0}}}
    \end{minipage}%
  \hspace{0.0005\textwidth}
   \begin{minipage}{0.137\textwidth}
        \small
        \centering
        \fboxsep=0pt
        \fboxrule=0pt
        \fbox{\parbox{\textwidth}{\centering \textbf{DR: 1}}}
    \end{minipage}%
  \hspace{0.0005\textwidth}
   \begin{minipage}{0.137\textwidth}
        \small
        \centering
        \fboxsep=0pt
        \fboxrule=0pt
        \fbox{\parbox{\textwidth}{\centering \textbf{DR: 2}}}
    \end{minipage}%
  \hspace{0.0005\textwidth}
   \begin{minipage}{0.137\textwidth}
        \small
        \centering
        \fboxsep=0pt
        \fboxrule=0pt
        \fbox{\parbox{\textwidth}{\centering \textbf{DR: 3}}}
    \end{minipage}%
  \hspace{0.0005\textwidth}
    \begin{minipage}{0.137\textwidth}
        \small
        \centering
        \fboxsep=0pt
        \fboxrule=0pt
        \fbox{\parbox{\textwidth}{\centering \textbf{DR: 4}}}
    \end{minipage}%
    
  \vspace{2pt}
  \raggedright
  \begin{minipage}{0.137\textwidth}
    \scriptsize
    \resizebox{\linewidth}{!}{\includegraphics{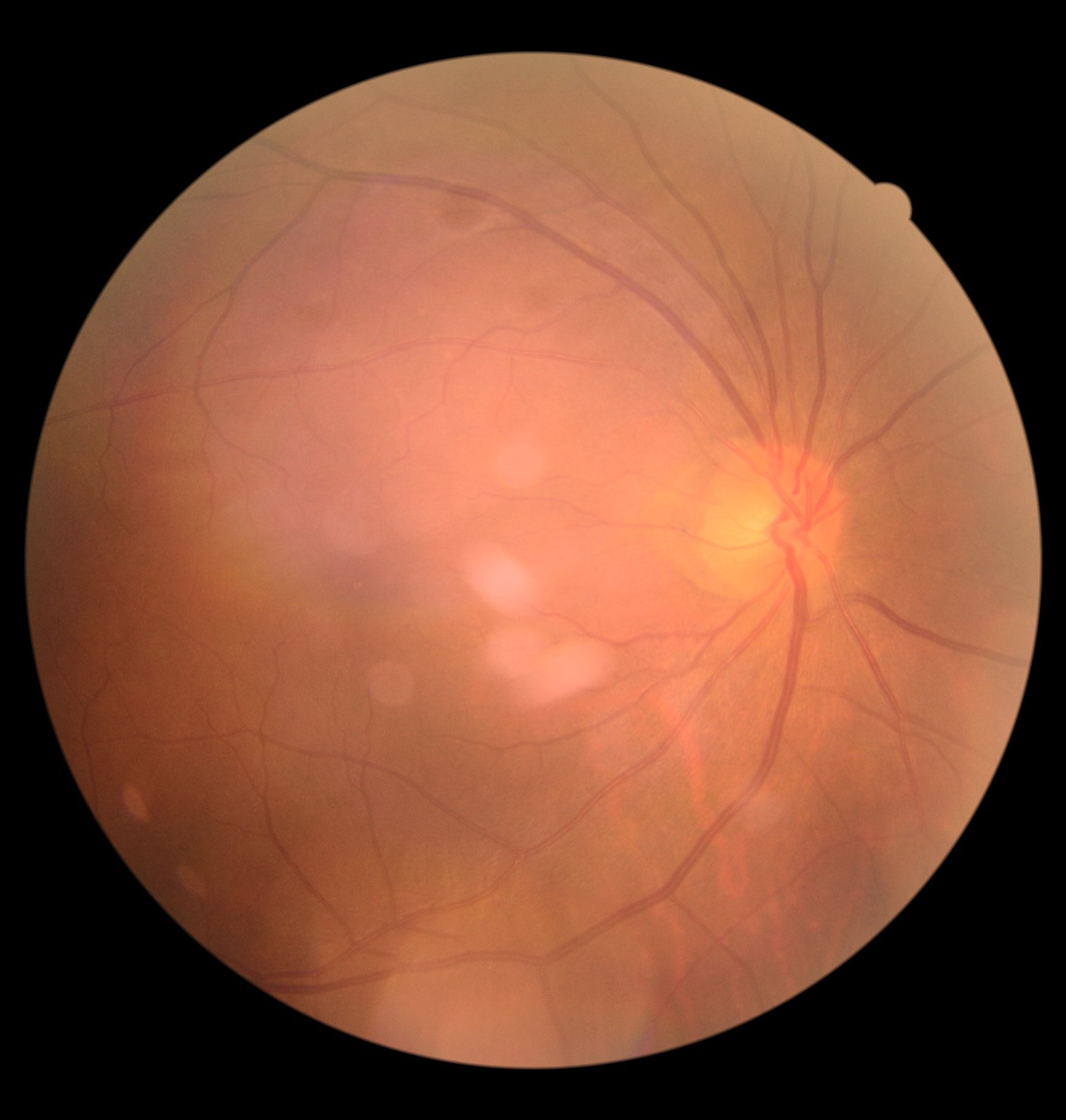}}
  \end{minipage}%
  \hspace{0.0005\textwidth}
  \begin{minipage}{0.137\textwidth}
   \scriptsize
    \resizebox{\linewidth}{!}{\includegraphics{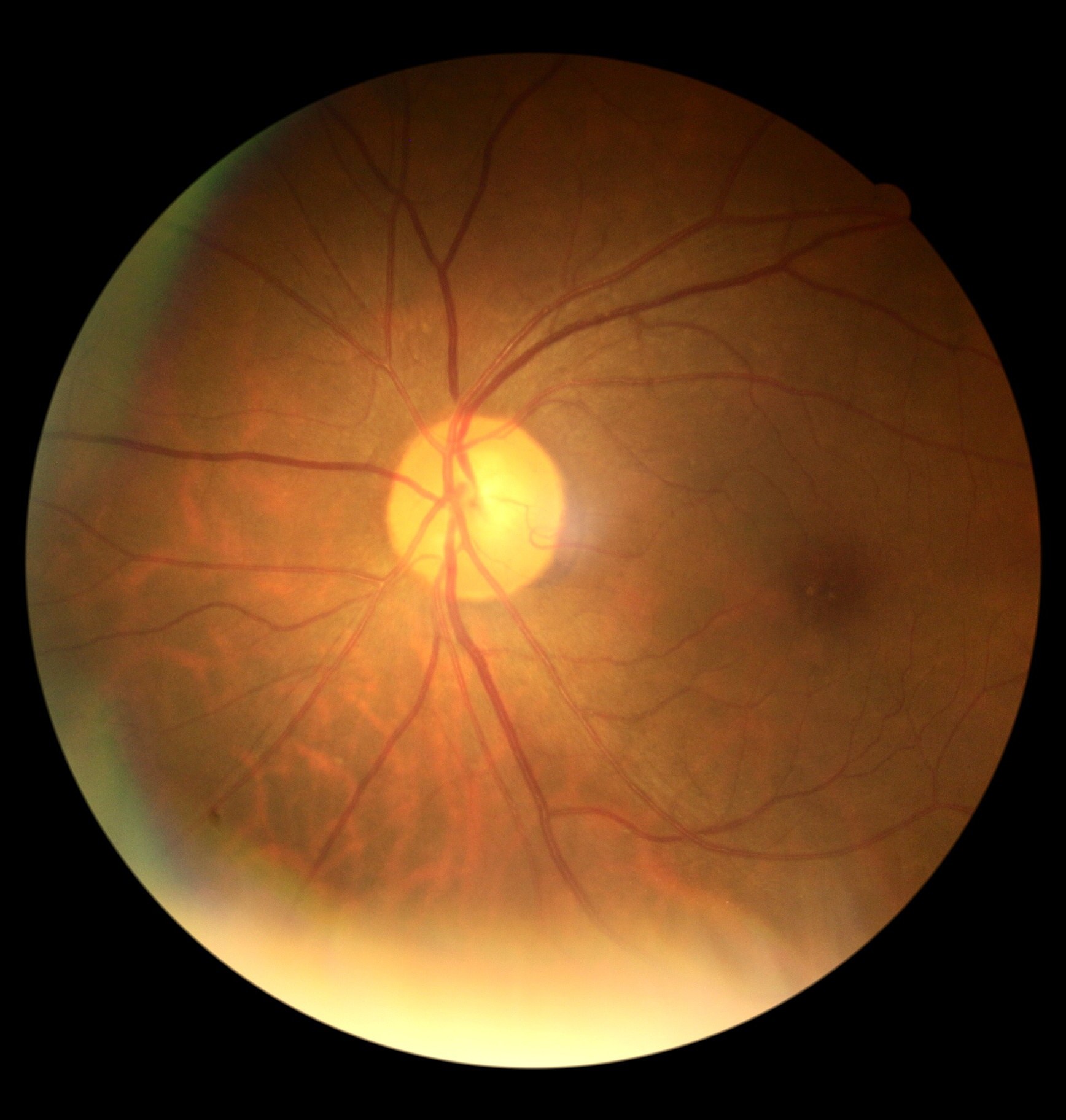}}
  \end{minipage}%
  \hspace{0.001\textwidth}
  \begin{minipage}{0.137\textwidth}
    \scriptsize
    \resizebox{\linewidth}{!}{\includegraphics{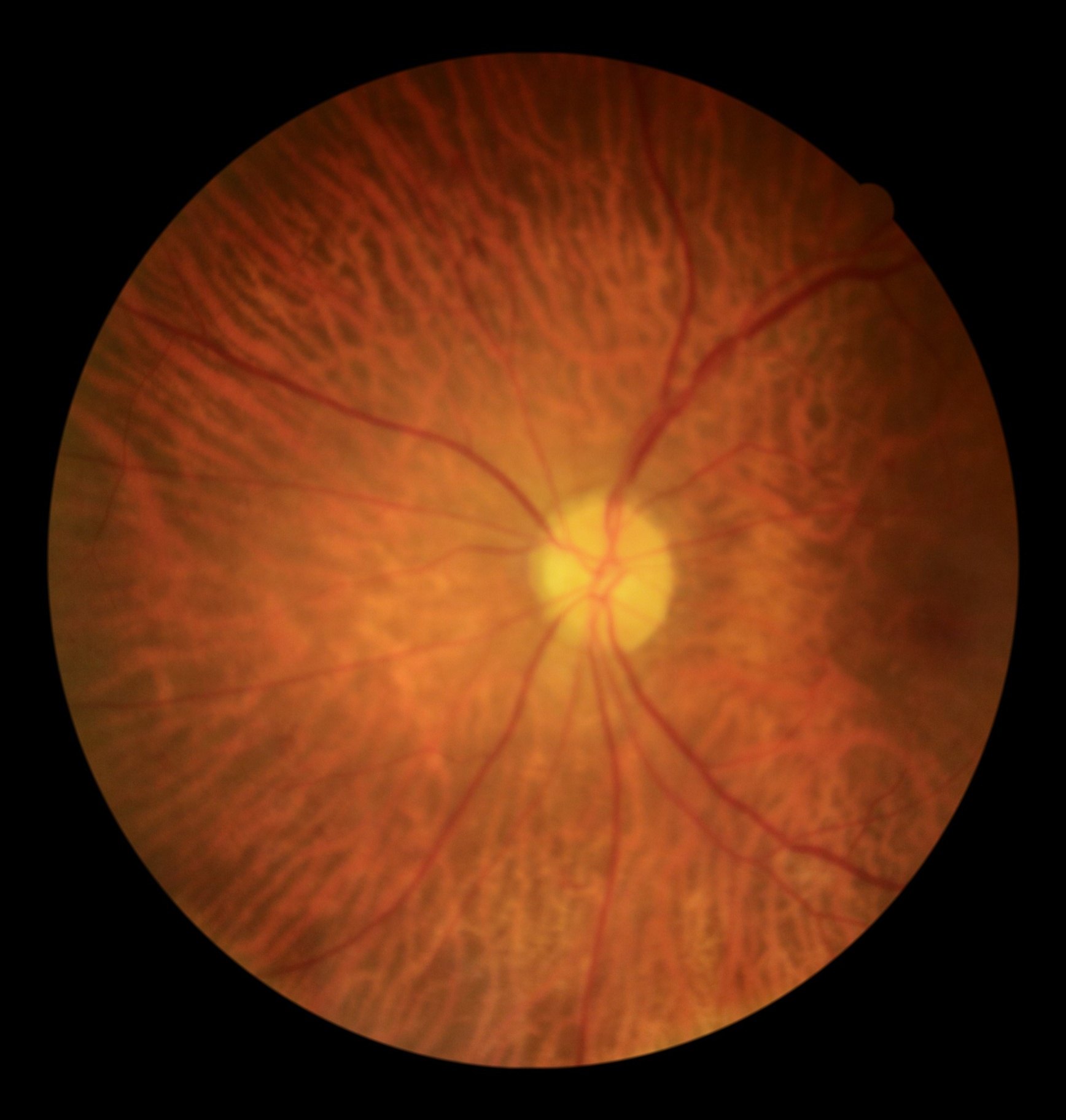}}
  \end{minipage}%
  \hspace{0.001\textwidth}
  \begin{minipage}{0.137\textwidth}
    \scriptsize
    \resizebox{\linewidth}{!}{\includegraphics{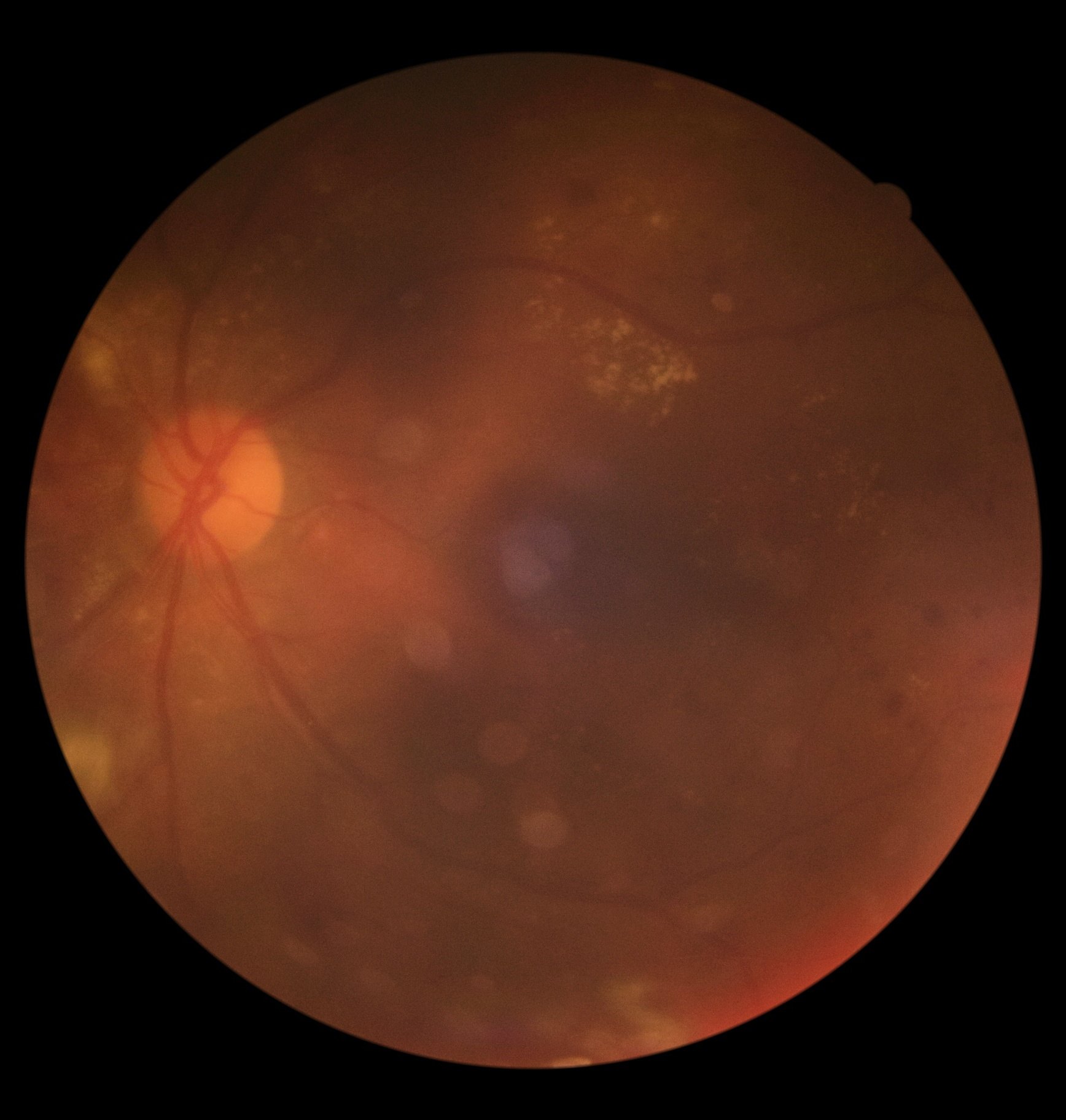}}
  \end{minipage}%
  \hspace{0.001\textwidth}
  \begin{minipage}{0.137\textwidth}
    \scriptsize
    \resizebox{\linewidth}{!}{\includegraphics{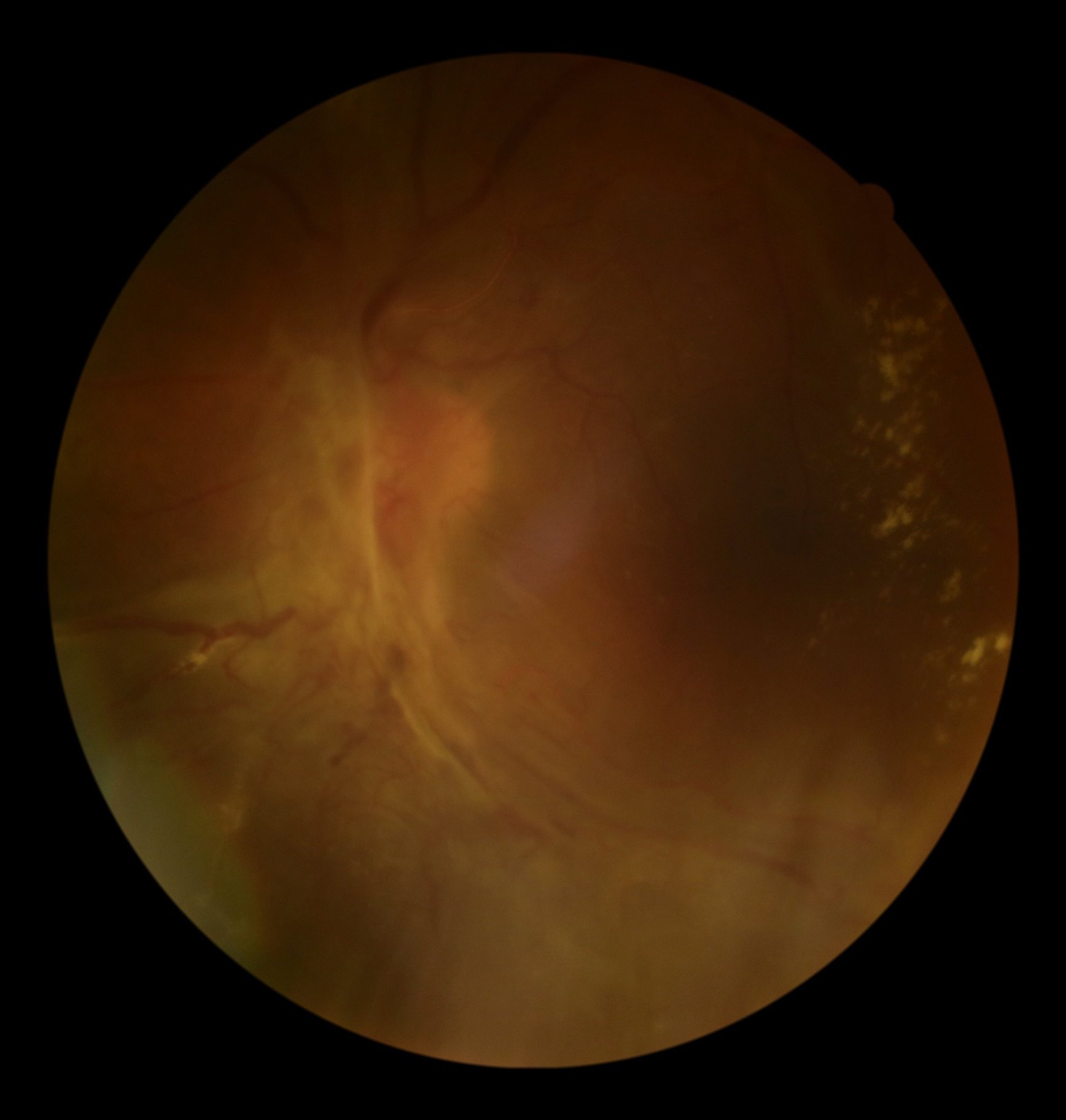}}
  \end{minipage}%

  \vspace{4pt}
    \begin{minipage}{0.137\textwidth}
        \small
        \centering
        \fboxsep=0pt
        \fboxrule=0pt
        \fbox{\parbox{\textwidth}{\centering \textbf{Benign}}}
    \end{minipage}%
  \hspace{0.0005\textwidth}
   \begin{minipage}{0.137\textwidth}
        \small
        \centering
        \fboxsep=0pt
        \fboxrule=0pt
        \fbox{\parbox{\textwidth}{\centering \textbf{Gleason 3}}}
    \end{minipage}%
  \hspace{0.0005\textwidth}
   \begin{minipage}{0.137\textwidth}
        \small
        \centering
        \fboxsep=0pt
        \fboxrule=0pt
        \fbox{\parbox{\textwidth}{\centering \textbf{Gleason 4}}}
    \end{minipage}%
  \hspace{0.0005\textwidth}
   \begin{minipage}{0.137\textwidth}
        \small
        \centering
        \fboxsep=0pt
        \fboxrule=0pt
        \fbox{\parbox{\textwidth}{\centering \textbf{Gleason 5}}}
    \end{minipage}%

  \vspace{2pt}
  \raggedright
  \begin{minipage}{0.137\textwidth}
    \scriptsize
    \resizebox{\linewidth}{!}{\includegraphics{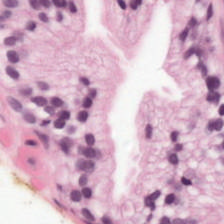}}
  \end{minipage}%
  \hspace{0.001\textwidth}
  \begin{minipage}{0.137\textwidth}
   \scriptsize
    \resizebox{\linewidth}{!}{\includegraphics{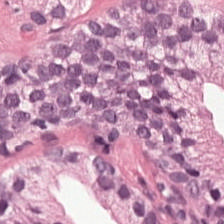}}
  \end{minipage}%
  \hspace{0.001\textwidth}
  \begin{minipage}{0.137\textwidth}
    \scriptsize
    \resizebox{\linewidth}{!}{\includegraphics{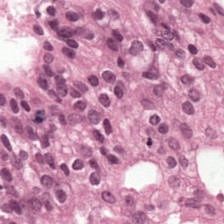}}
  \end{minipage}%
  \hspace{0.001\textwidth}
  \begin{minipage}{0.137\textwidth}
    \scriptsize
    \resizebox{\linewidth}{!}{\includegraphics{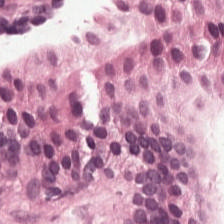}}
  \end{minipage}%
  
\caption{Representative images from the datasets used in this work. Rows correspond to ISIC, DeepDRiD, and PANDA datasets (top to bottom) and columns to class labels.}
\label{fig:dataset images}
\end{figure}

\begin{table}[h]
\centering
\caption{Text prompts for each class in the datasets. Prompts highlighted in gray are auxiliary class prompts.}
\large
\resizebox{0.85\textwidth}{!}{
\begin{tabular}{c|l}
\toprule  
\textbf{Dataset} & \textbf{Text Prompts} \\
\midrule
\multirow{8}{*}{ISIC}    & A dermoscopic image showing melanoma. \\
    & A dermoscopic image showing nevus. \\
    & A dermoscopic image showing basal cell carcinoma. \\
    & A dermoscopic image showing actinic keratosis/intraepithelial carcinoma. \\
    & A dermoscopic image showing benign keratosis. \\ 
    & A dermoscopic image showing dermatofibroma. \\ 
    & A dermoscopic image showing vascular lesion. \\ 
    & {\setlength{\fboxsep}{0pt}\colorbox{gray!20}{A dermoscopic image showing other lesions.}} \\ 

\midrule
\multirow{6}{*}{DeepDRiD}    
    & A fundus image showing no evidence of diabetic retinopathy. \\
    & A fundus image exhibiting mild diabetic retinopathy. \\
    & A fundus image exhibiting moderate diabetic retinopathy. \\
    & A fundus image exhibiting severe diabetic retinopathy. \\
    & A fundus image exhibiting proliferative diabetic retinopathy. \\ 
    & {\setlength{\fboxsep}{0pt}\colorbox{gray!20}{A fundus image showing other retinal conditions.}} \\

\midrule
\multirow{5}{*}{PANDA}
    & A histology image showing benign glandular epithelium. \\
    & A histology image showing Gleason pattern 3 adenocarcinoma. \\
    & A histology image showing Gleason pattern 4 adenocarcinoma. \\
    & A histology image showing Gleason pattern 5 adenocarcinoma. \\
    & {\setlength{\fboxsep}{0pt}
\colorbox{gray!20}{A histology image showing other conditions.}} \\ 
\bottomrule
\end{tabular}
}

\label{tab:text prompt}
\end{table}

\section{Additional Analysis of Entropy Trajectories}
\label{sec:more examples ei}
In this section, we visualize entropy trajectories for more representative categories across the three datasets. For ISIC and DeepDRiD, we additionally show trajectories for other classes, comparing correctly labeled samples with mislabeled ones. As shown in Figure~\ref{fig1_appendix}, we plot entropy trajectories for melanoma cases from the ISIC dataset, grade 0 diabetic retinopathy images from the DeepDRiD dataset, benign epithelium samples from the PANDA dataset, and Gleason 5 cancerous epithelium images from the PANDA dataset. Across all examined cases, we consistently observe the regularity described in Section~\ref{sec:findings}.

\begin{figure*}[h]
\centering
\includegraphics[width=0.48\textwidth]{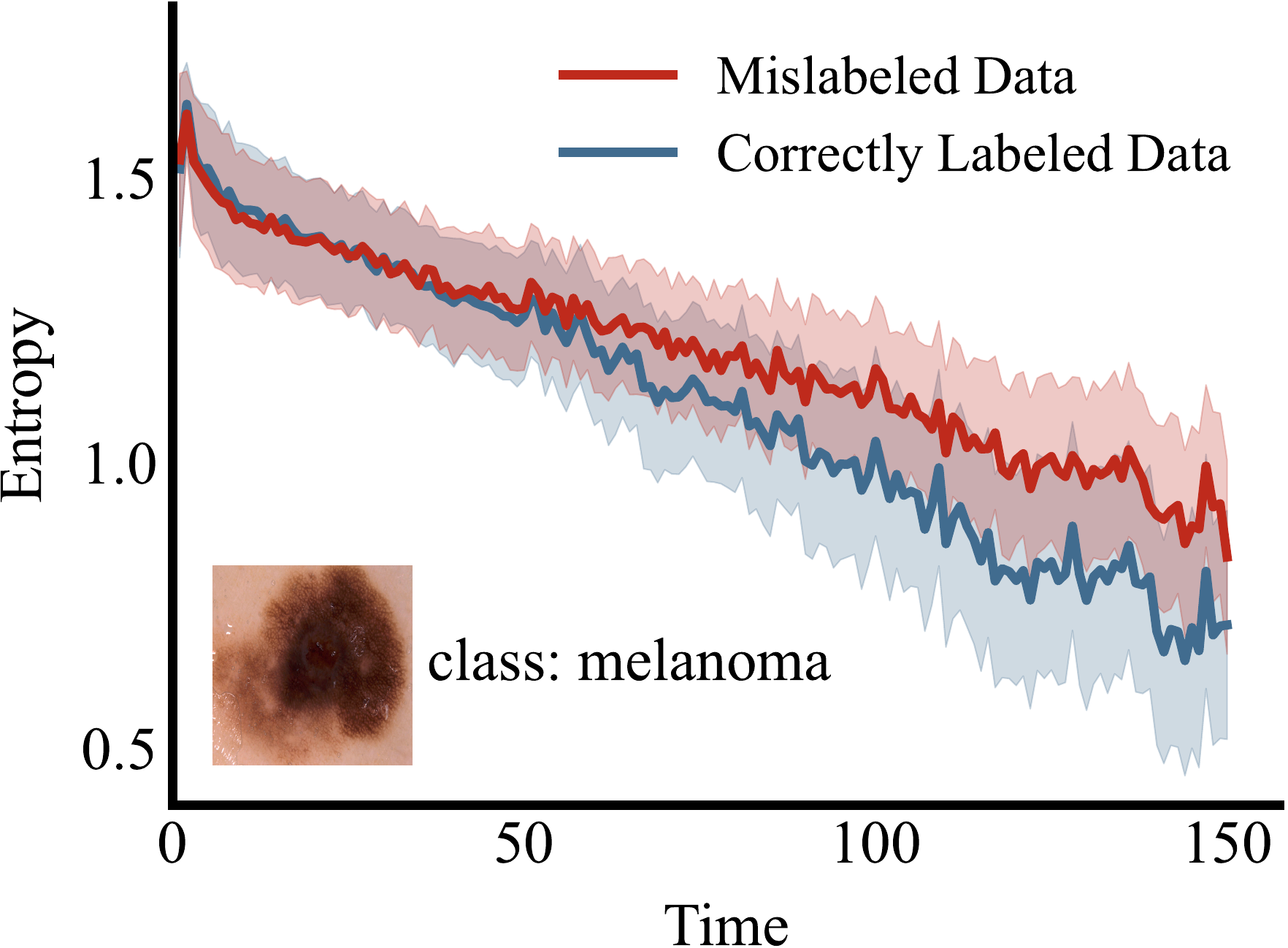}
\quad
\includegraphics[width=0.48\textwidth]{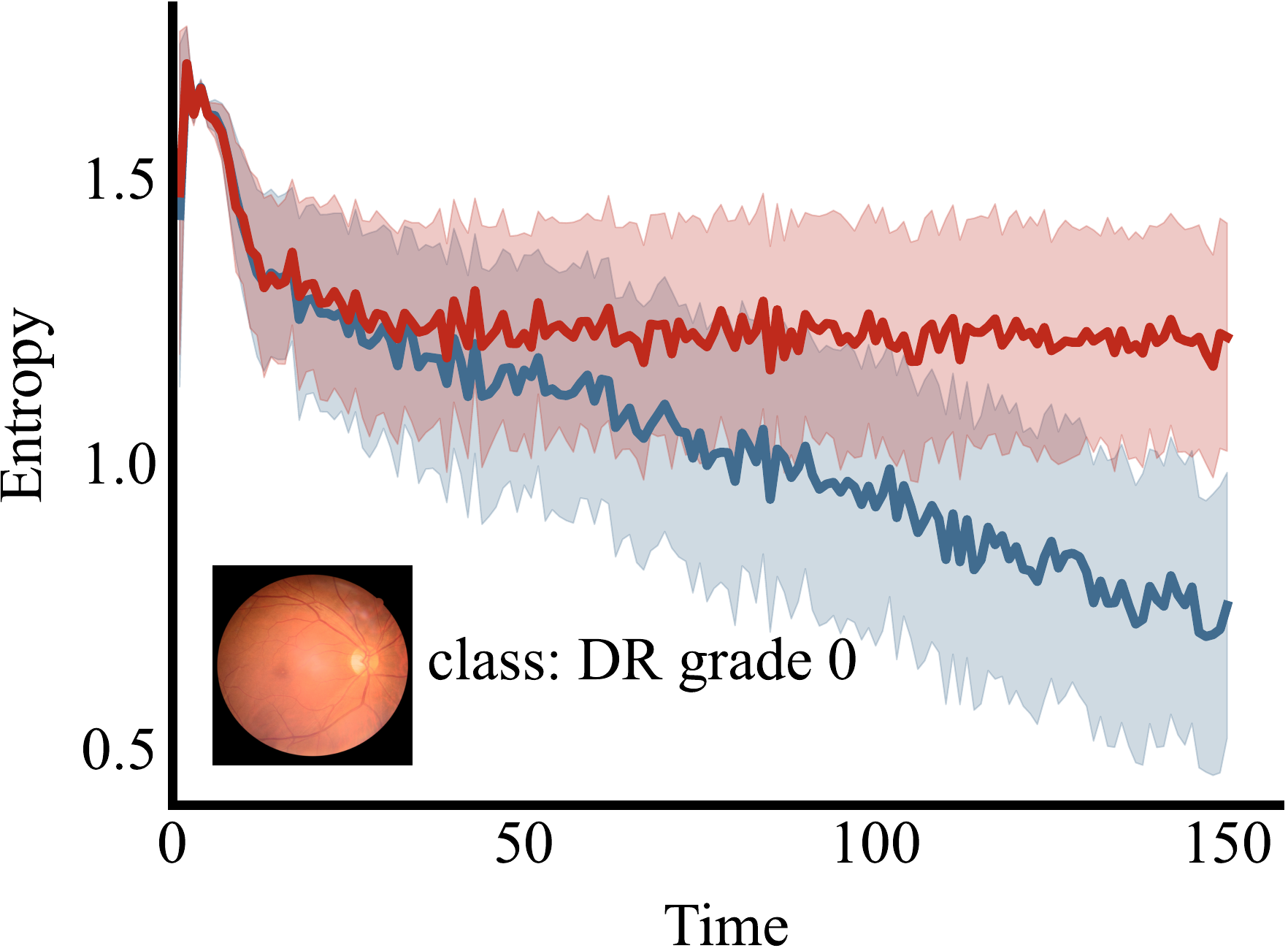}

\includegraphics[width=0.48\textwidth]{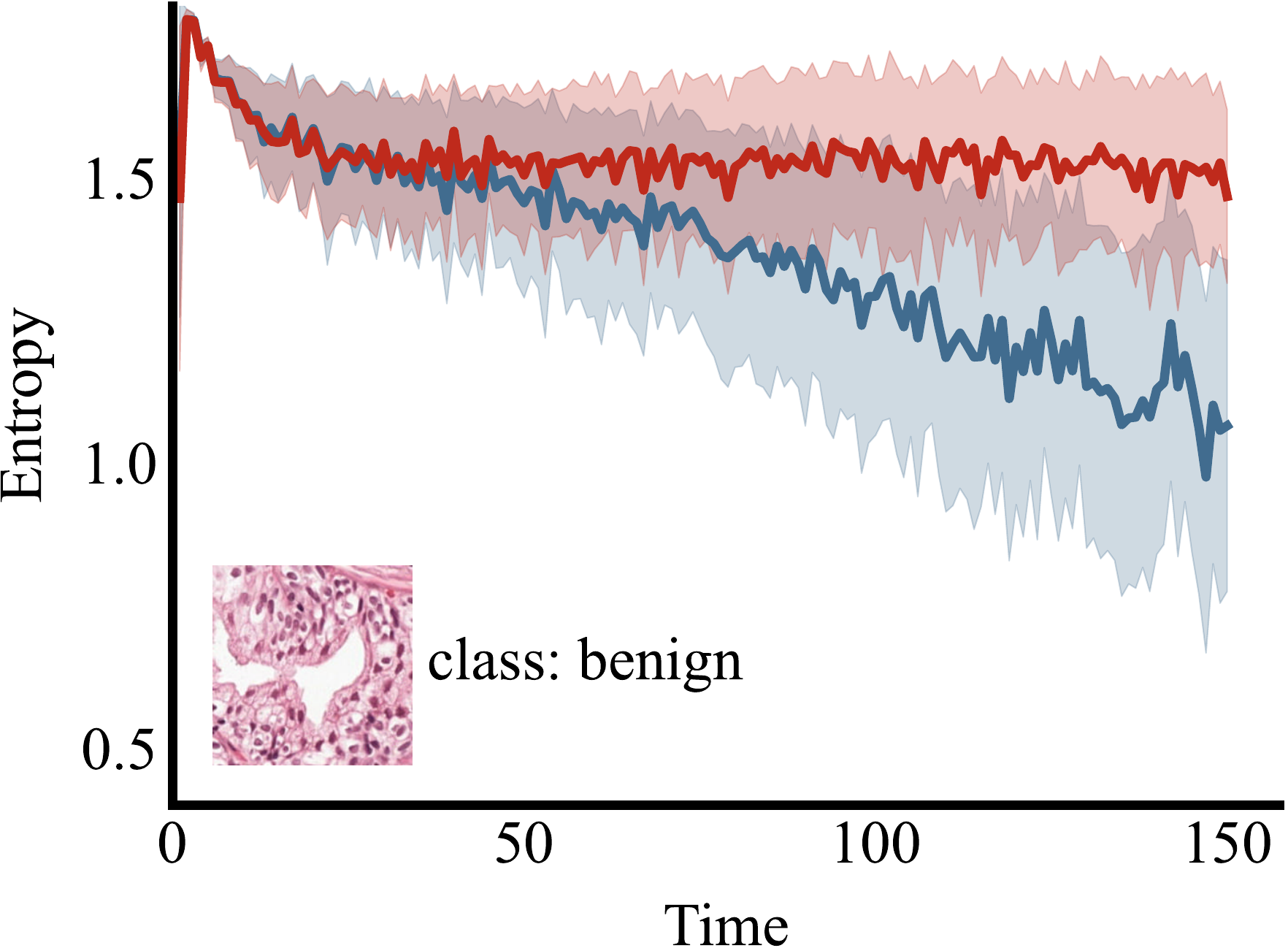}
\quad
\includegraphics[width=0.48\textwidth]{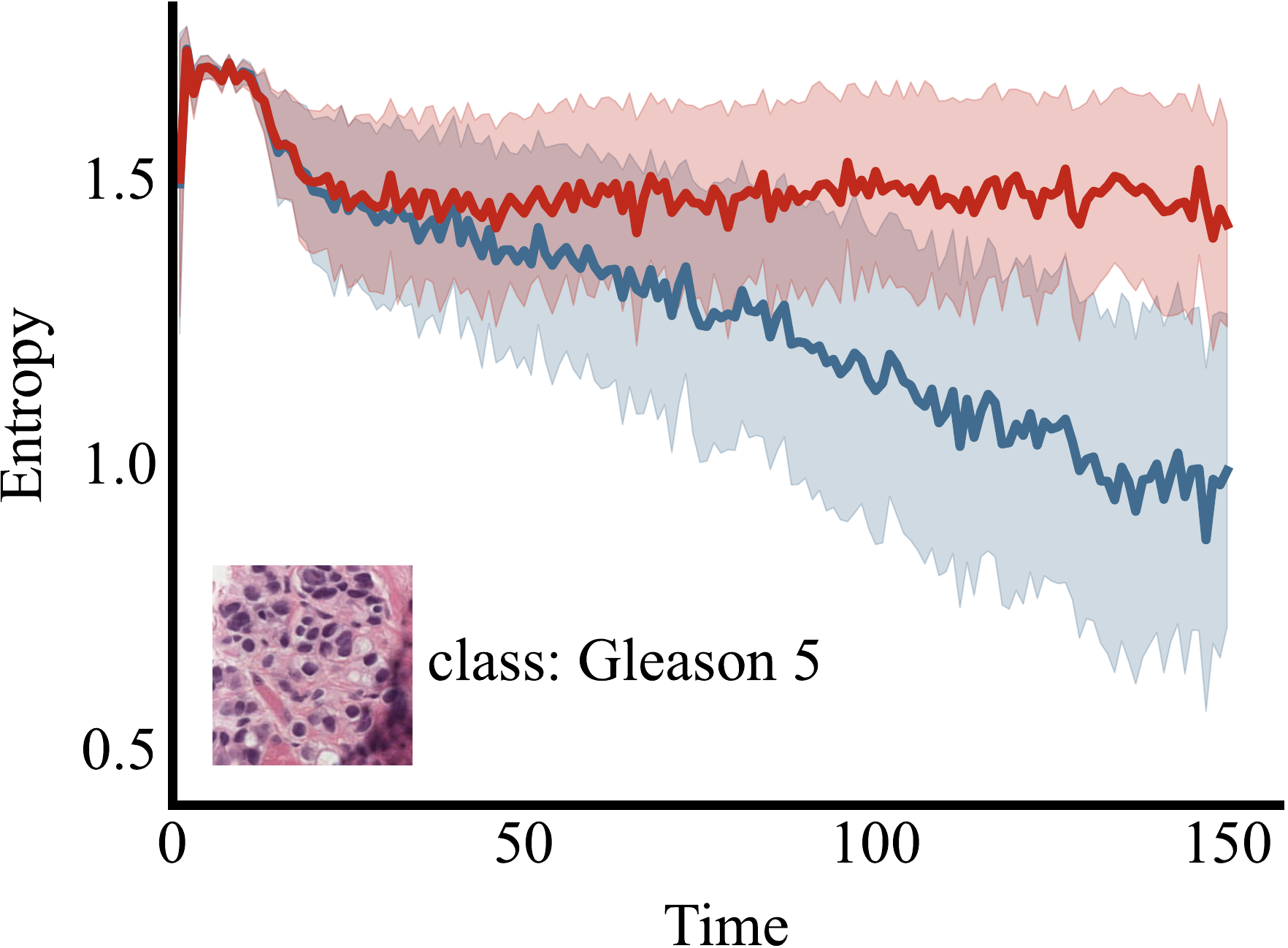}

\caption{Training dynamics of prediction entropy for correctly labeled versus mislabeled samples. Top-left: melanoma images from the ISIC dataset. Top-right: grade 0 diabetic retinopathy (DR) images from the DeepDRiD dataset. Bottom-left: benign glandular epithelium images from the PANDA dataset. Bottom-right: Gleason 5 cancerous epithelium images from the PANDA dataset.}
\label{fig1_appendix}
\end{figure*}

\begin{figure}[h]
\centering
\includegraphics[width=0.31\textwidth]{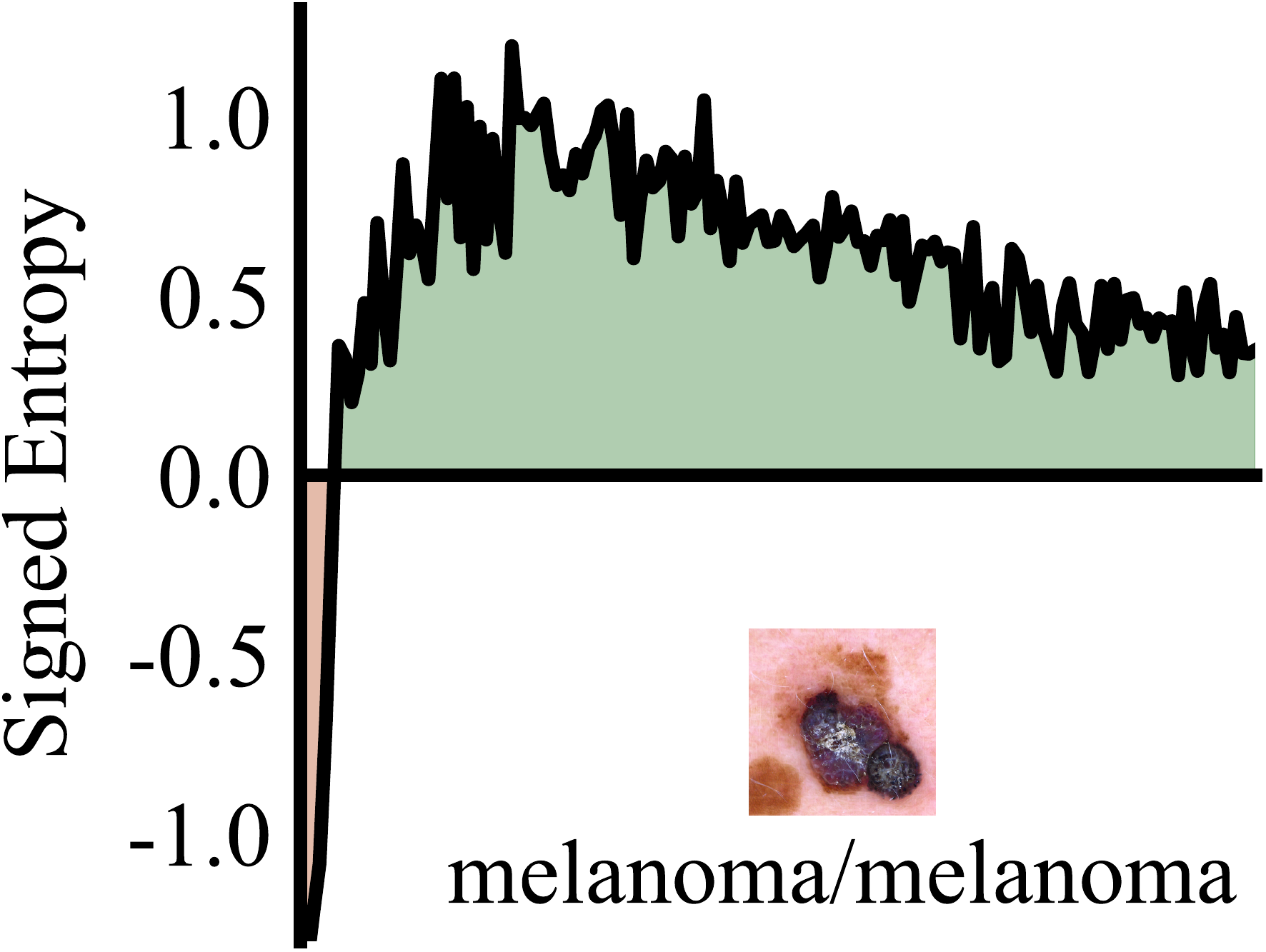}
\quad
\includegraphics[width=0.31\textwidth]{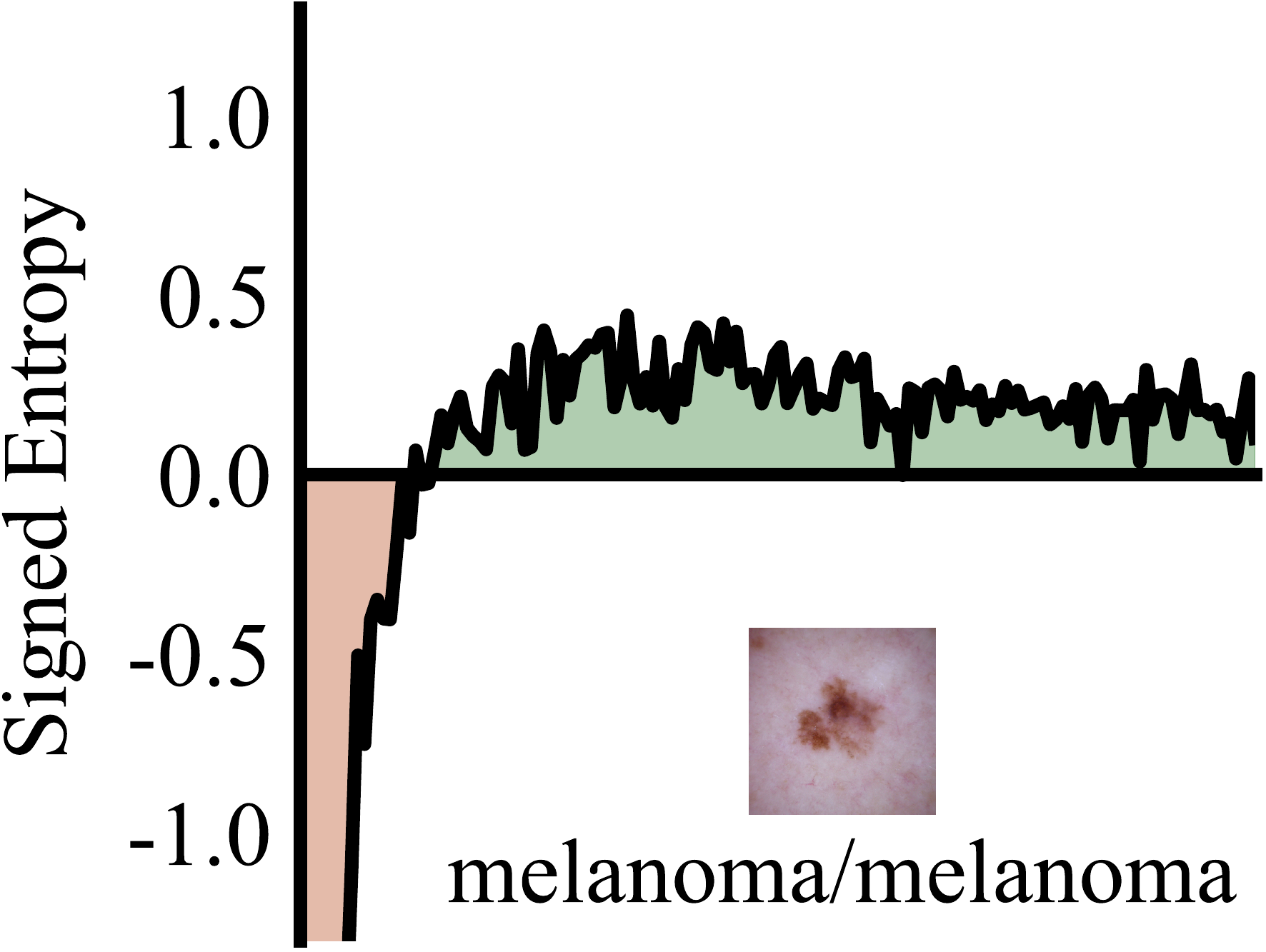}
\quad
\includegraphics[width=0.31\textwidth]{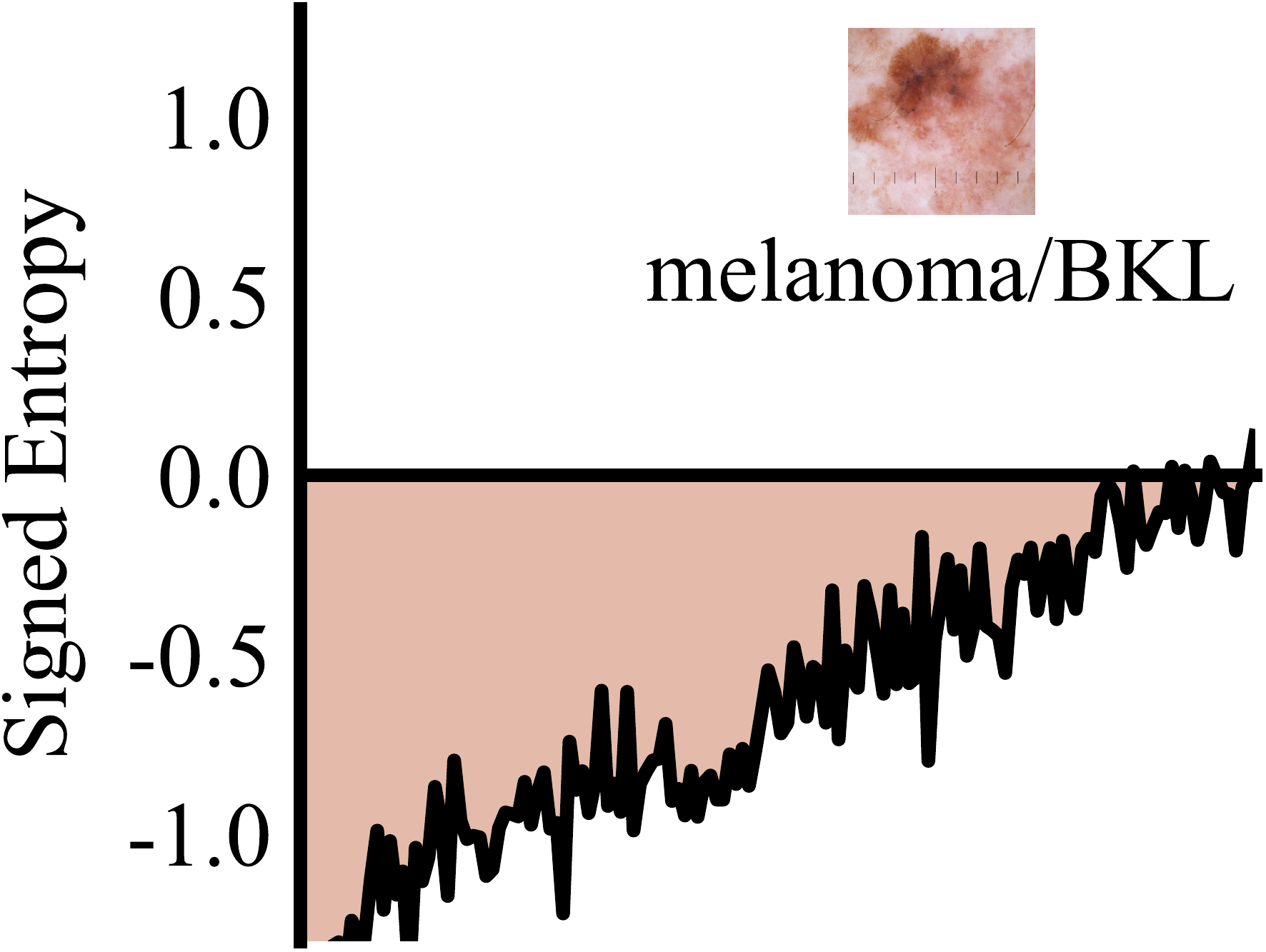}
\caption{Illustration of SEI using melanoma images from the ISIC dataset. The plots show signed entropy curves across training epochs for easy clean (left), hard clean (middle), and mislabeled (right) samples. Each curve is averaged over 200 samples, and the signed area under the curve represents the SEI. Correctly labeled samples consistently exhibit larger SEIs than mislabeled ones.}
\label{fig4_appendix:isic}
\end{figure}

\begin{figure}[!t]
\centering
\includegraphics[width=0.31\textwidth]{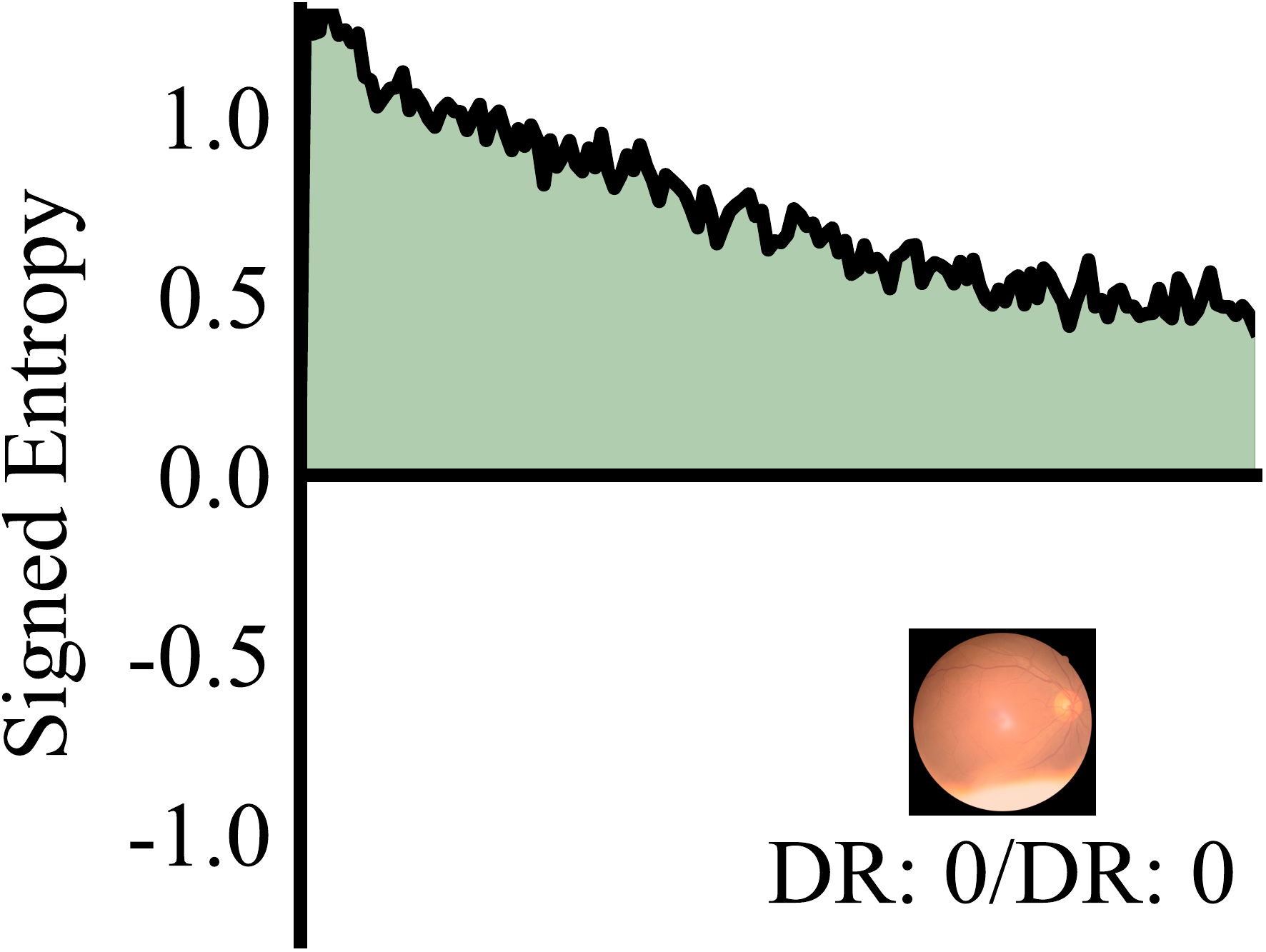}
\quad
\includegraphics[width=0.31\textwidth]{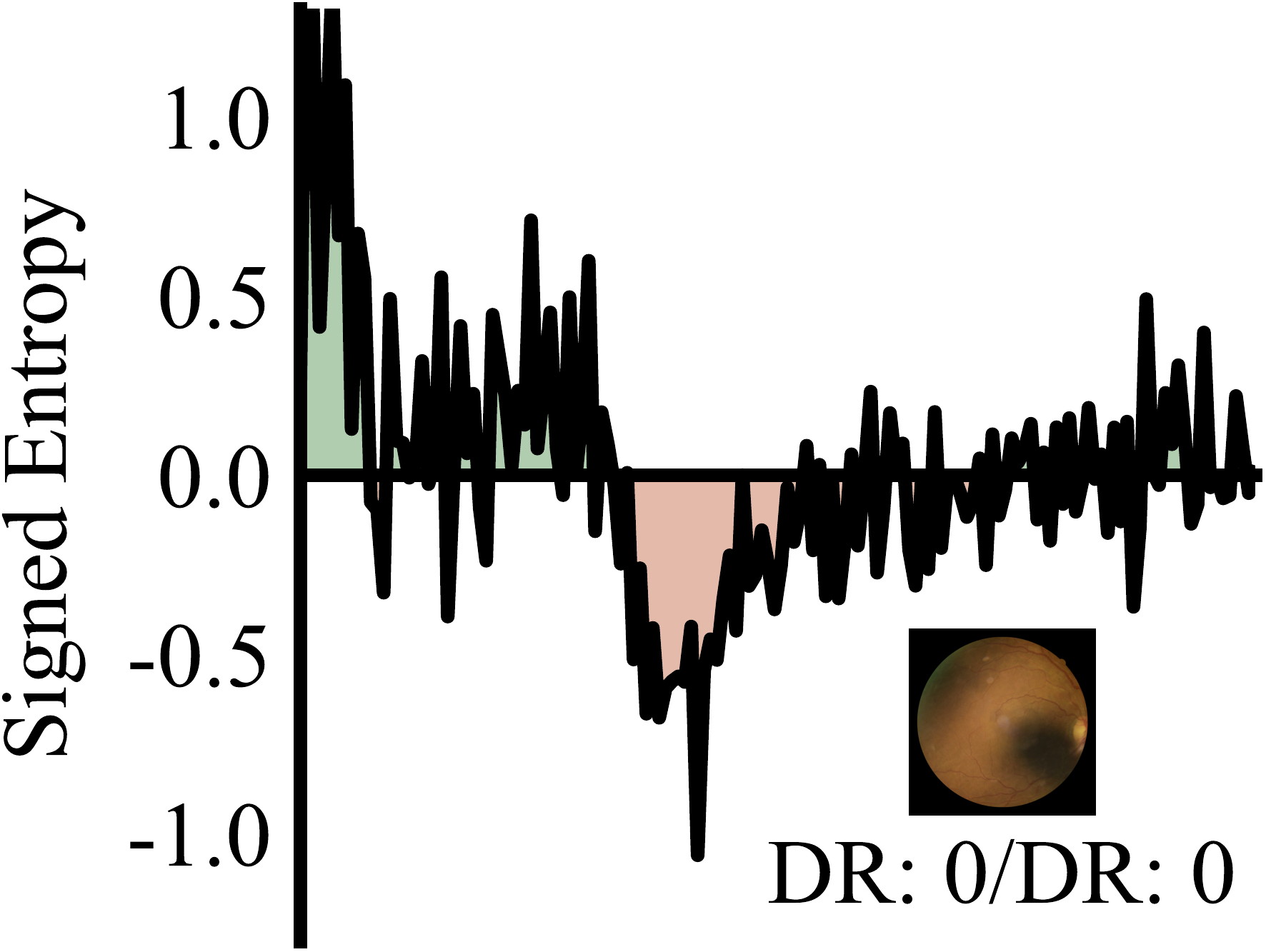}
\quad
\includegraphics[width=0.31\textwidth]{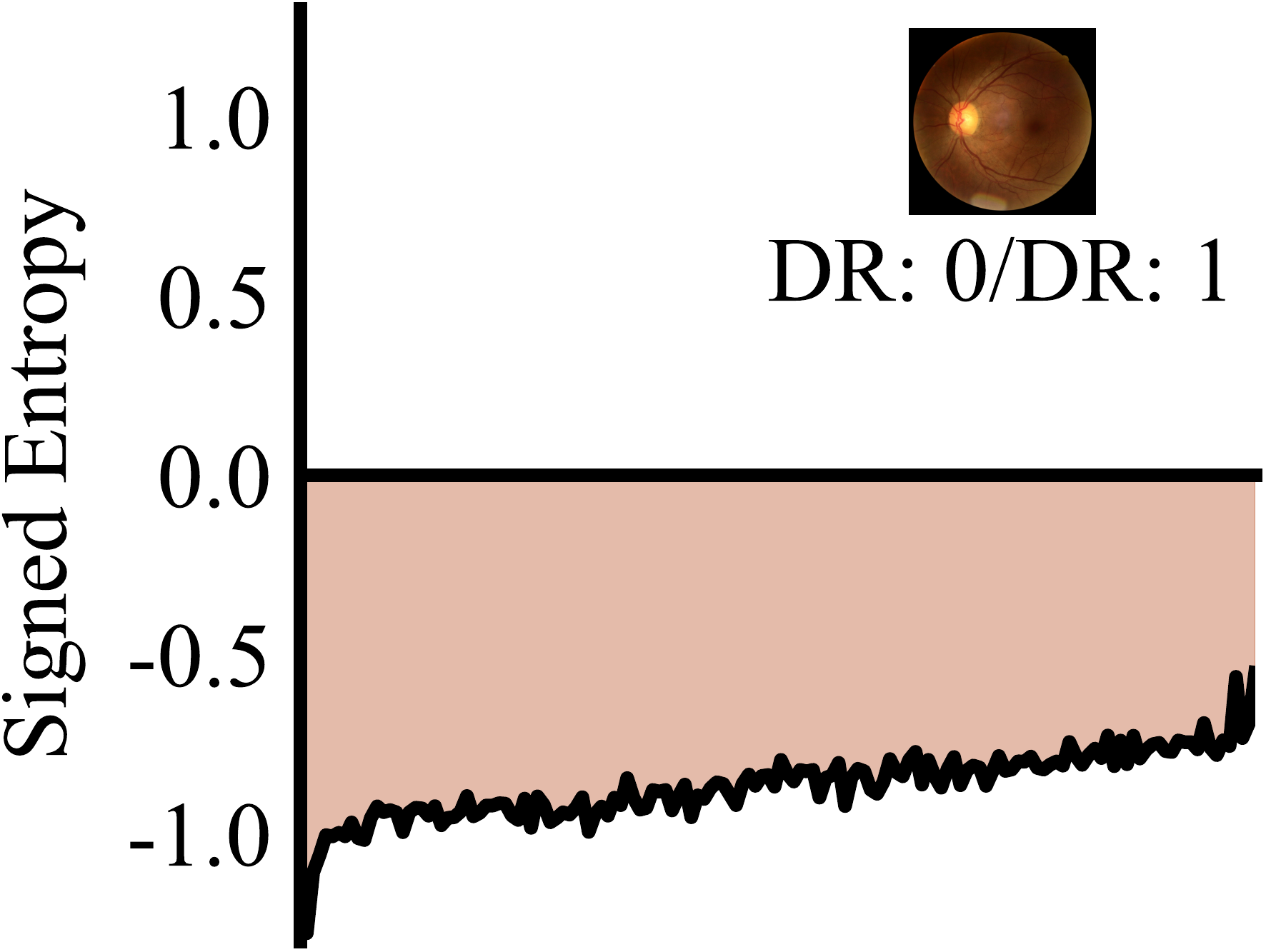}
\caption{Illustration of SEI using Grade 0 diabetic retinopathy (DR) images from the DeepDRiD dataset.}
\label{fig4_appendix:deepdrid}
\end{figure}

\begin{figure}[!t]
\centering
\includegraphics[width=0.31\textwidth]{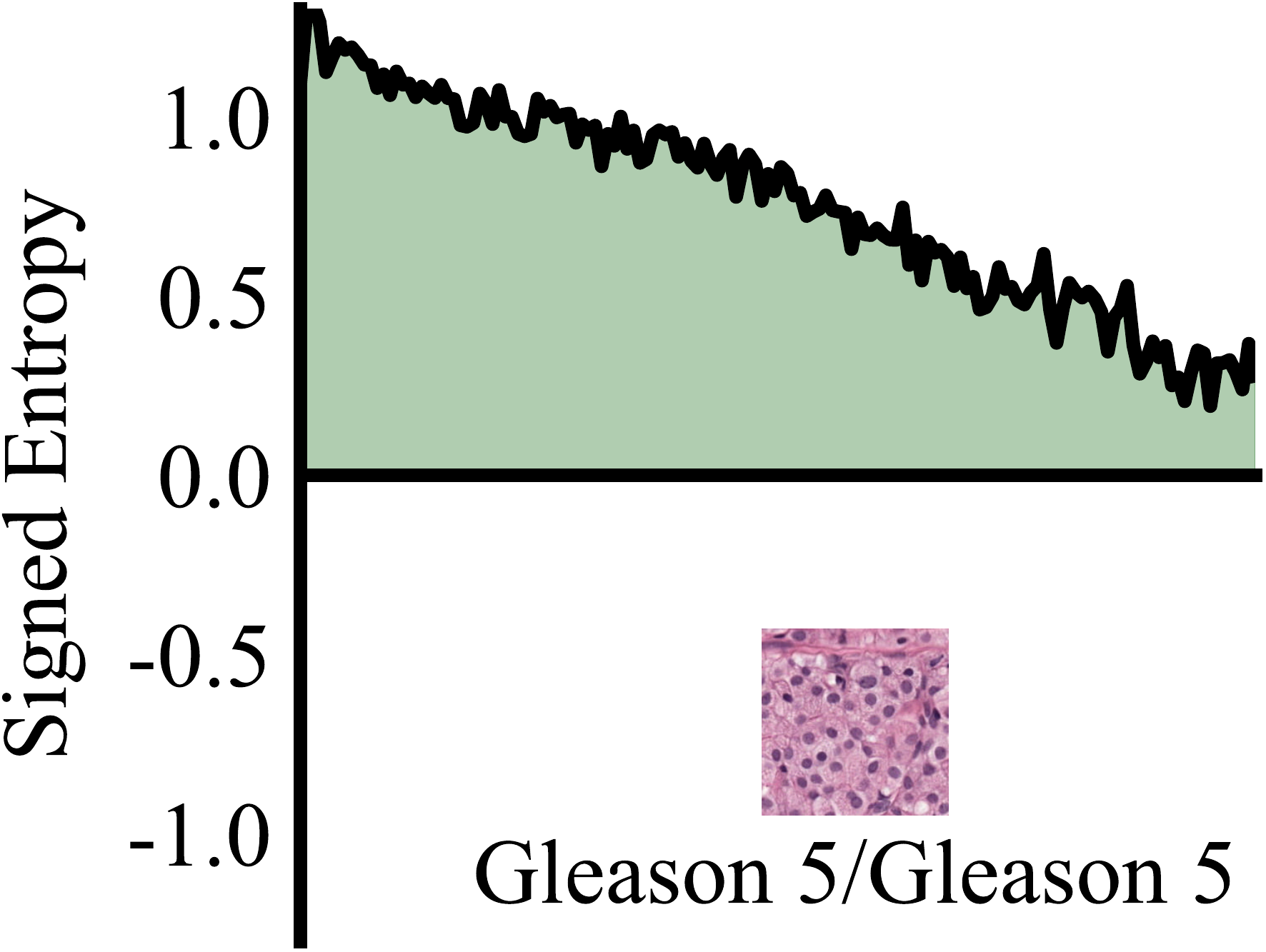}
\quad
\includegraphics[width=0.31\textwidth]{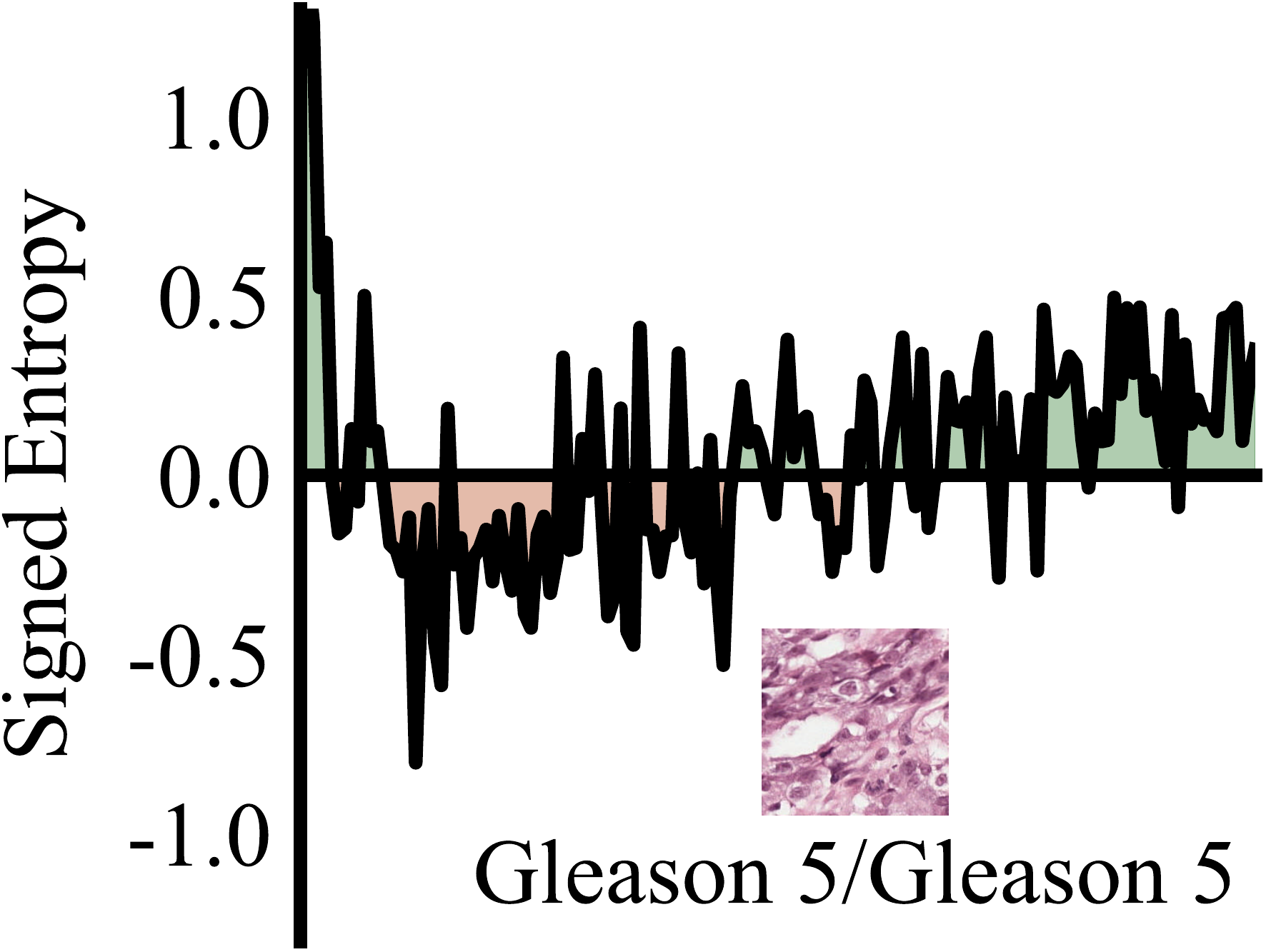}
\quad
\includegraphics[width=0.31\textwidth]{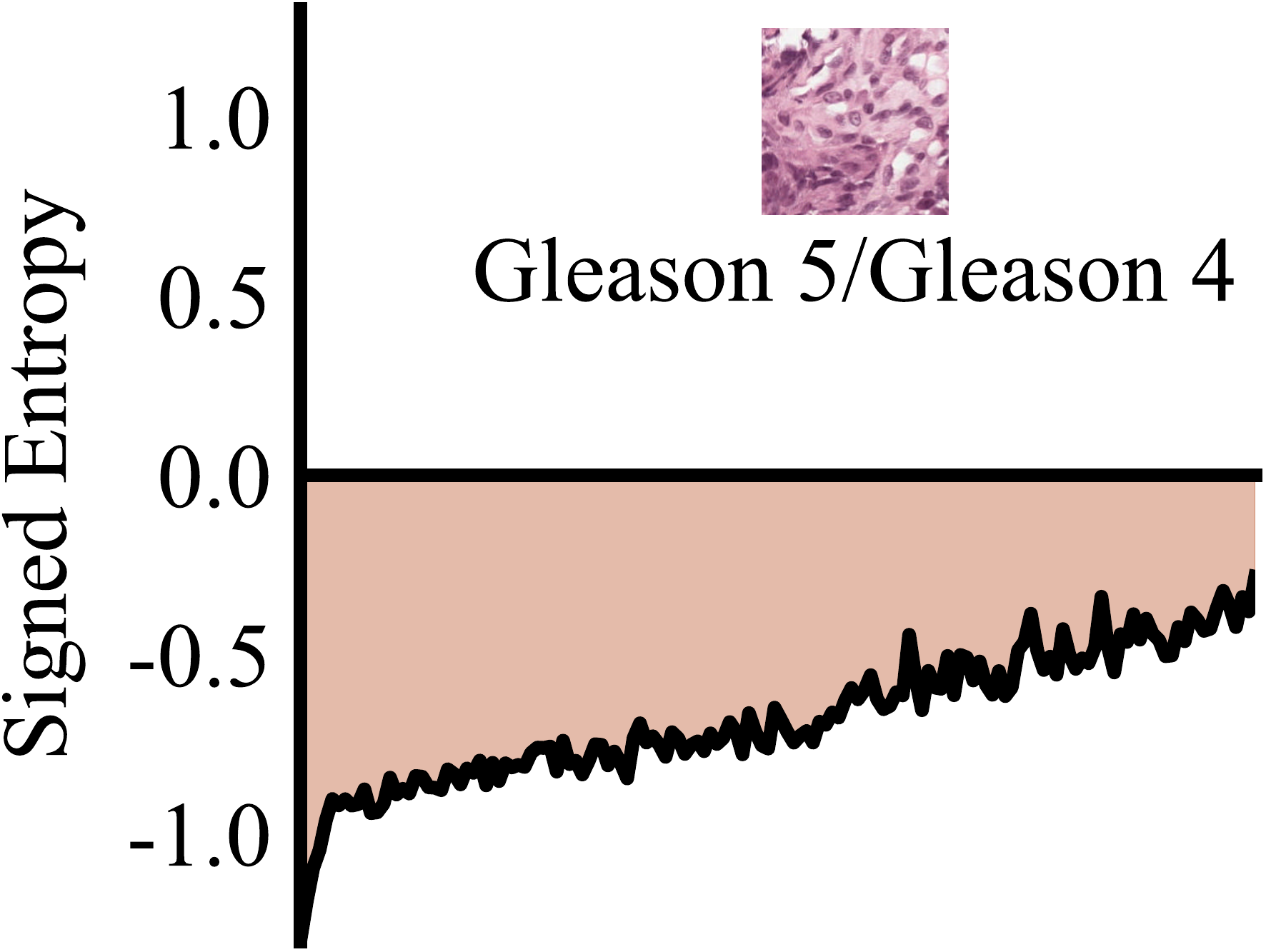}
\caption{Illustration of SEI using Gleason 5 images from the PANDA dataset.}
\label{fig4_appendix:panda}
\end{figure}

\section{Extended Evaluation of SEI for Mislabeled Sample Detection}
\label{sec:more examples sei}
We present additional empirical evidence demonstrating the effectiveness of SEI. Figures~\ref{fig4_appendix:isic}, \ref{fig4_appendix:deepdrid}, and \ref{fig4_appendix:panda} illustrate the discriminative power of SEI in separating different sample types within the ISIC, DeepDRiD, and PANDA datasets, respectively. The results consistently validate the theoretical framework outlined in Section~\ref{sec:sei}: samples with correct labels that are easily classified exhibit large positive SEI values, challenging but correctly labeled samples demonstrate moderate SEI values, while mislabeled samples consistently display strongly negative SEI values.

\section{Theoretical Properties of Signed Entropy}
In this section, we provide a short theoretical analysis of the proposed signed entropy (Eq.~\ref{eq:signed_entropy}). Recall that for $(\bm{x},y)\in \mathcal{D}_{\mathtt{train}}$ with posterior $\bm{p}(\bm{x})$, we define
\[
\mathcal{H}(\bm{p}(\bm{x}), y)
= (-1)^{\mathds{1}[y=\argmax_k p_k(\bm{x})]}
\sum_{k=1}^{K} p_k(\bm{x}) \log p_k(\bm{x}) \,.
\]

\subsection{Relation to Shannon Entropy}
\begin{proposition}[Reduction to Shannon Entropy]
If $y=\arg\max_k p_k(\bm{x})$, then
\[
\mathcal{H}(\bm{p}(\bm{x}), y) = H(\bm{p}(\bm{x})) \,,
\]
where $H(\bm{p})=-\sum_k p_k \log p_k$ is Shannon’s entropy.
\end{proposition}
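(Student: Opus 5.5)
The plan is to unfold the definition of signed entropy in Eq.~\ref{eq:signed_entropy} and evaluate the sign factor under the hypothesis. No earlier result beyond the definition is needed.

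First, I will fix the convention $0\log 0 = 0$, so that the sum $\sum_{k=1}^K p_k(\bm{x})\log p_k(\bm{x})$ is well defined even when the softmax in Eq.~\ref{eq:clip} assigns a probability of exactly zero. (In practice the softmax is strictly positive, so this is only a formality.)

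Second, under the hypothesis $y=\arg\max_k p_k(\bm{x})$, the indicator $\mathds{1}[y=\argmax_k p_k(\bm{x})]$ equals $1$. The sign factor is therefore $(-1)^{1}=-1$, and
\[
\mathcal{H}(\bm{p}(\bm{x}), y) = -\sum_{k=1}^{K} p_k(\bm{x})\log p_k(\bm{x}) = H(\bm{p}(\bm{x})),
\]
which is exactly Shannon's entropy as stated.

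The only point needing care, which I expect to be the sole (minor) obstacle, is ties in the $\arg\max$. When several classes attain the maximum probability, I will read both the definition and the hypothesis with the same tie-breaking rule, so that "$y=\arg\max_k p_k$" means that the selected maximizer is $y$. The indicator is then $1$ by construction and the computation above goes through unchanged. I will state this convention explicitly so that the hypothesis of the proposition and the indicator in Eq.~\ref{eq:signed_entropy} refer to the same event.
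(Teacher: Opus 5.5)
Your proposal is correct and follows the paper's argument: under the hypothesis the indicator equals $1$, so the sign factor is $(-1)^1=-1$ and the definition reduces to $-\sum_k p_k(\bm{x})\log p_k(\bm{x}) = H(\bm{p}(\bm{x}))$. Your remarks on the $0\log 0$ convention and on using one tie-breaking rule for the $\argmax$ in both the hypothesis and the indicator are harmless refinements that the paper leaves implicit.
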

\begin{proof}
By definition, the sign exponent equals $(-1)^1=-1$ when the prediction agrees with $y$, yielding the standard Shannon entropy.
\end{proof}

\subsection{Concavity and Sign Symmetry}
\begin{proposition}[Concavity up to Sign]
Let $\mathcal{P}$ denote the probability simplex in $\mathbb{R}^K$. For a fixed label $y$, the signed entropy $\mathcal{H}(\cdot,y)$ is concave on $\mathcal{P}$ if $y$ matches the prediction, and is convex on $\mathcal{P}$ if $y$ disagrees with the prediction.
\end{proposition}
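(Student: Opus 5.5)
The statement should be read as a claim about the two branches of the piecewise definition. The sign indicator $\mathds{1}[y=\argmax_k p_k]$ depends on $\bm{p}$, so $\mathcal{H}(\cdot,y)$ is not a single smooth function on all of $\mathcal{P}$. Taken literally, it is neither concave nor convex on the whole simplex, since it jumps across the boundaries where the argmax changes. I will therefore interpret the proposition the way the paper's wording suggests. On the agreement region $\mathcal{A}_y=\{\bm{p}\in\mathcal{P}: y=\argmax_k p_k\}$, the function coincides with $H$. On its complement it coincides with $-H$. So the claim is that the branch formula in use is concave (respectively convex) as a function on all of $\mathcal{P}$. I would state this interpretation explicitly at the start, and add a remark that $\mathcal{A}_y$ is convex, so the restriction to the agreement region is genuinely concave there.

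The first step is the concavity of Shannon entropy $H(\bm{p})=-\sum_k p_k\log p_k$ on $\mathcal{P}$ (with $0\log 0=0$). The function $\phi(u)=u\log u$ is convex on $[0,\infty)$, because $\phi''(u)=1/u>0$ for $u>0$ and $\phi$ is continuous at $0$. A sum of convex functions of the separate coordinates is convex on $\mathbb{R}_{\ge 0}^K$, hence on the convex subset $\mathcal{P}$. Therefore $\sum_k p_k\log p_k$ is convex, and its negation $H$ is concave. On the interior one can verify this directly: the Hessian of $H$ is $-\mathrm{diag}(1/p_1,\dots,1/p_K)\preceq 0$.

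The second step splits into the two cases. If $y$ matches the prediction, the preceding Reduction to Shannon Entropy proposition gives $\mathcal{H}(\bm{p},y)=H(\bm{p})$, which is concave. If $y$ disagrees, the exponent is $0$ and $\mathcal{H}(\bm{p},y)=\sum_k p_k\log p_k=-H(\bm{p})$. This is the negative of a concave function, hence convex. The same formulas show the sign symmetry $\mathcal{H}=\pm H$, with magnitude $H(\bm{p})$ in both cases.

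The main obstacle is not any calculation but the well-definedness issue noted above. I would handle it with the explicit remark on $\mathcal{A}_y$ and its complement. The complement is not convex in general, so there convexity is meaningful only for the branch formula $-H$ extended to all of $\mathcal{P}$, not for the restricted function on a nonconvex domain. Ties in the argmax can be broken by any fixed rule without affecting either case.
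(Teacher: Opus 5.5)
Your proposal is correct and follows the same route as the paper: Shannon entropy is concave on $\mathcal{P}$, and negating it in the misaligned case turns it convex. Your reading of the statement as a claim about the two branch formulas $H$ and $-H$ is also right to state explicitly: the sign depends on $\bm{p}$, and $\mathcal{H}(\cdot,y)$ jumps across argmax boundaries, a point the paper's one-line proof leaves implicit when it says the sign ``depends only on alignment.''
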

\begin{proof}
The Shannon entropy $H(\bm{p})$ is strictly concave on $\mathcal{P}$ (classical result). Multiplying by $-1$ flips concavity to convexity. Since the sign of $\mathcal{H}$ depends only on alignment, the stated property follows.
\end{proof}

\subsection{Implication for SEI}
These properties imply that SEI (Eq.~\ref{eq:sei}) can be interpreted as a signed, temporally averaged measure of prediction uncertainty. Its sign encodes long-term label alignment, while its magnitude captures how confidently the model reaches this alignment (or misalignment). This dual role is what enables SEI to separate mislabeled data from both easy and hard clean samples.

\section{Additional Ablation Results}

\subsection{Visual Evidence for the Signed Term}
\label{sec:signed}
We further visualize score distributions for clean and mislabeled samples under both formulations (see Figure~\ref{app_fig5}). The unsigned variant exhibits heavily overlapping positive-only distributions, making separation difficult. In contrast, SEI introduces a clear negative tail for mislabeled samples, creating a bimodal structure with reduced overlap and more distinguishable groups.

\begin{figure}[ht]
\centering
\includegraphics[width=0.19\textwidth]{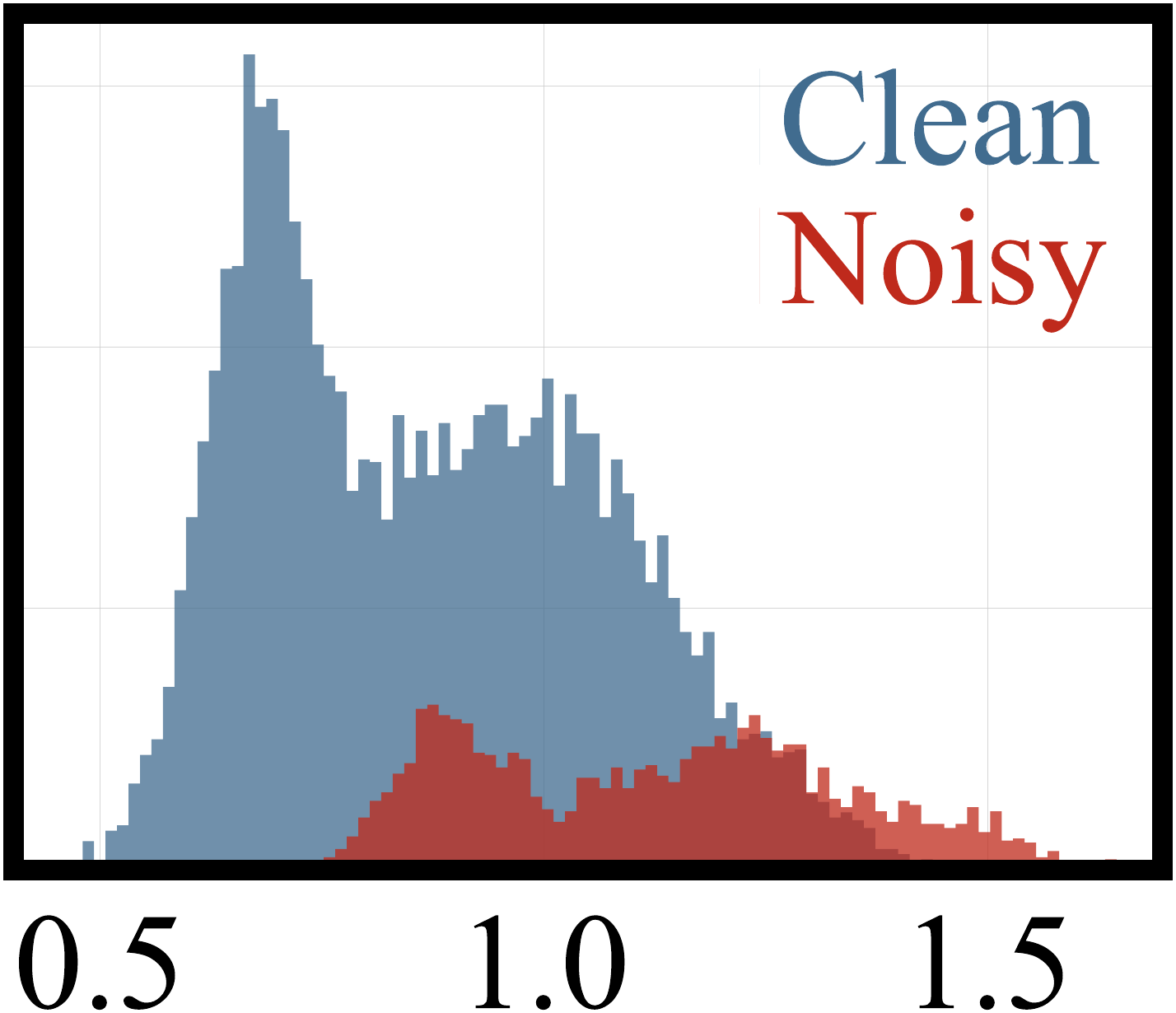}
\thinspace
\includegraphics[width=0.19\textwidth]{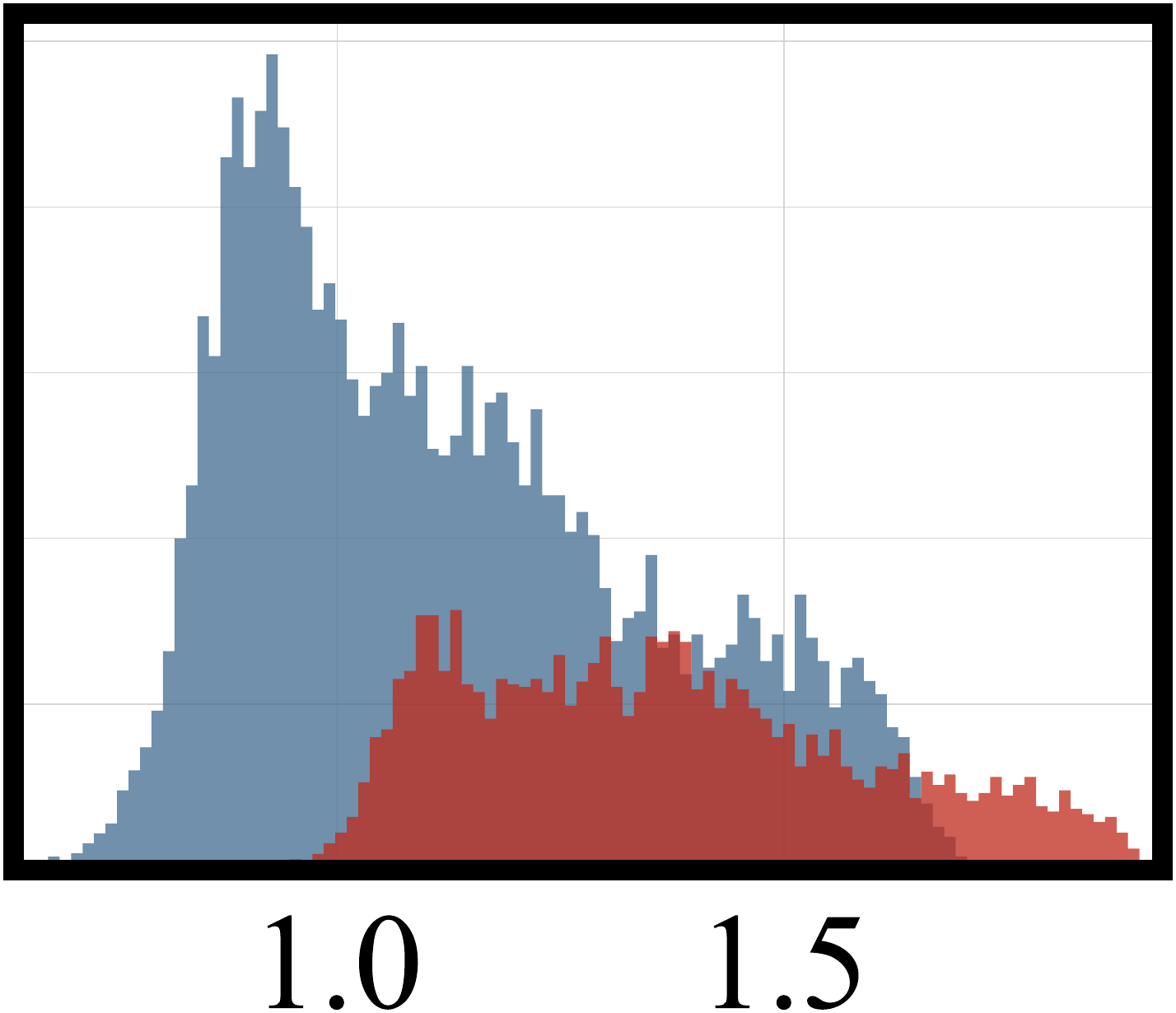}
\thinspace
\includegraphics[width=0.19\textwidth]{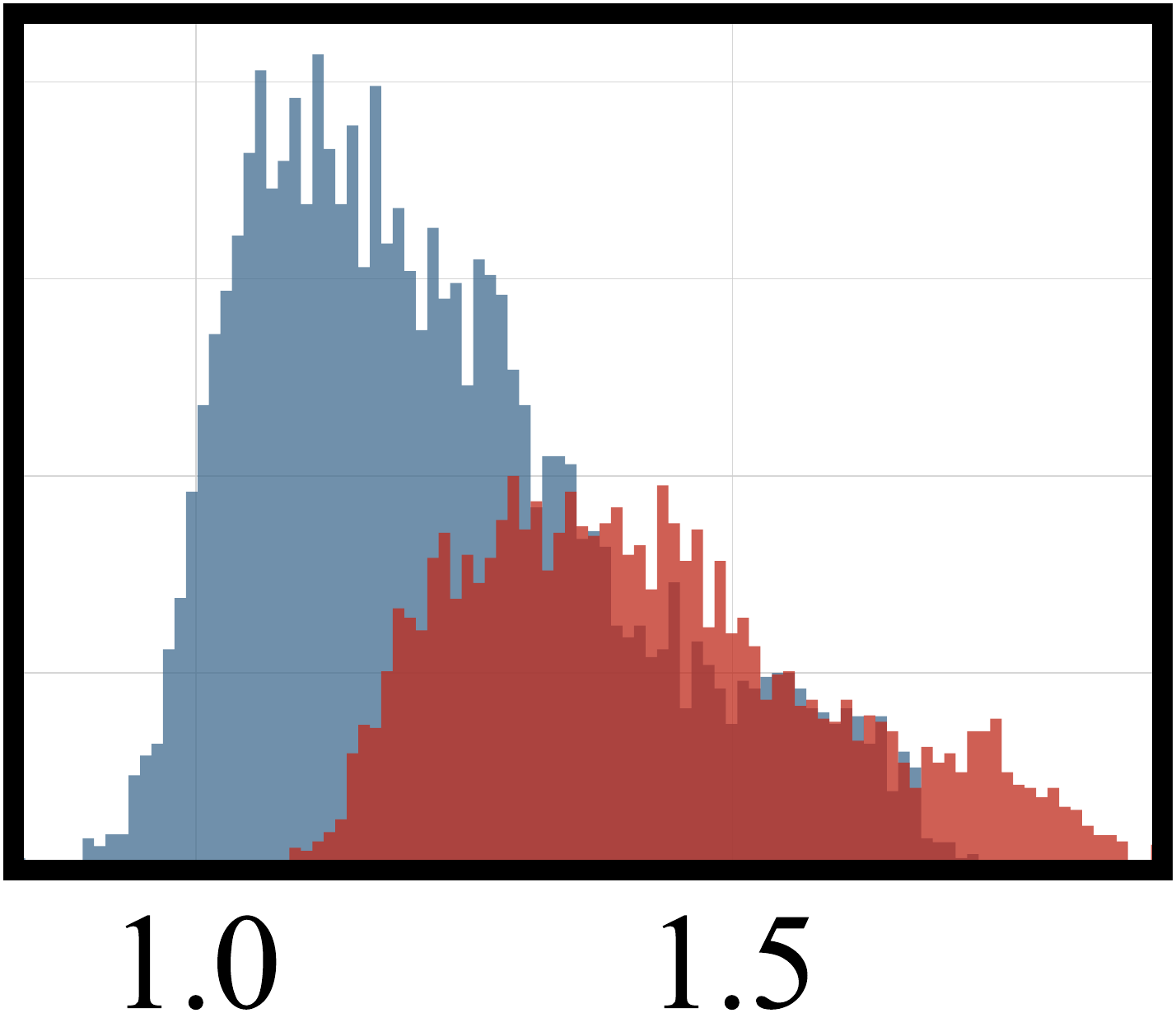}
\thinspace
\includegraphics[width=0.19\textwidth]{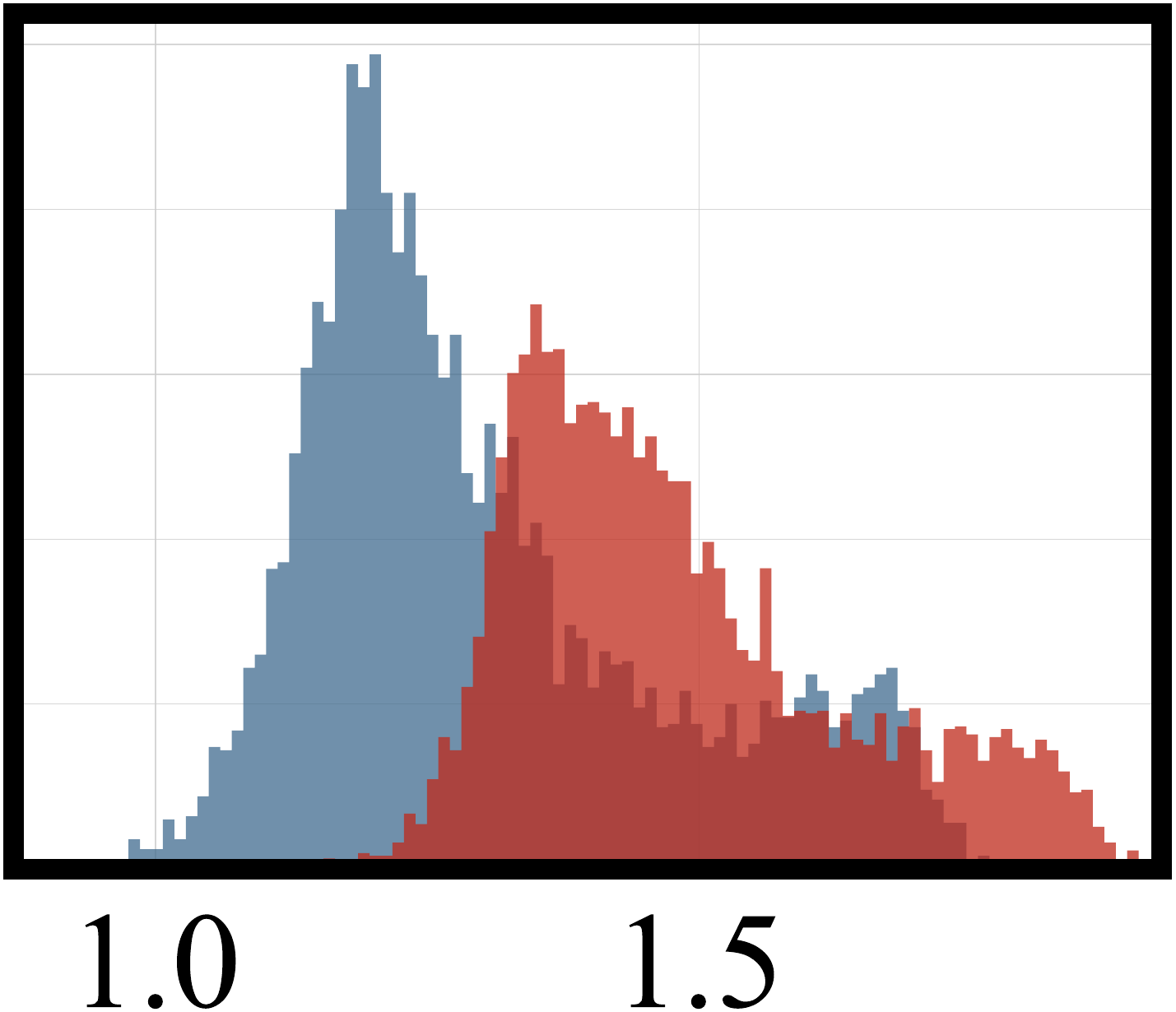}
\thinspace
\includegraphics[width=0.19\textwidth]{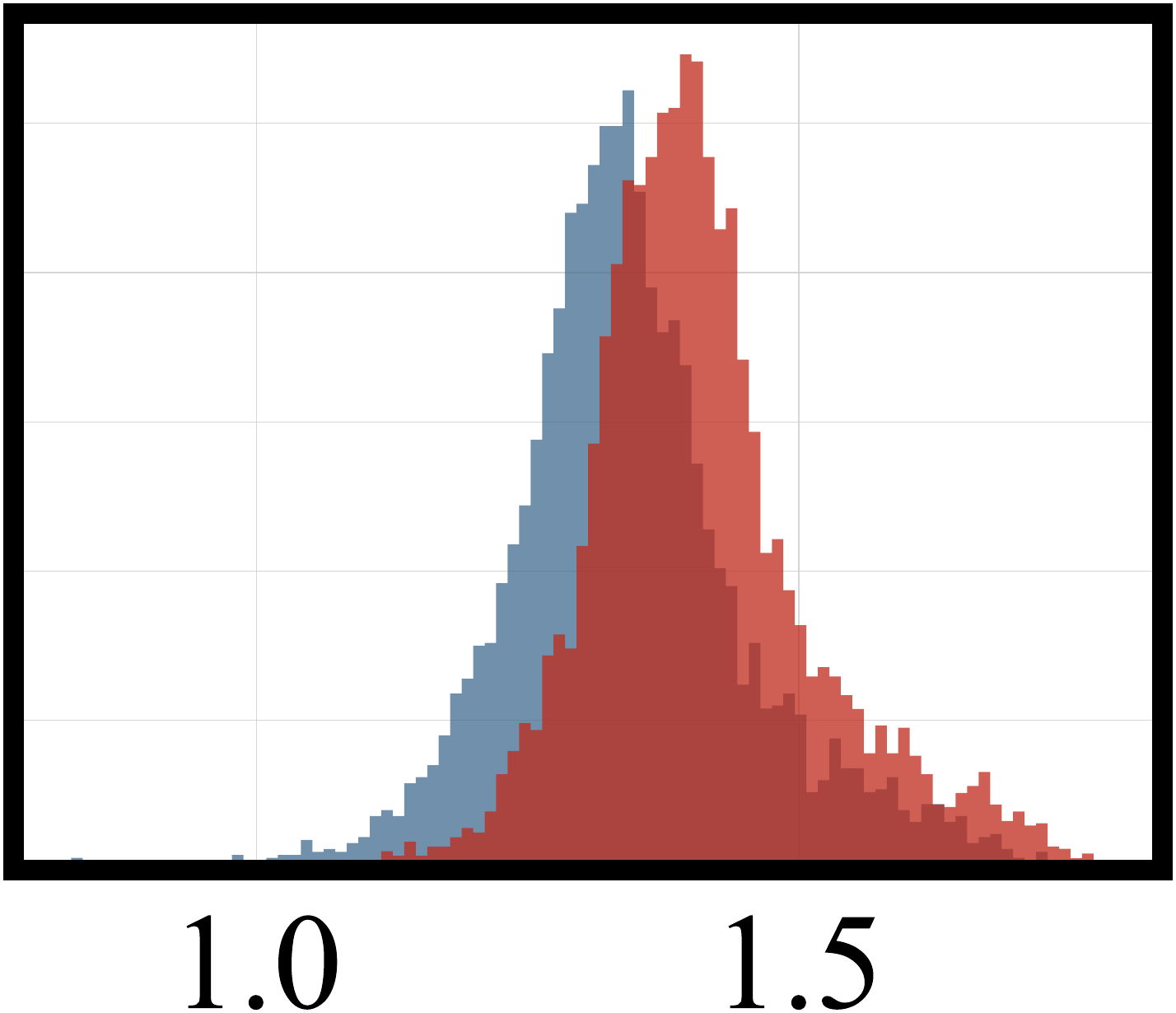} \\
\vspace{0.25em}
\includegraphics[width=0.19\textwidth]{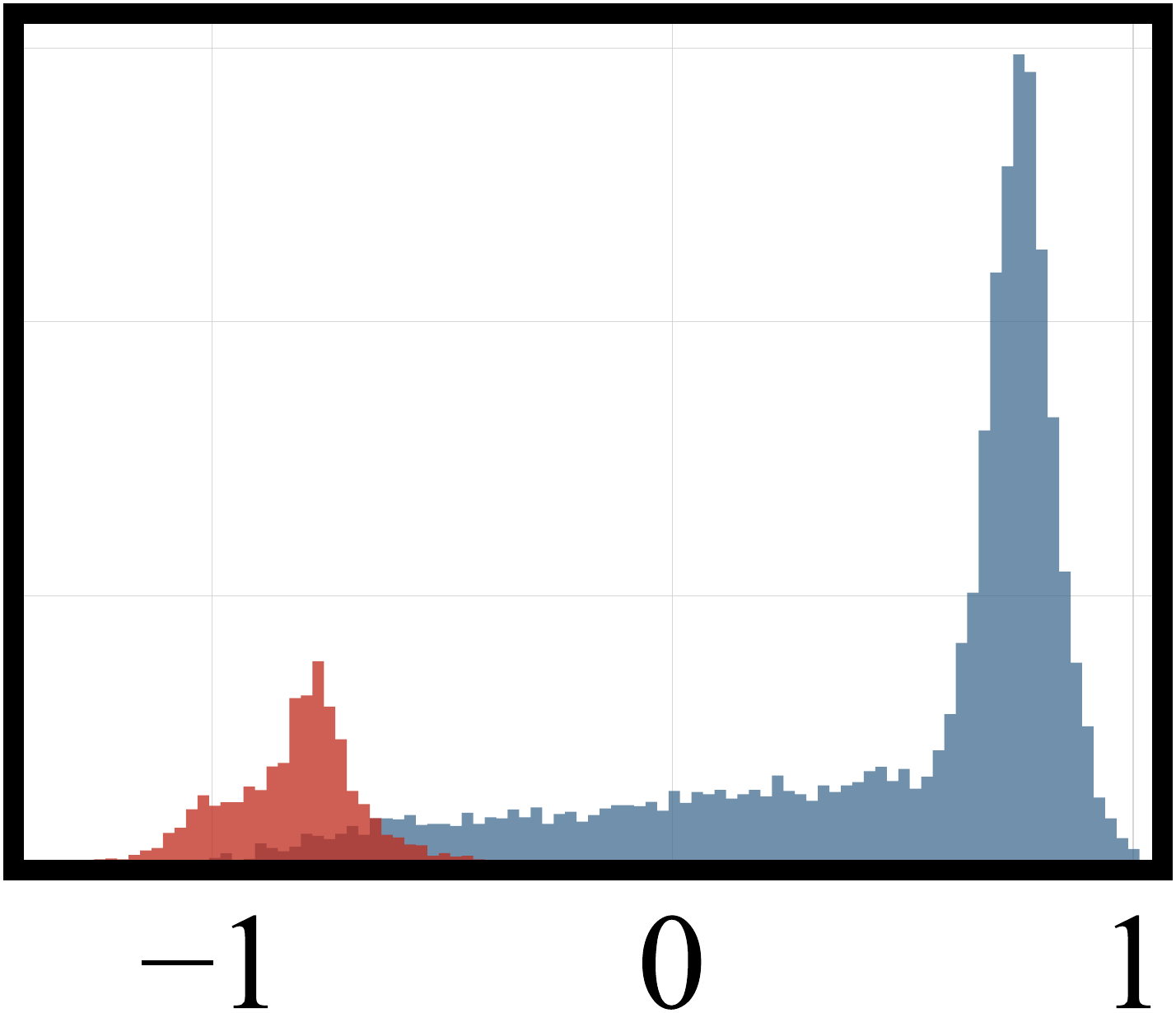}
\thinspace
\includegraphics[width=0.19\textwidth]{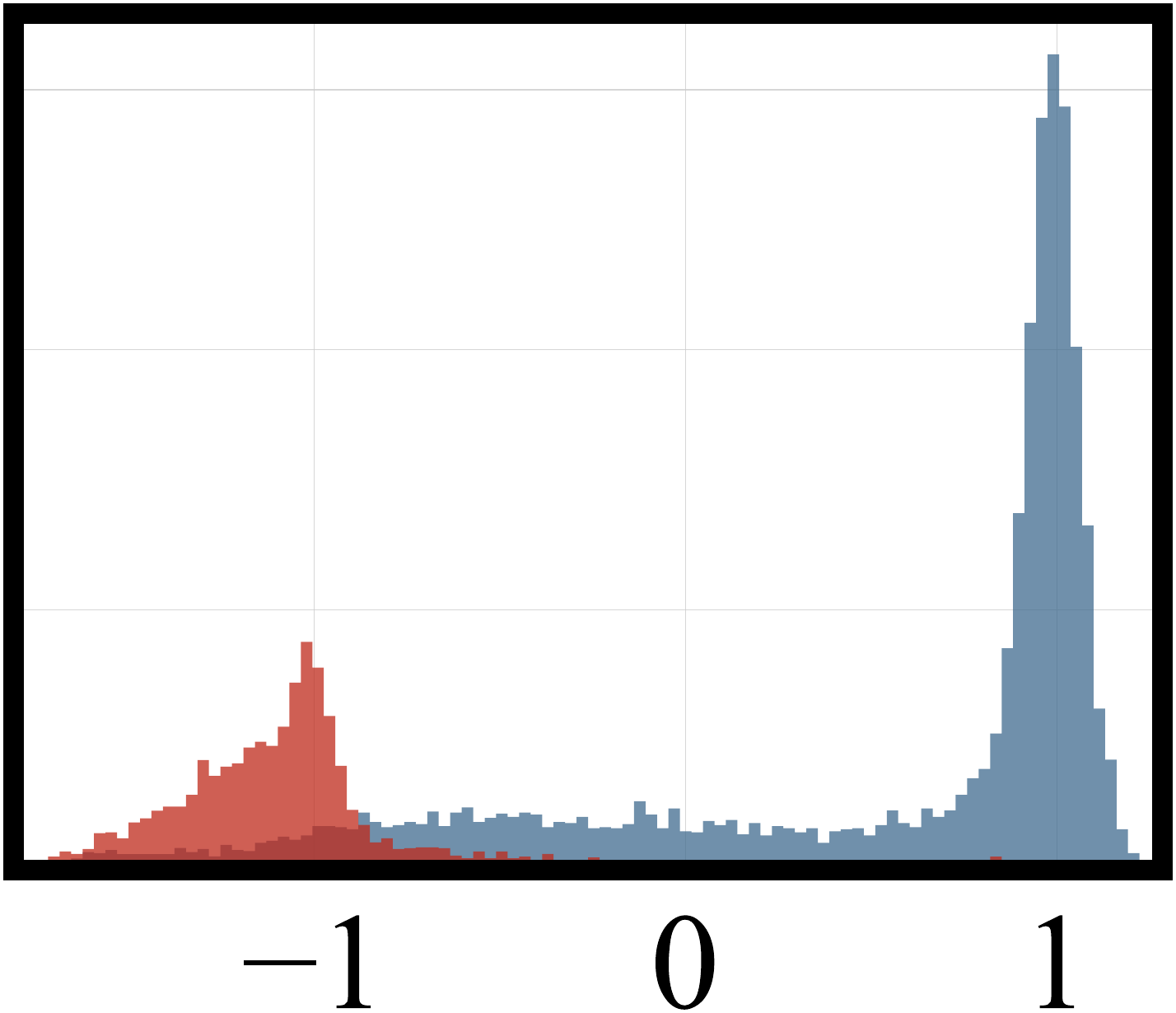}
\thinspace
\includegraphics[width=0.19\textwidth]{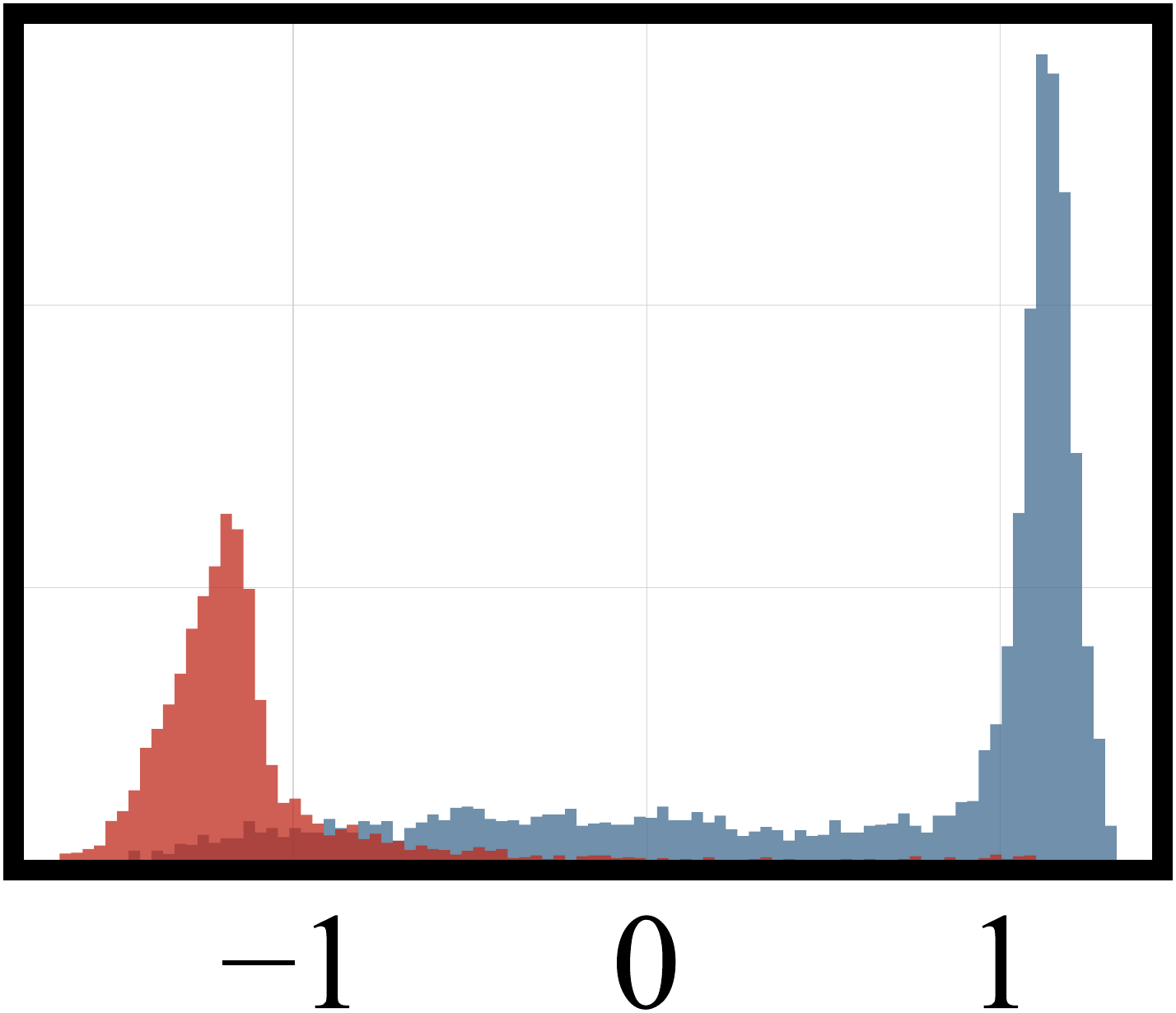}
\thinspace
\includegraphics[width=0.19\textwidth]{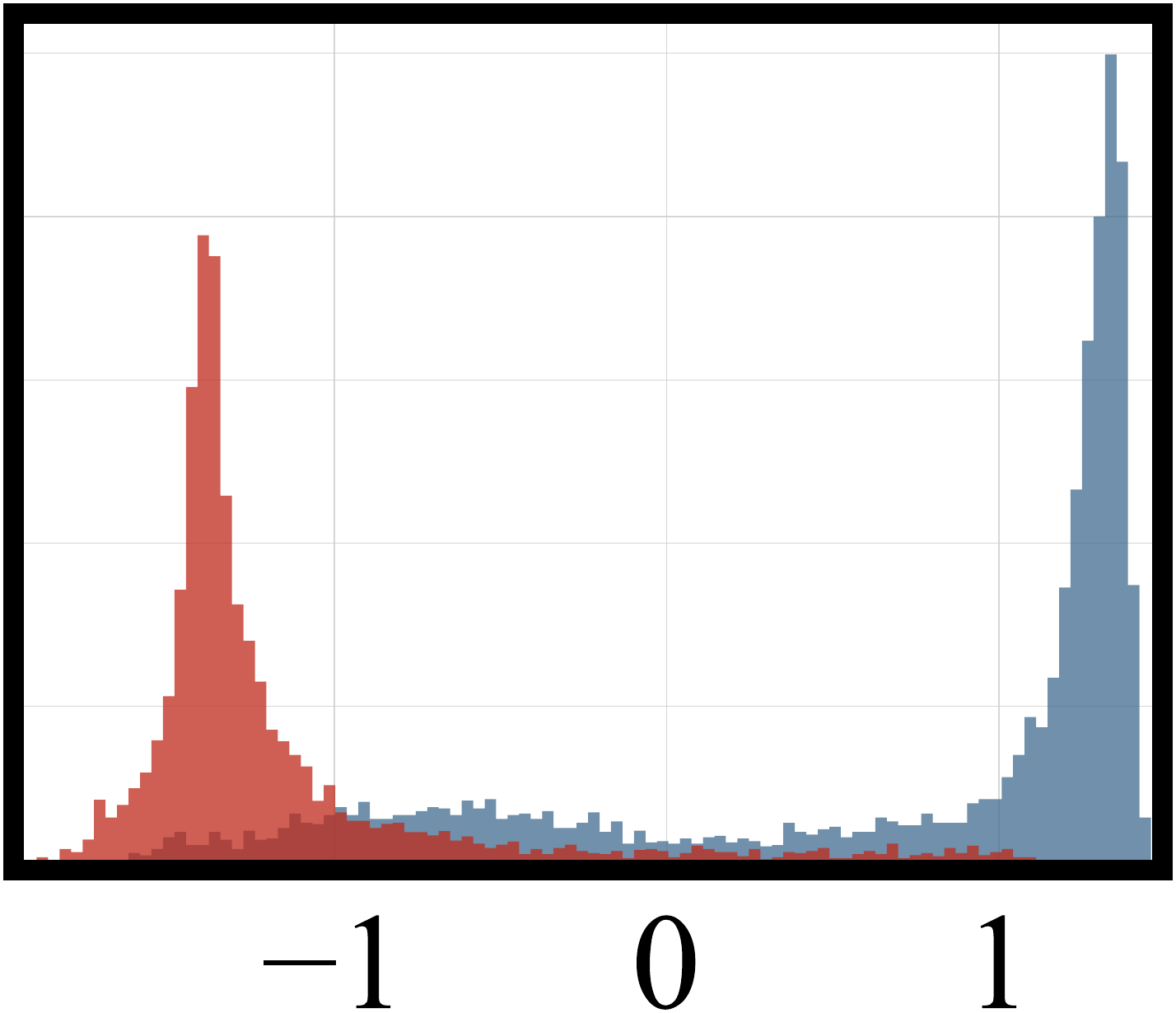}
\thinspace
\includegraphics[width=0.19\textwidth]{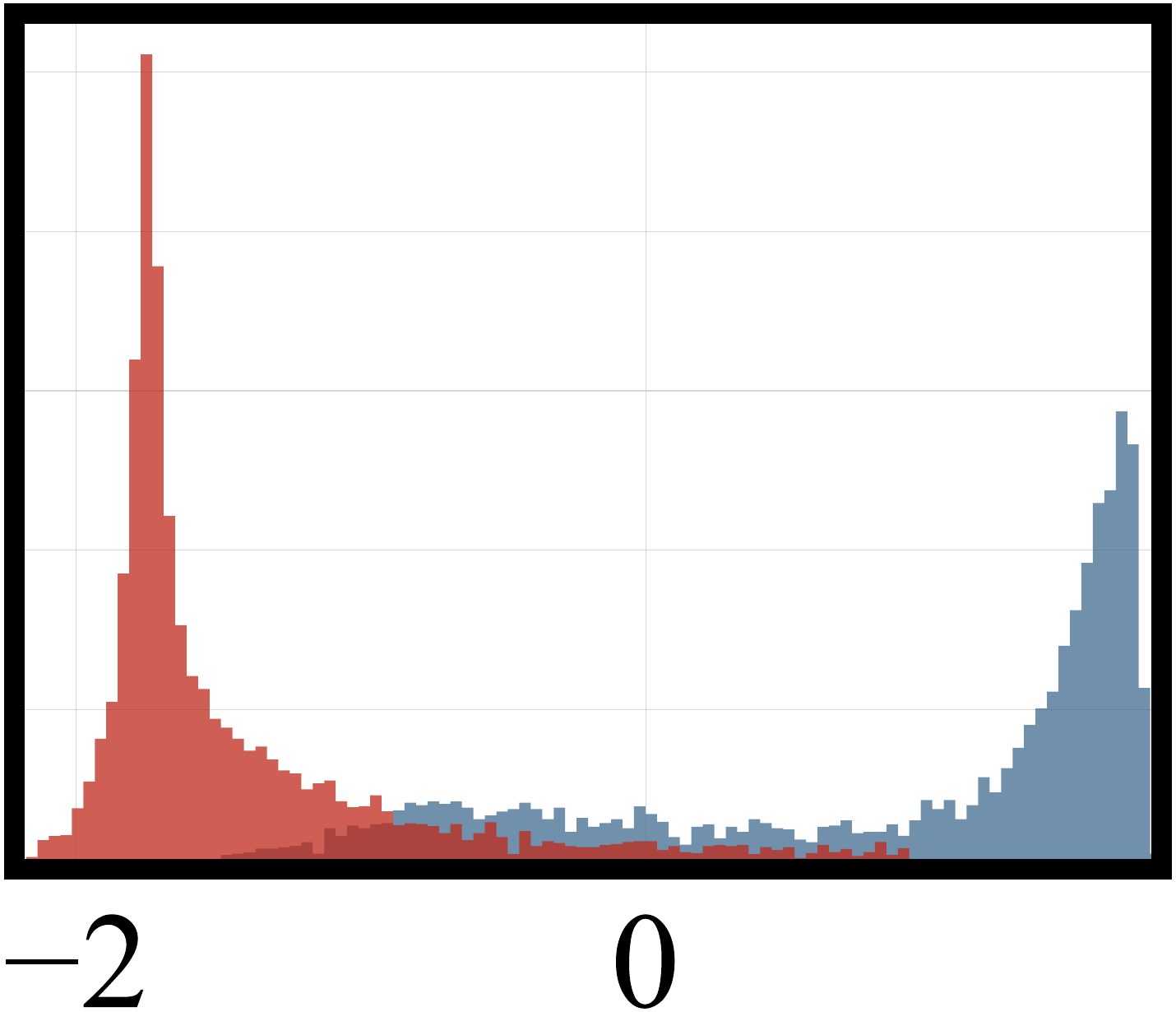}
\caption{Score distributions for clean and noisy samples on the PANDA dataset. We compare our proposed SEI statistic (bottom row) against the unsigned Shannon entropy integral baseline (top row) for noise rates $\eta\in\{0.1,0.2,0.3,0.4,0.5\}$. Each column corresponds to increasing noise levels from left to right. The SEI statistic demonstrates better separation between clean and noisy sample distributions at all noise levels.}
\label{app_fig5}
\end{figure}

\subsection{Full-Trajectory Integration vs. Windowed Integrals}
\label{sec:windows}
Table~\ref{tab:ab_app} shows that both windowed variants (SEI@Early and SEI@Late) perform worse than the full-trajectory SEI. Restricting to either an early or late window discards complementary cues present in the other phase.

\begin{table}[!t]
\Large
\centering
\caption{Evaluation of windowed integrals---SEI@Early (epochs 1–75) and SEI@Late (epochs 76–150)---compared to the full SEI.}
\resizebox{\textwidth}{!}{
\begin{tabular}{l|ccccc|ccccc|ccccc}
\toprule
 & \multicolumn{5}{c|}{\textbf{ISIC}} & \multicolumn{5}{c|}{\textbf{DeepDRiD}} & \multicolumn{5}{c}{\textbf{PANDA}} \\
 & 0.1 & 0.2 & 0.3 & 0.4 & 0.5 & 0.1 & 0.2 & 0.3 & 0.4 & 0.5 & 0.1 & 0.2 & 0.3 & 0.4 & 0.5\\
\midrule
SE@Early & 44.88  & 56.15  & 63.50  & 69.88  & 73.80  & 36.05  & 42.69  & 54.30  & 60.63  & 66.48  & 68.01  & 70.89  & 75.05  & 76.36  & 76.11 \\
SE@Late & 37.98  & 51.23  & 62.57  & 65.77  & 69.56  & 41.11  & 47.08  & 58.81  & 63.96  & 70.81  & 70.46  & 73.50  & 77.10  & 77.91  & 78.66 \\
SEI & {\color[HTML]{000000} \textbf{47.75}}  & {\color[HTML]{000000} \textbf{59.35}}  & {\color[HTML]{000000} \textbf{67.25}}  & {\color[HTML]{000000} \textbf{74.98}}  & {\color[HTML]{000000} \textbf{79.91}}  & {\color[HTML]{000000} \textbf{46.59}}  & {\color[HTML]{000000} \textbf{52.65}}  & {\color[HTML]{000000} \textbf{62.84}}  & {\color[HTML]{000000} \textbf{68.35}}  & {\color[HTML]{000000} \textbf{73.04}}  & {\color[HTML]{000000} \textbf{73.17}} & {\color[HTML]{000000} \textbf{78.21}} & {\color[HTML]{000000} \textbf{81.86}} & {\color[HTML]{000000} \textbf{81.96}} & {\color[HTML]{000000} \textbf{81.85}} \\
\bottomrule
\end{tabular}
}
\label{tab:ab_app}
\end{table}

\begin{figure}[!h]
\centering
\includegraphics[width=0.225\textwidth]{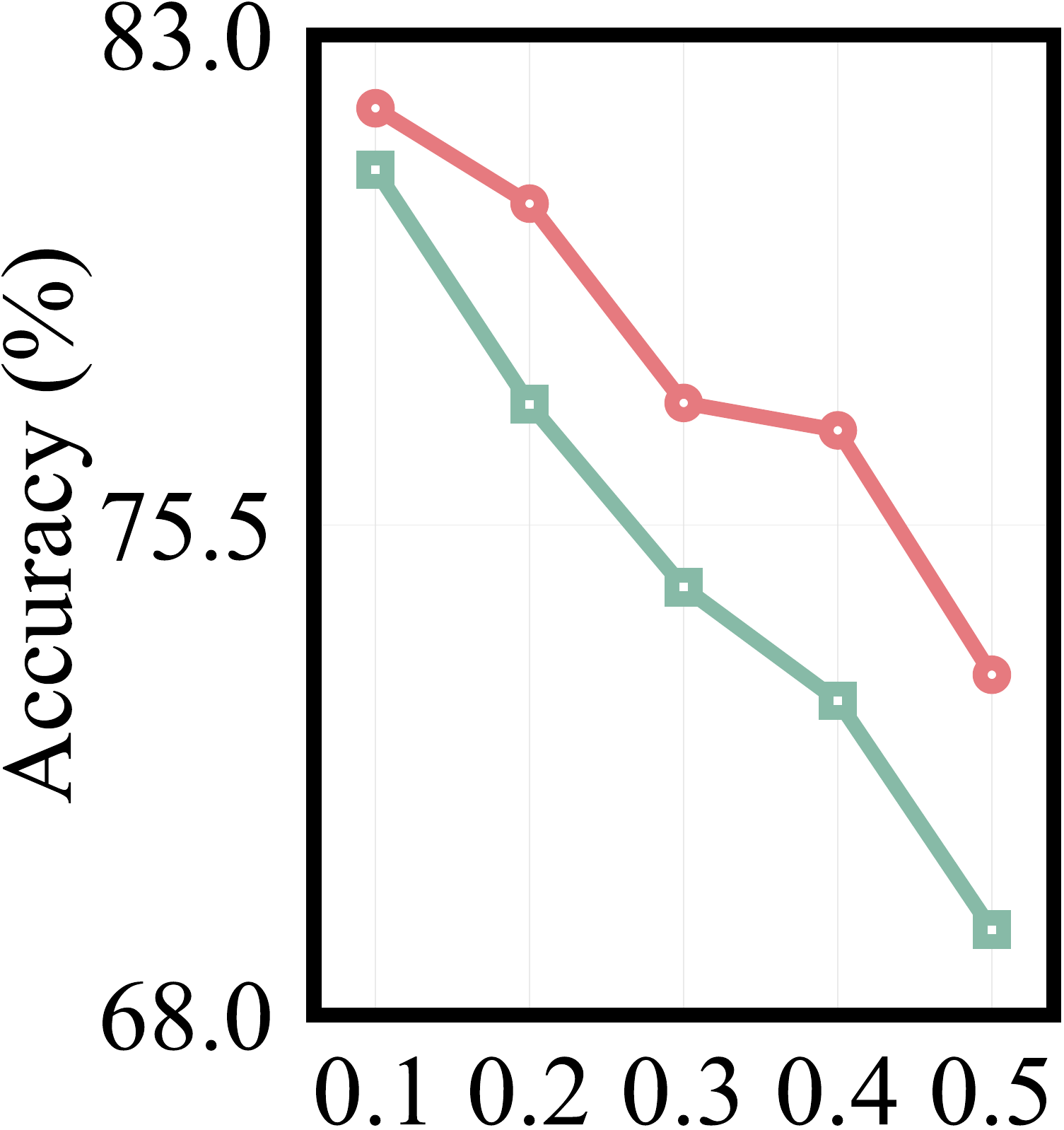}
\quad
\includegraphics[width=0.225\textwidth]{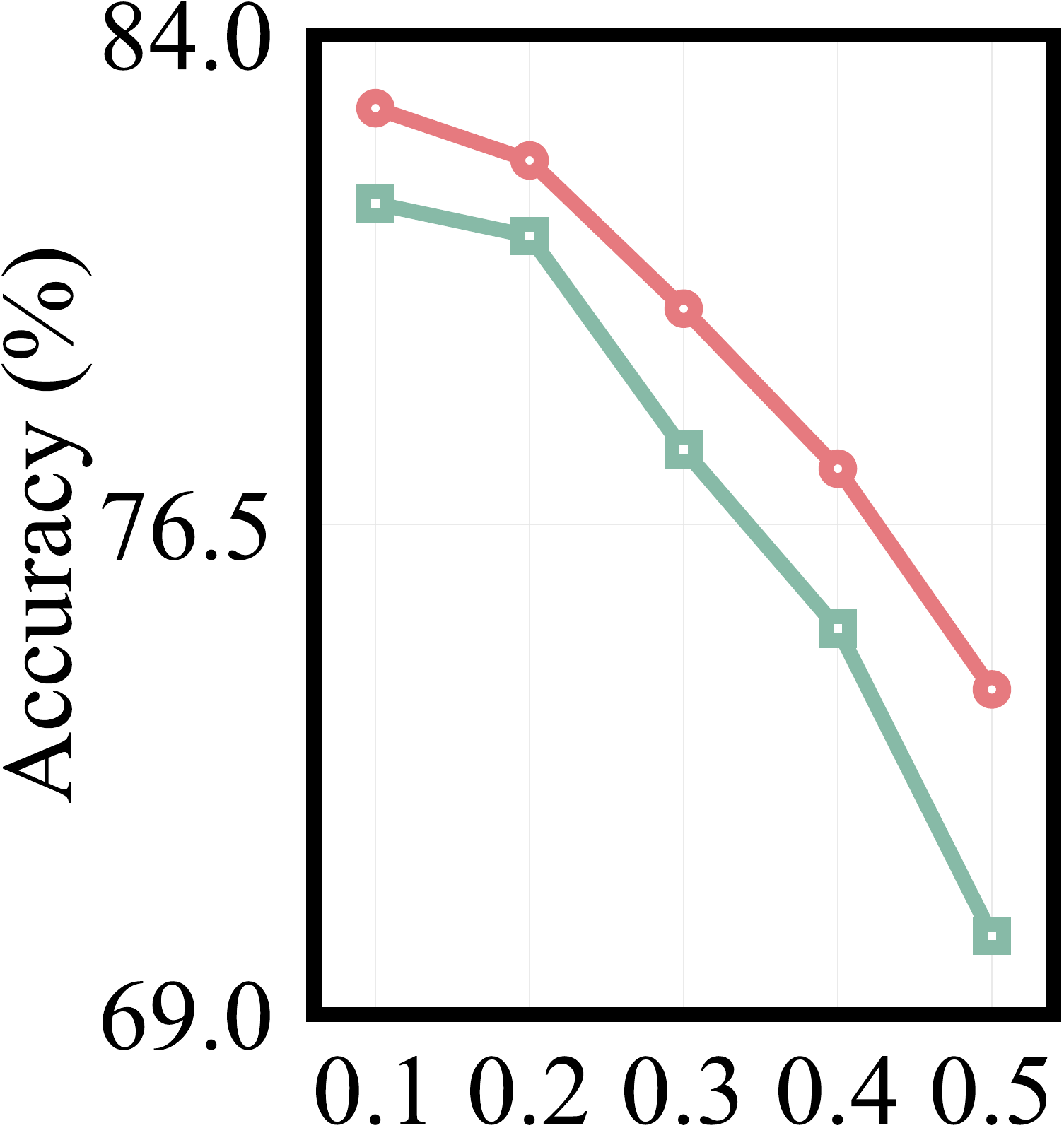}
\quad
\includegraphics[width=0.225\textwidth]{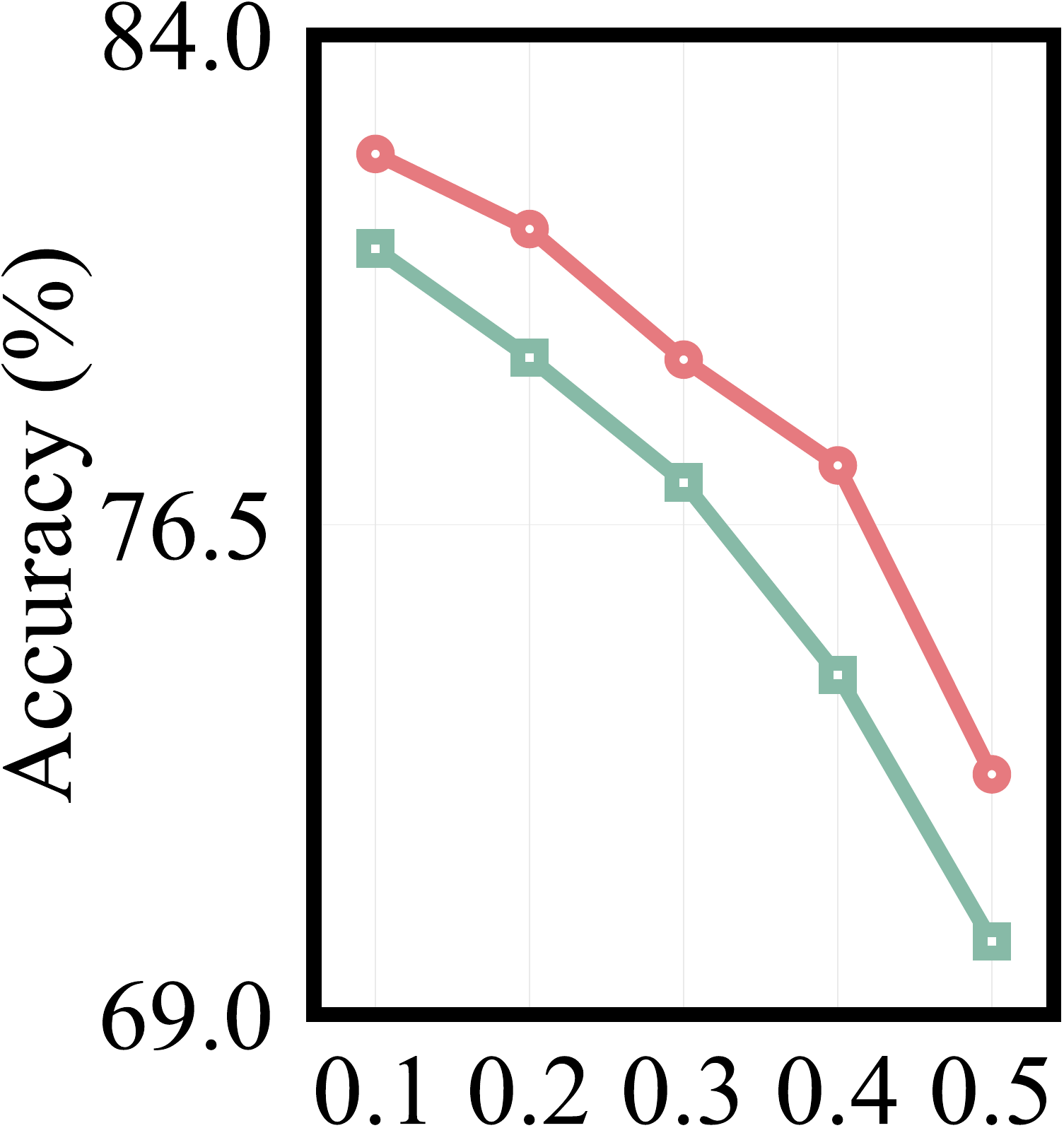}
\quad
\includegraphics[width=0.225\textwidth]{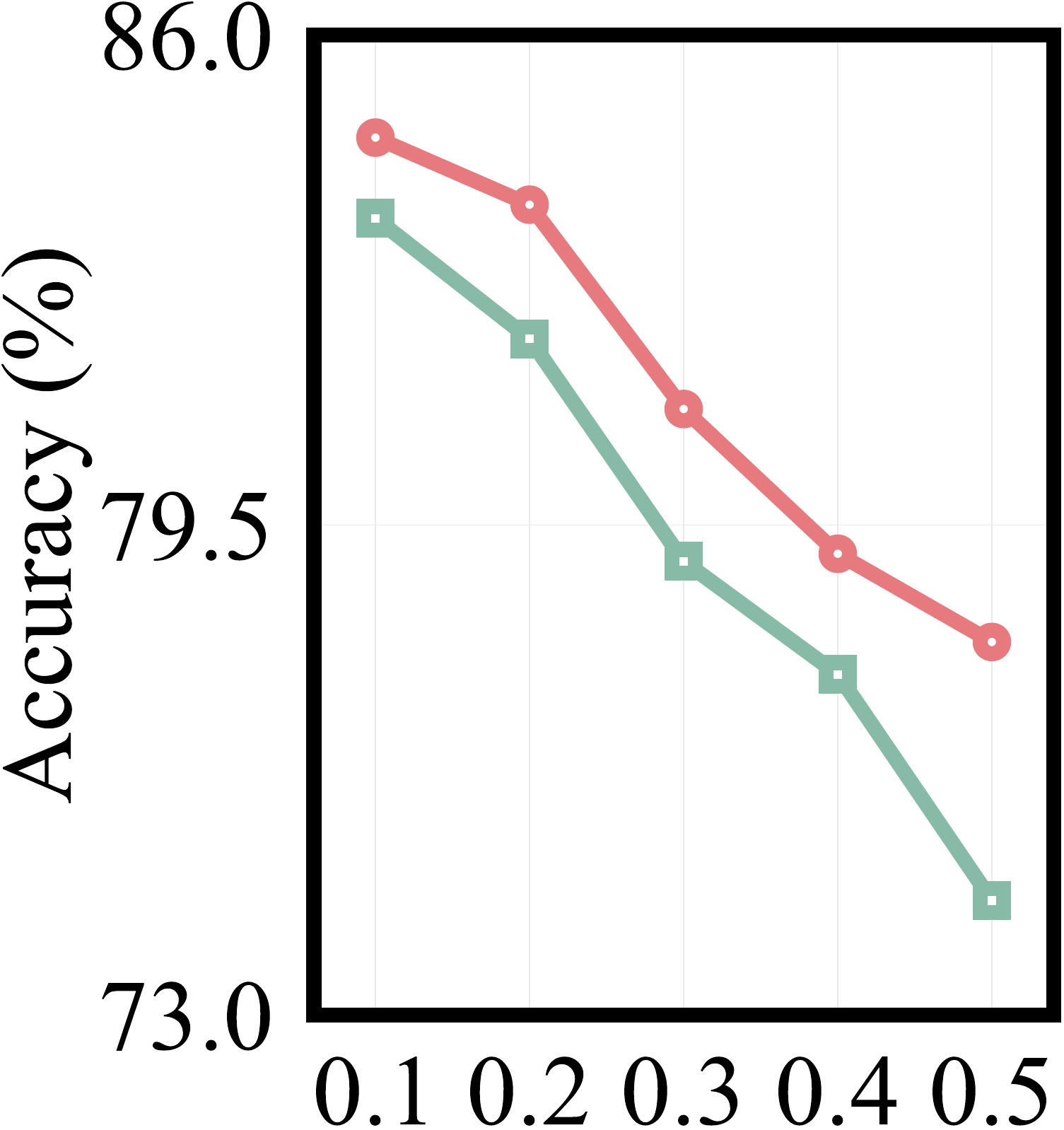}
\caption{Accuracy comparison under confusion-calibrated noise between baseline noisy label learning methods (green) and their SEI-enhanced variants (red). From left to right: SCE, M-correction, DivideMix, and ProMix.}
\label{fig6_app_accuracy}
\end{figure}

\begin{figure}[!h]
\centering
\includegraphics[width=0.225\textwidth]{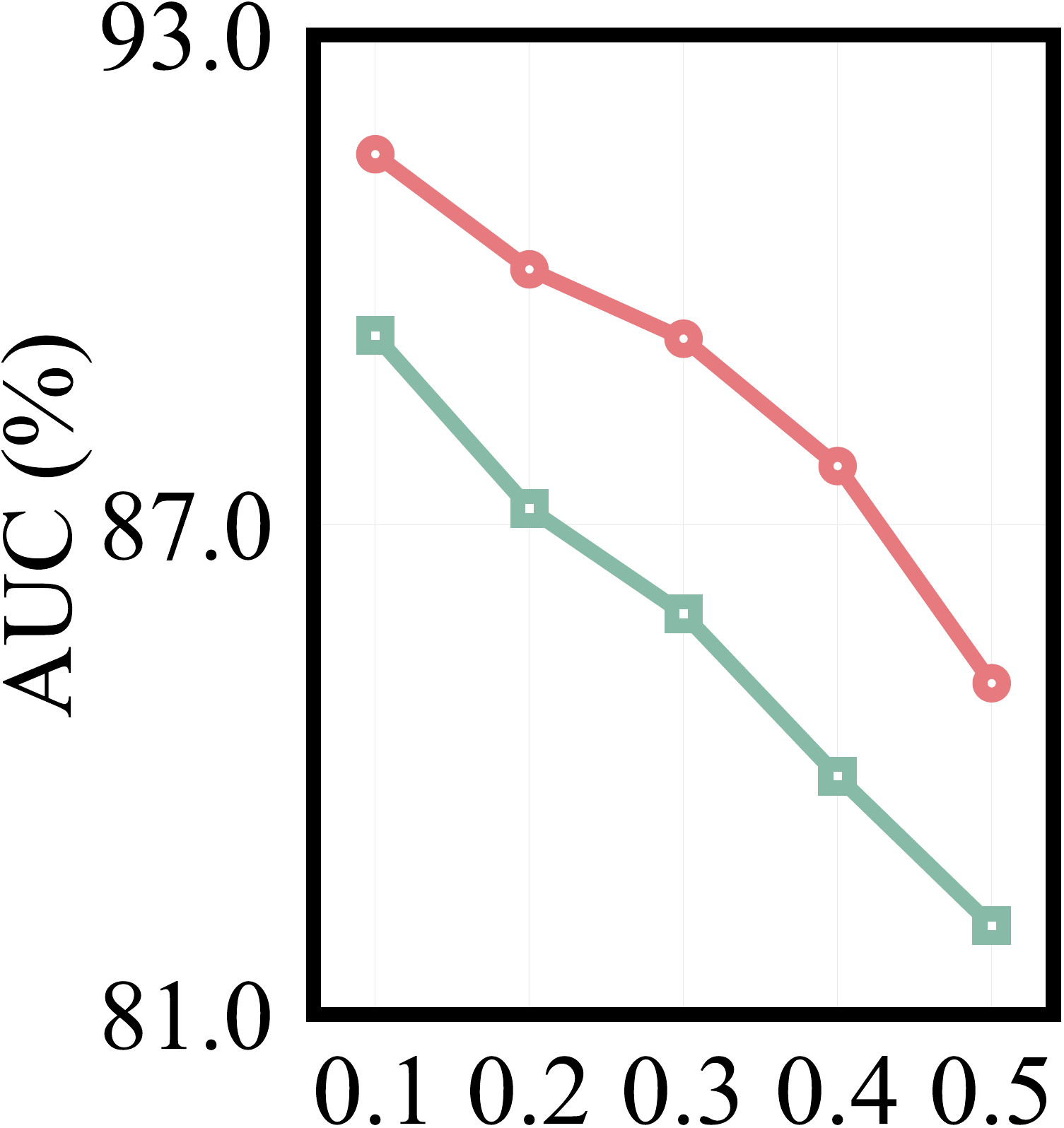}
\quad
\includegraphics[width=0.225\textwidth]{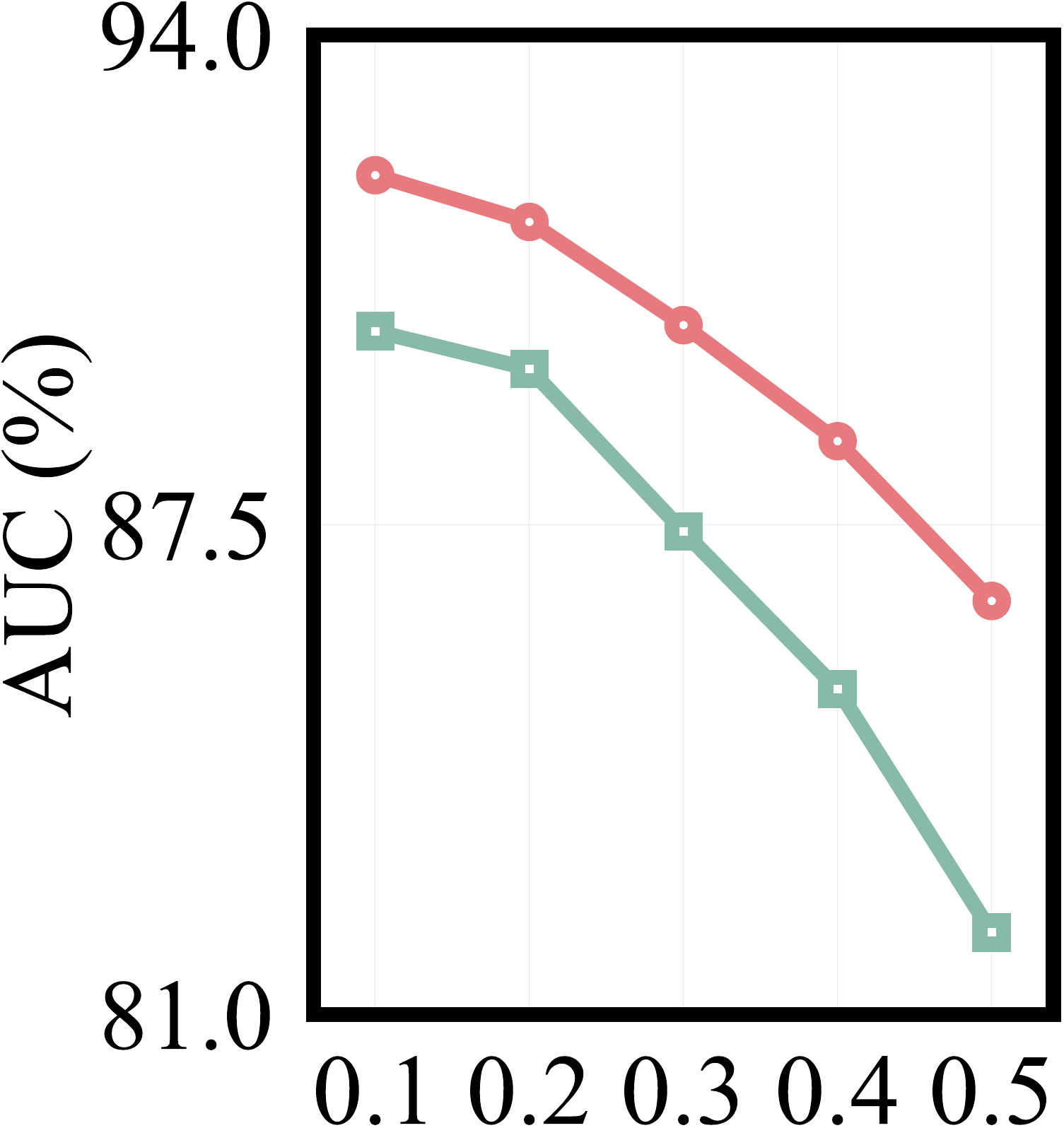}
\quad
\includegraphics[width=0.225\textwidth]{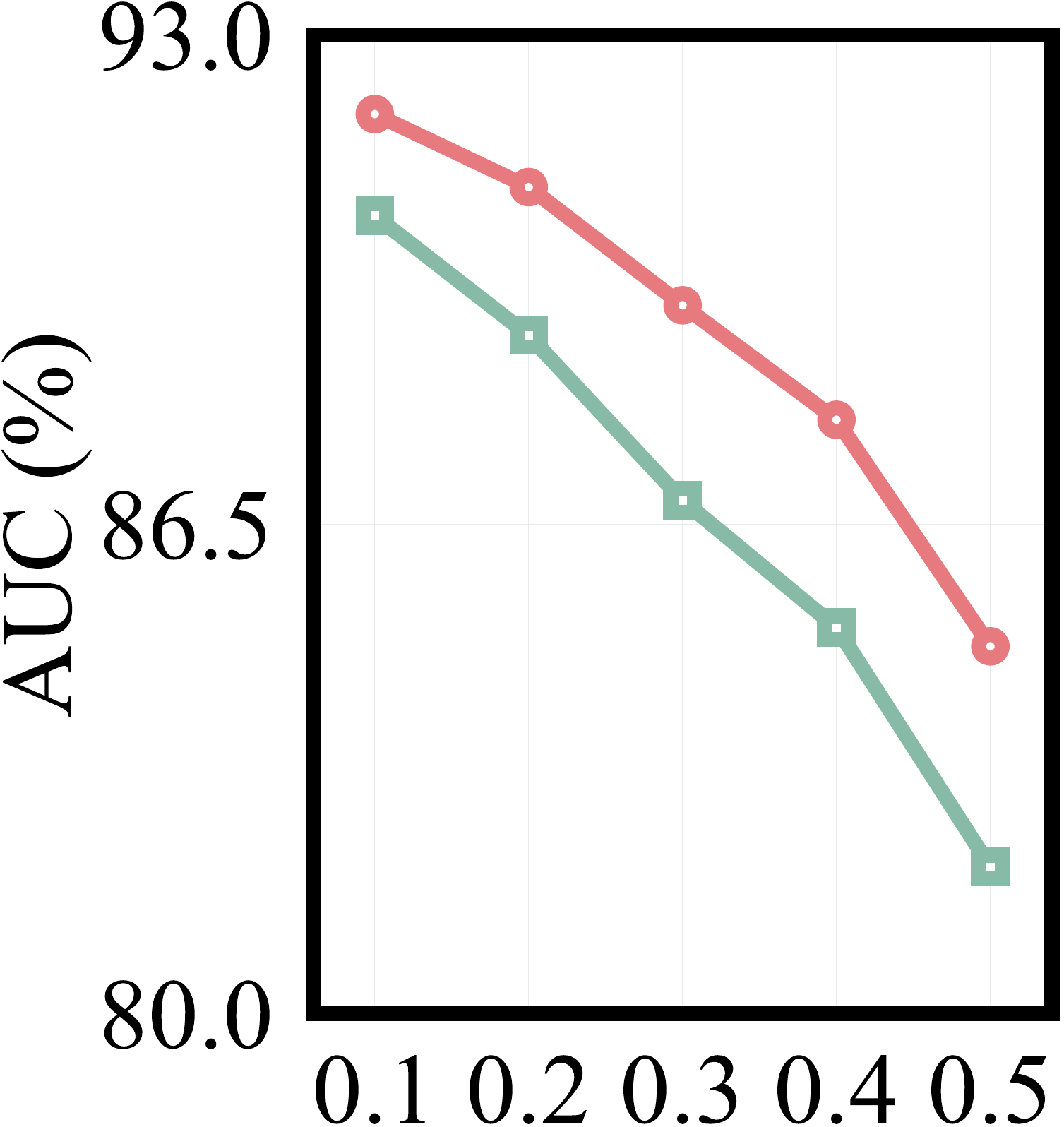}
\quad
\includegraphics[width=0.225\textwidth]{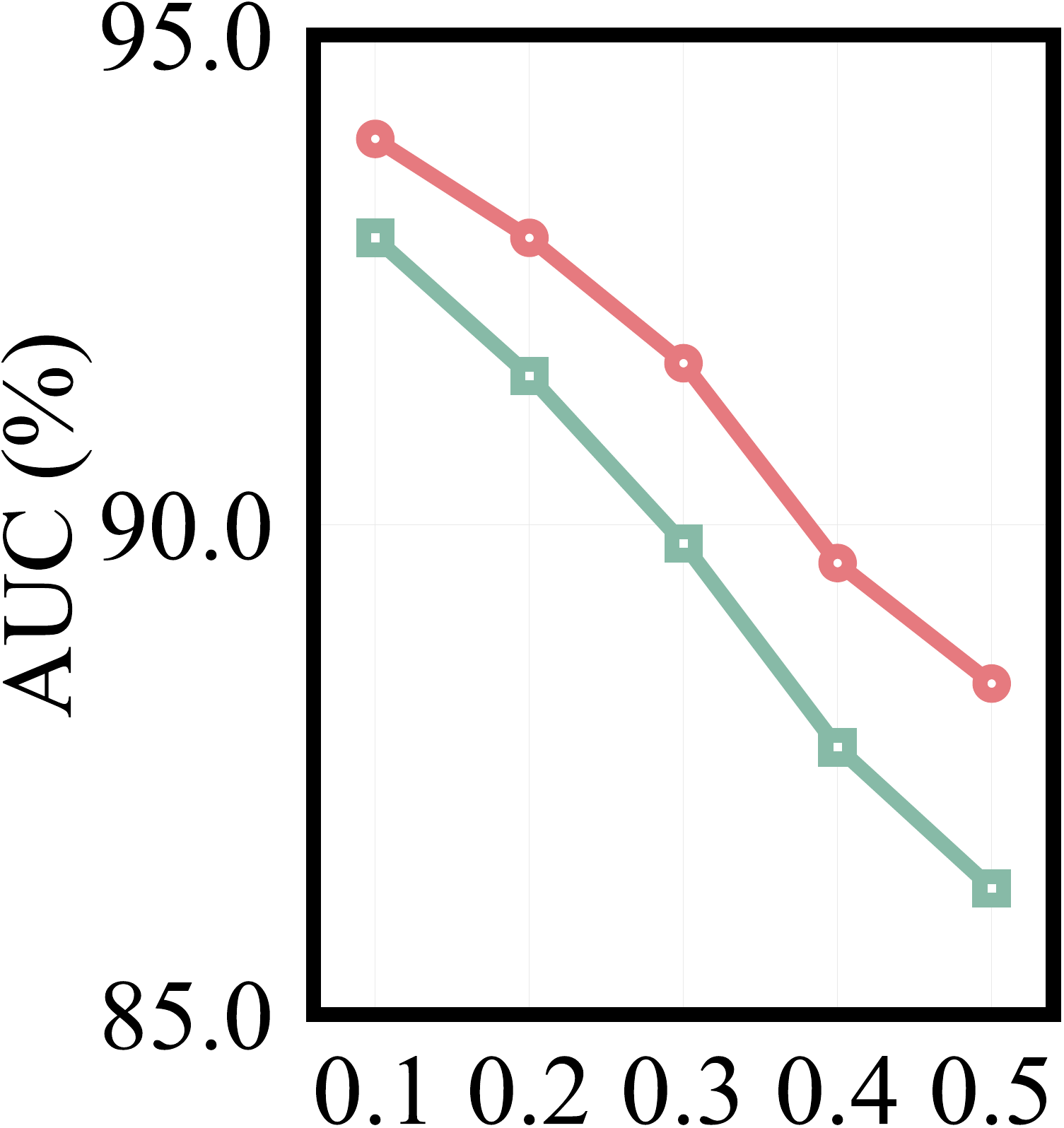}
\caption{AUC comparison under confusion-calibrated noise between baseline noisy label learning methods (green) and their SEI-enhanced variants (red). From left to right: SCE, M-correction, DivideMix, and ProMix.}
\label{fig6_app_auc}
\end{figure}

\begin{figure}[!h]
\centering
\includegraphics[width=0.225\textwidth]{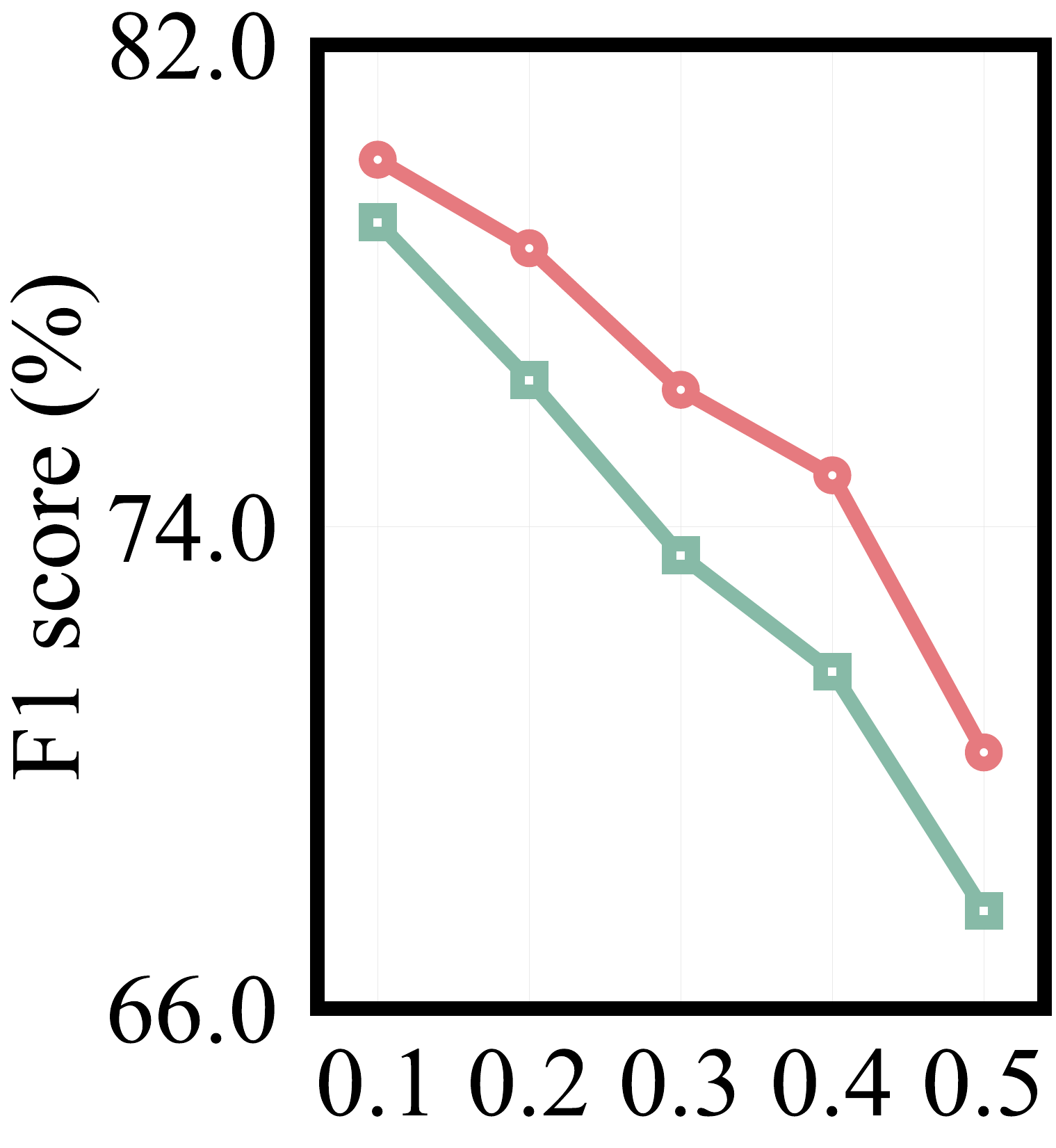}
\quad
\includegraphics[width=0.225\textwidth]{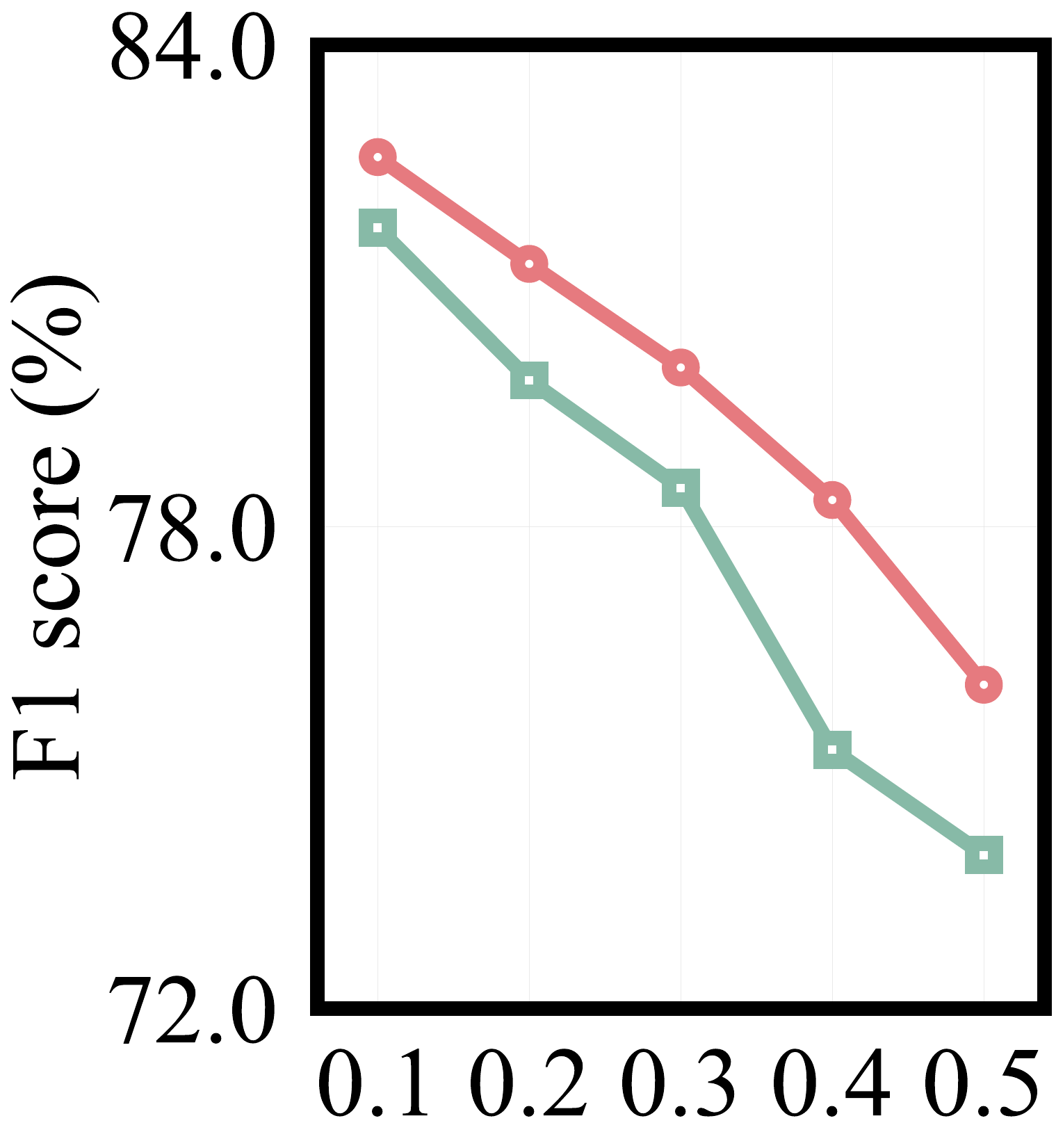}
\quad
\includegraphics[width=0.225\textwidth]{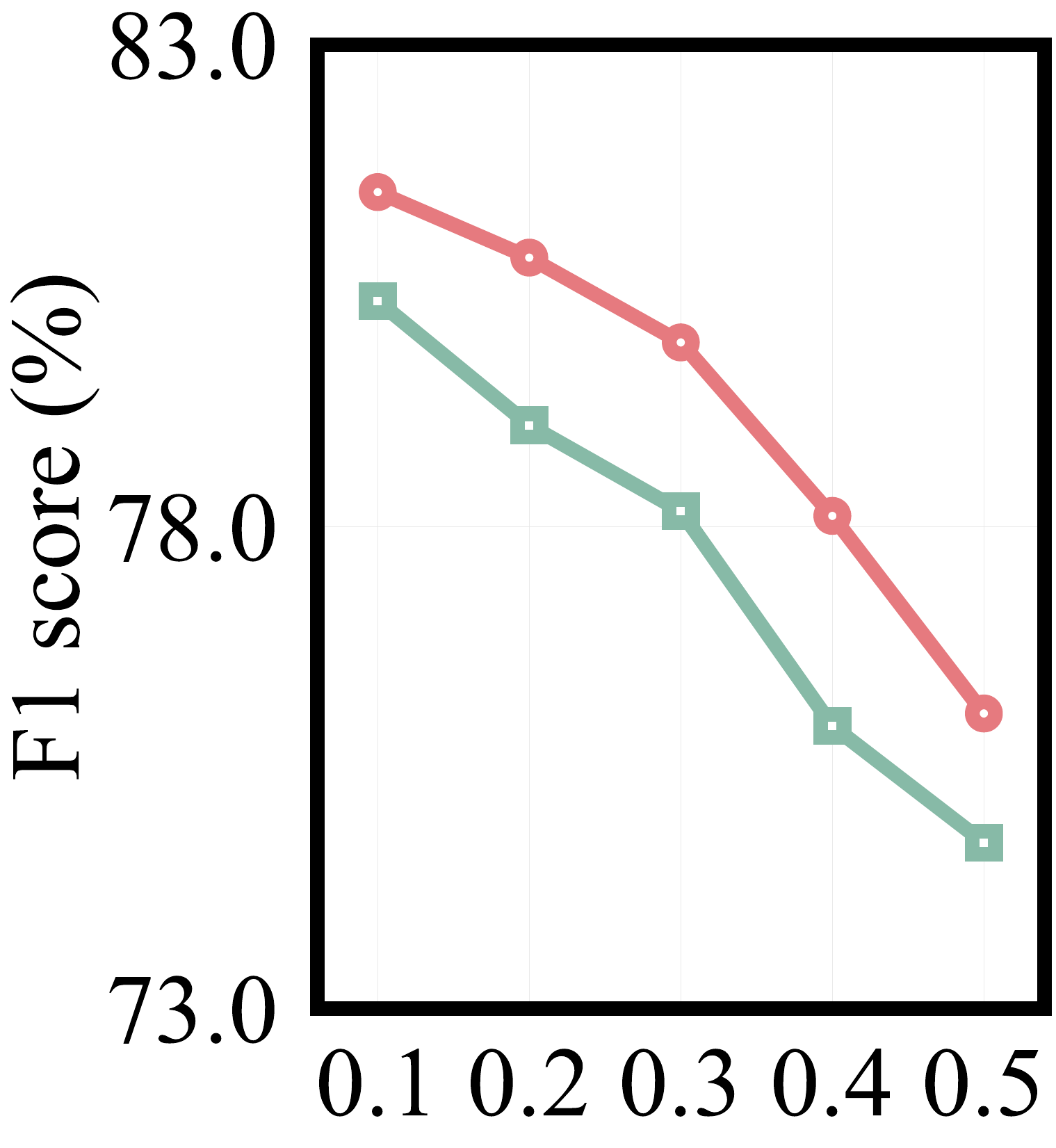}
\quad
\includegraphics[width=0.225\textwidth]{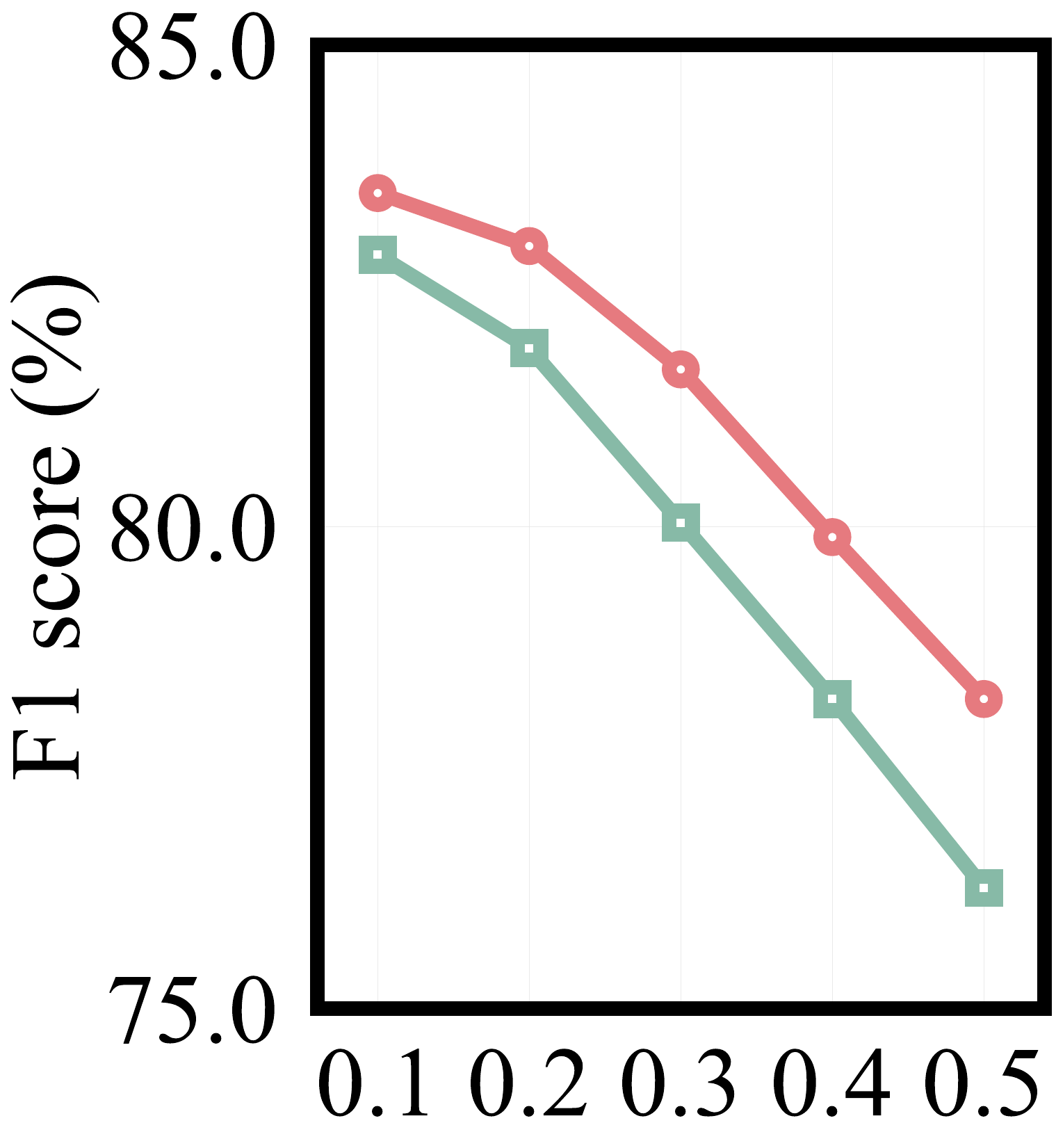}
\caption{F1 score comparison under symmetric noise between baseline noisy label learning methods (green) and their SEI-enhanced variants (red). From left to right: SCE, M-correction, DivideMix, and ProMix.}
\label{fig6_app_f1_s}
\end{figure}

\begin{figure}[!h]
\centering
\includegraphics[width=0.225\textwidth]{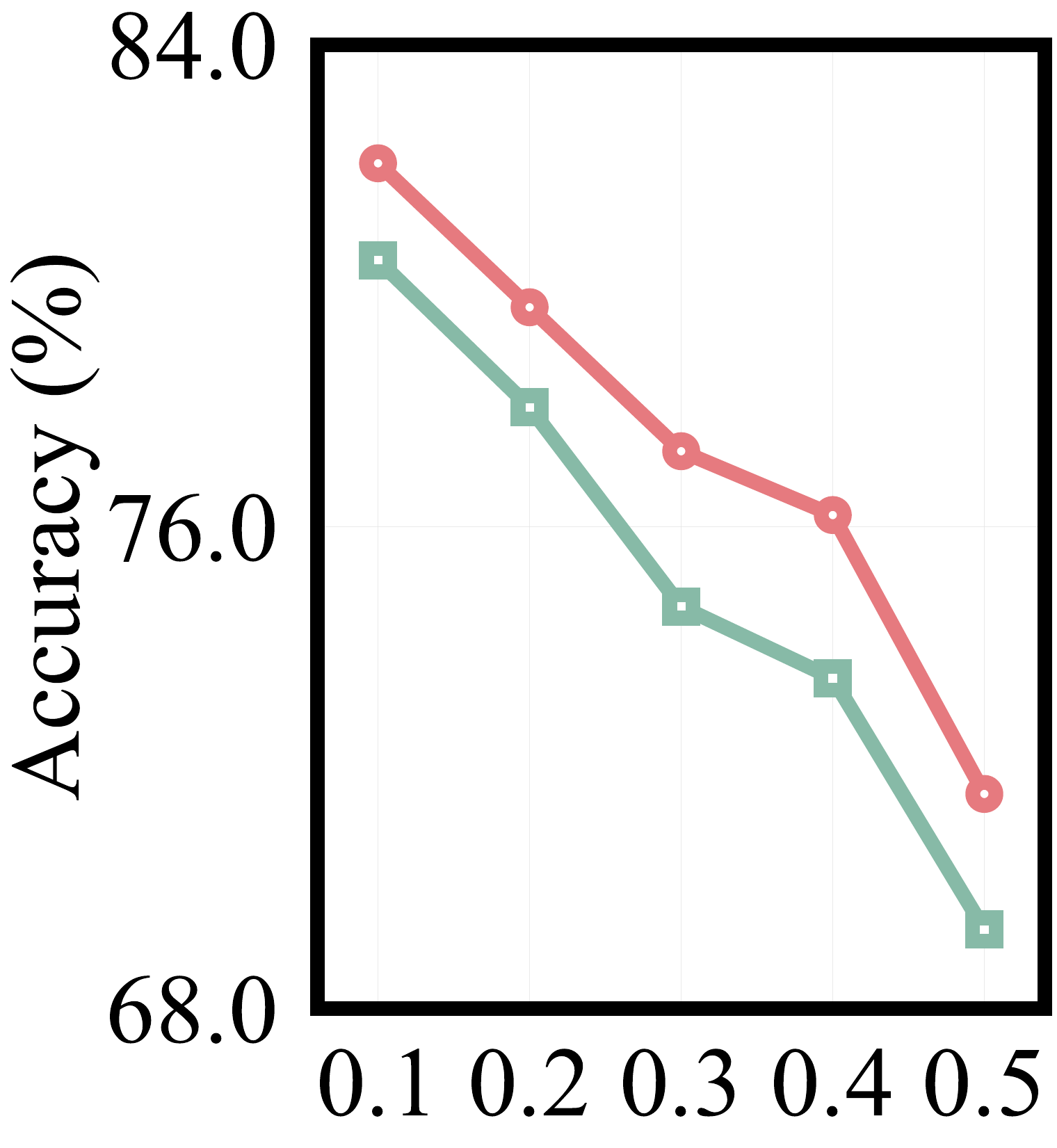}
\quad
\includegraphics[width=0.225\textwidth]{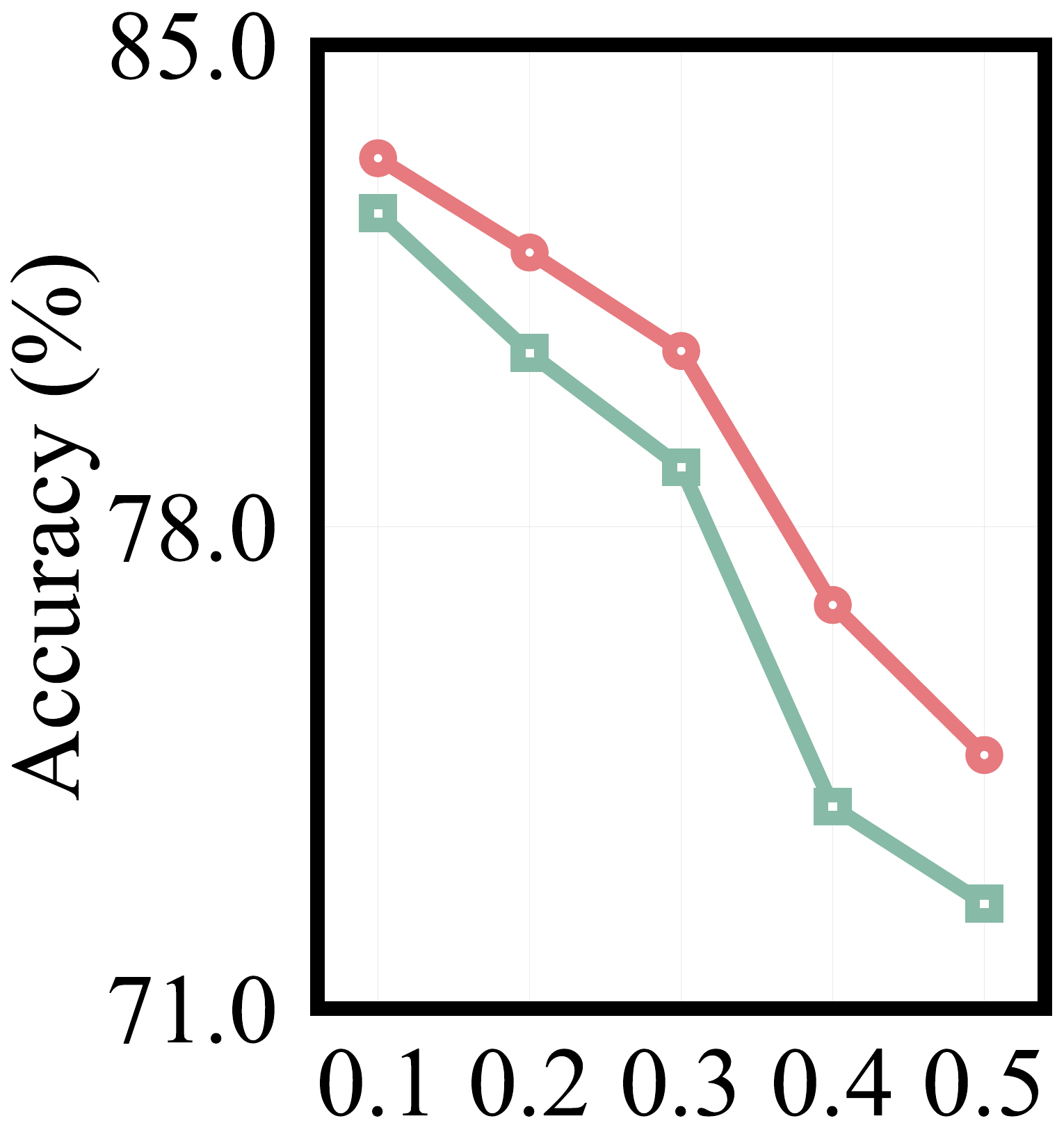}
\quad
\includegraphics[width=0.225\textwidth]{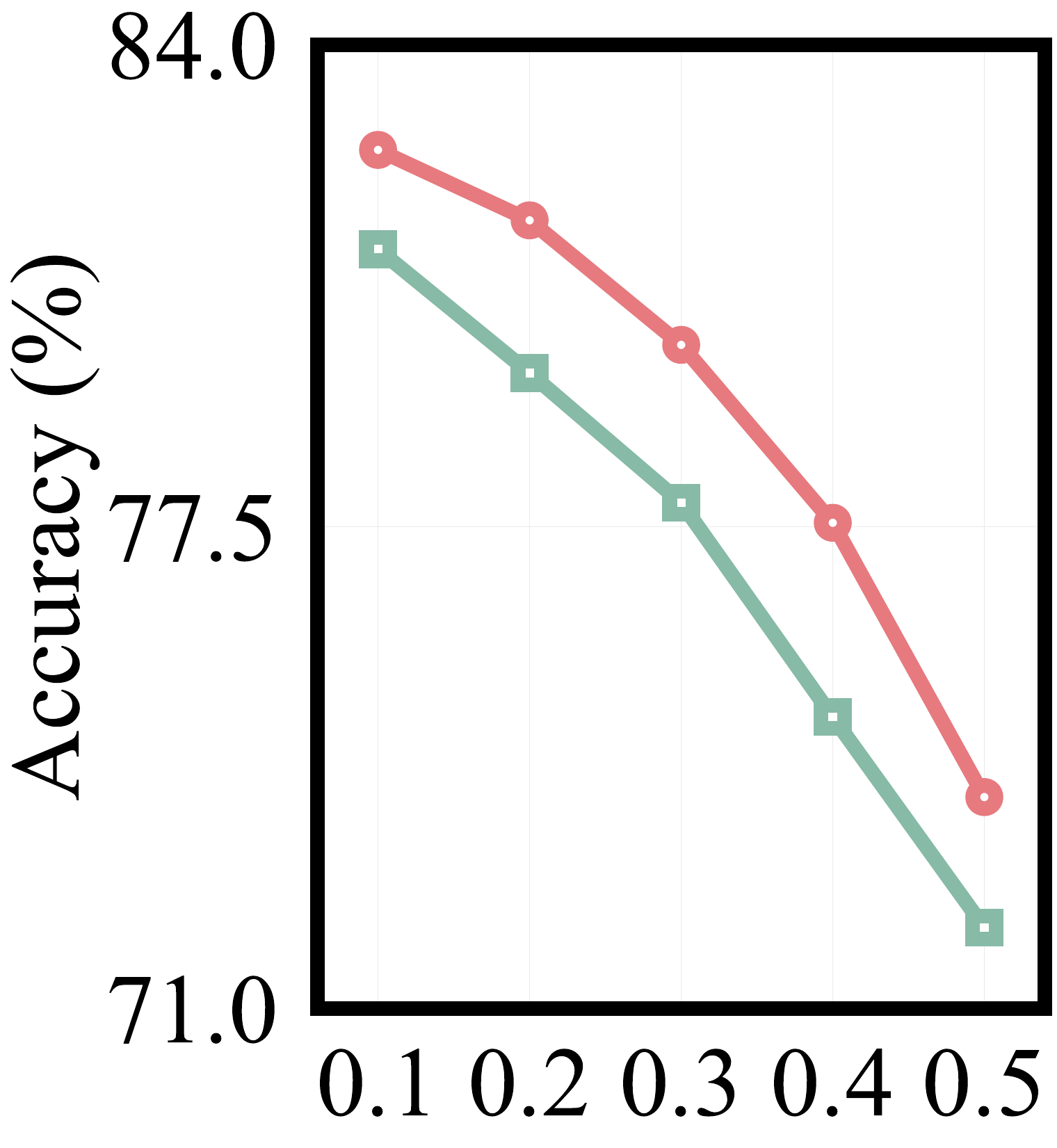}
\quad
\includegraphics[width=0.225\textwidth]{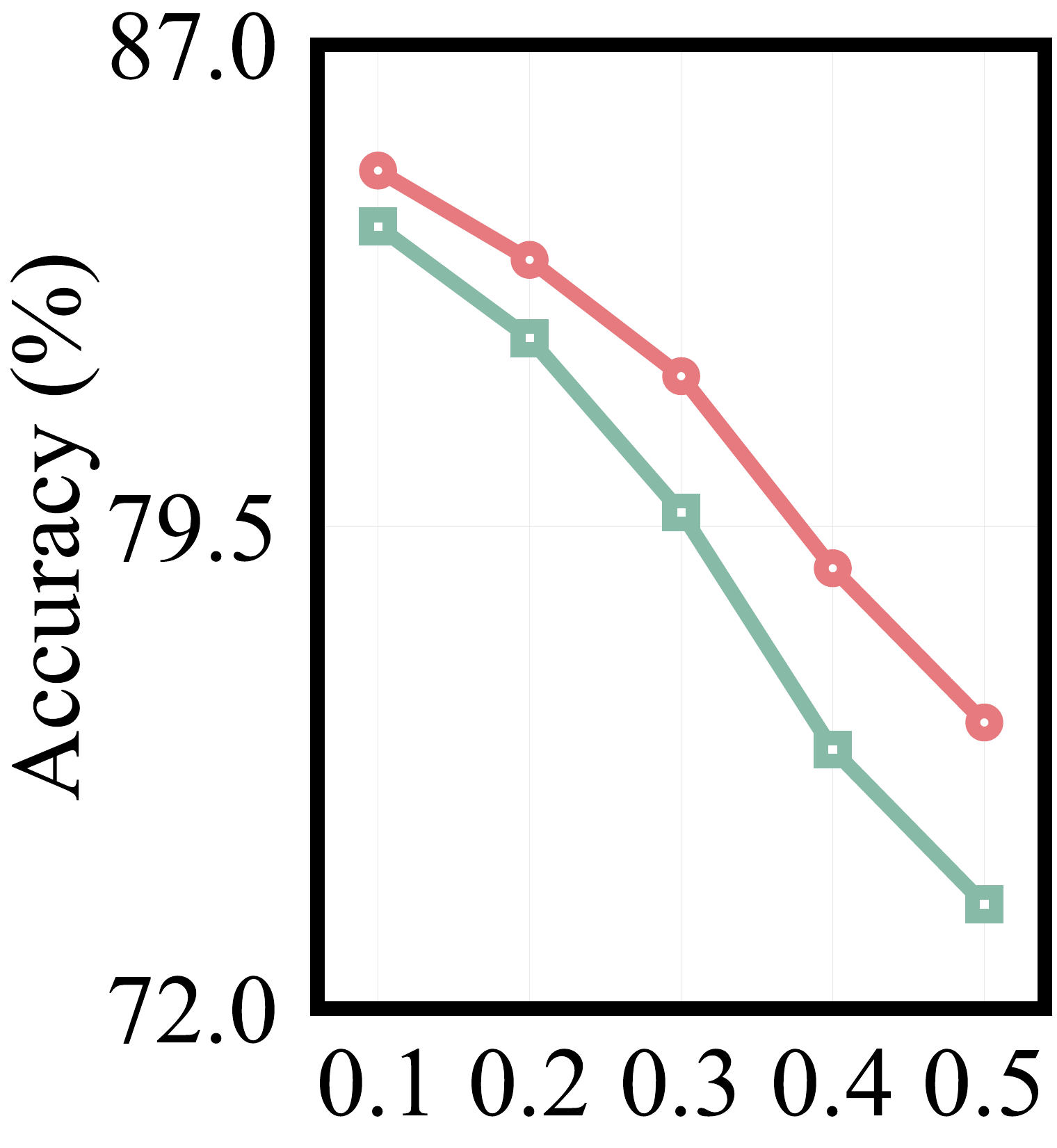}
\caption{Accuracy comparison under symmetric noise between baseline noisy label learning methods (green) and their SEI-enhanced variants (red). From left to right: SCE, M-correction, DivideMix, and ProMix.}
\label{fig6_app_accuracy_s}
\end{figure}

\begin{figure}[!h]
\centering
\includegraphics[width=0.225\textwidth]{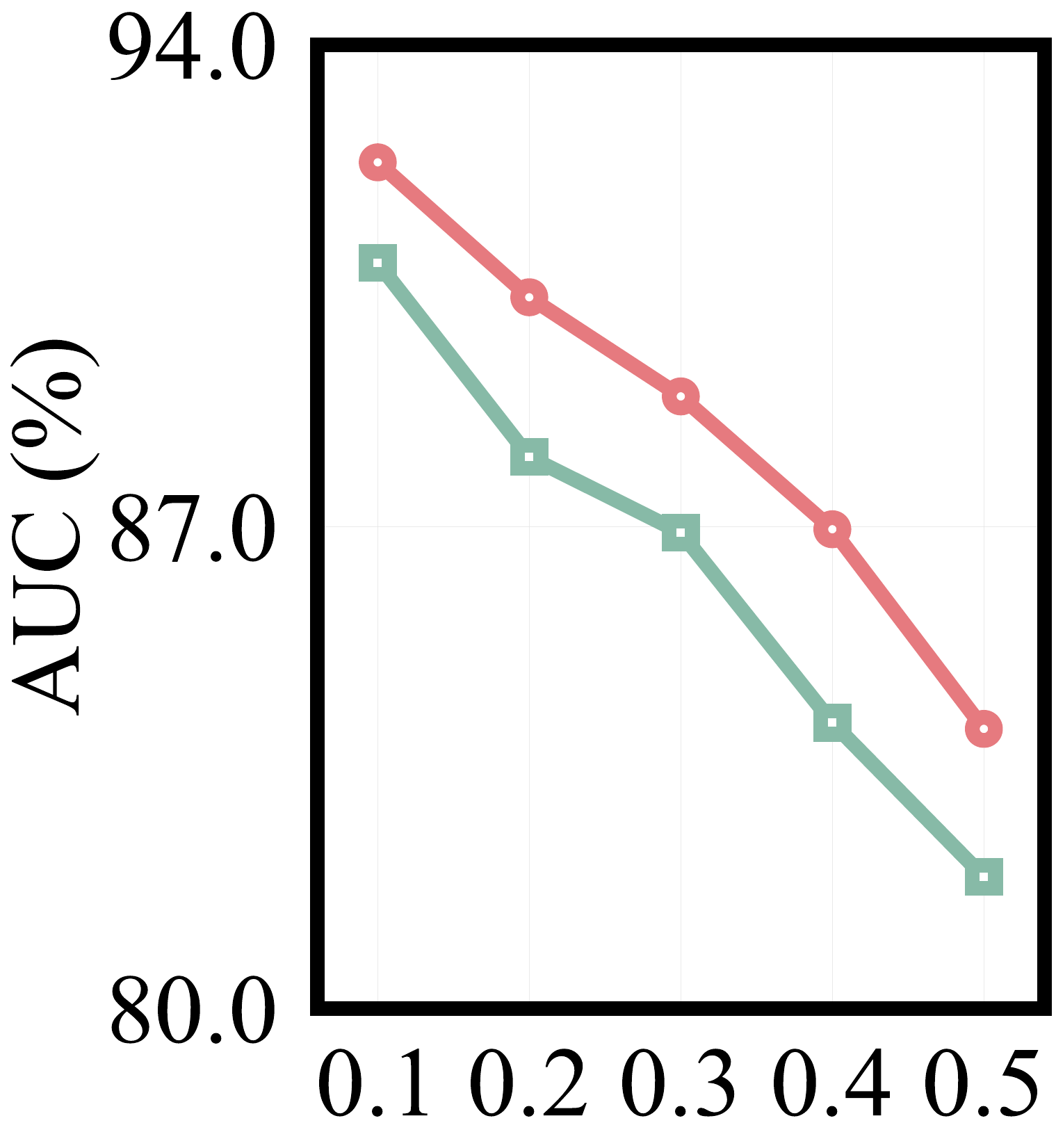}
\quad
\includegraphics[width=0.225\textwidth]{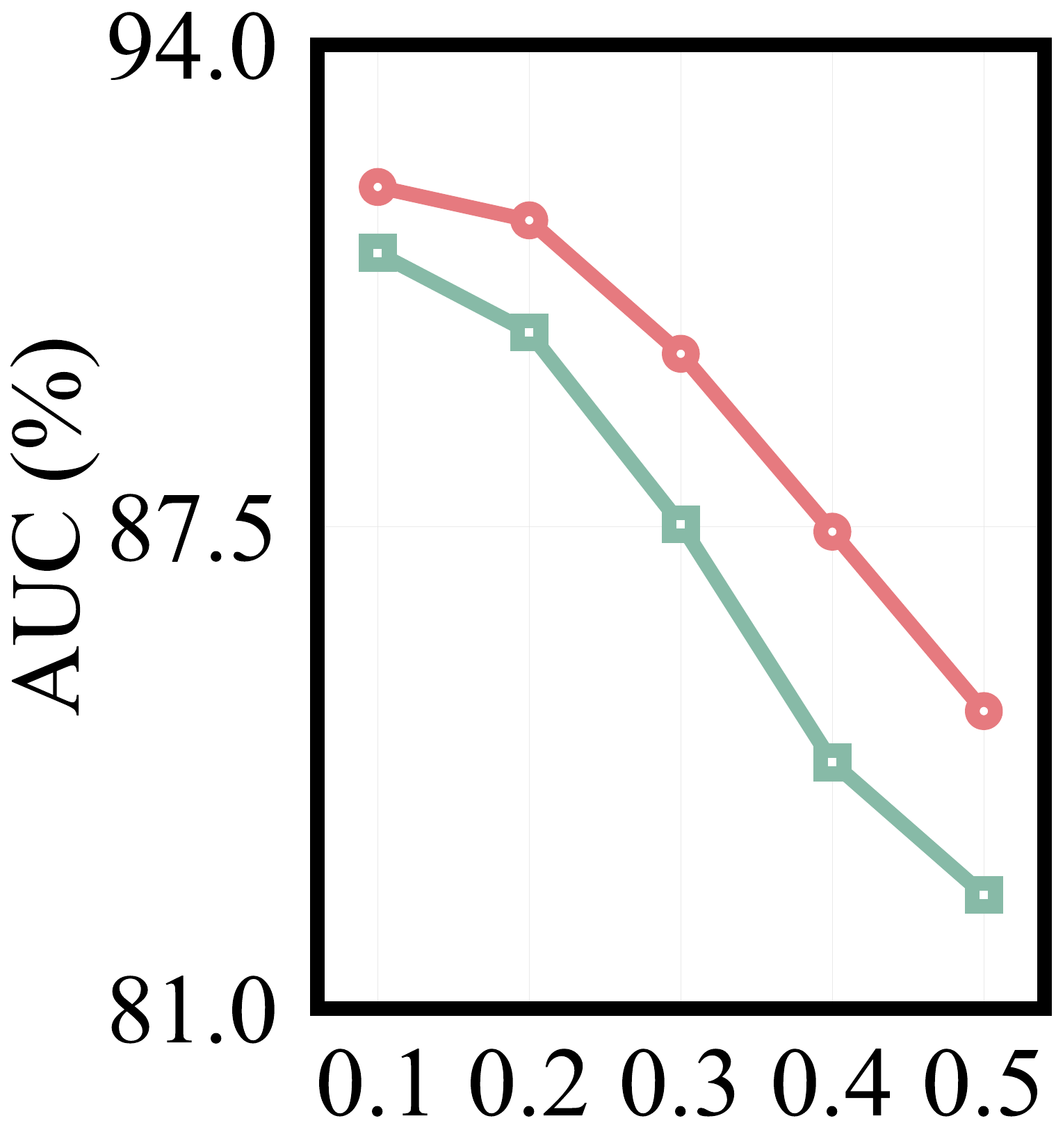}
\quad
\includegraphics[width=0.225\textwidth]{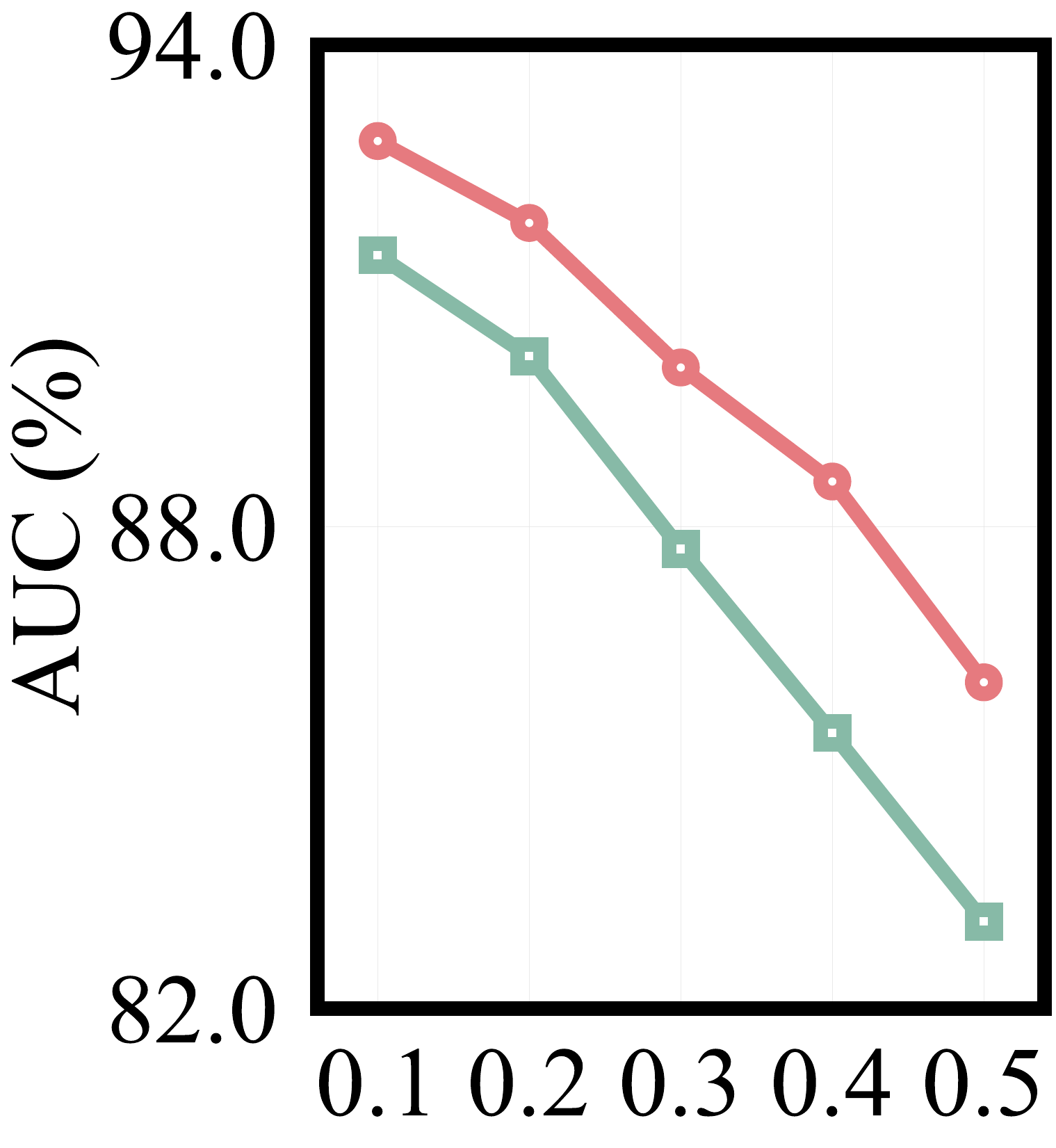}
\quad
\includegraphics[width=0.225\textwidth]{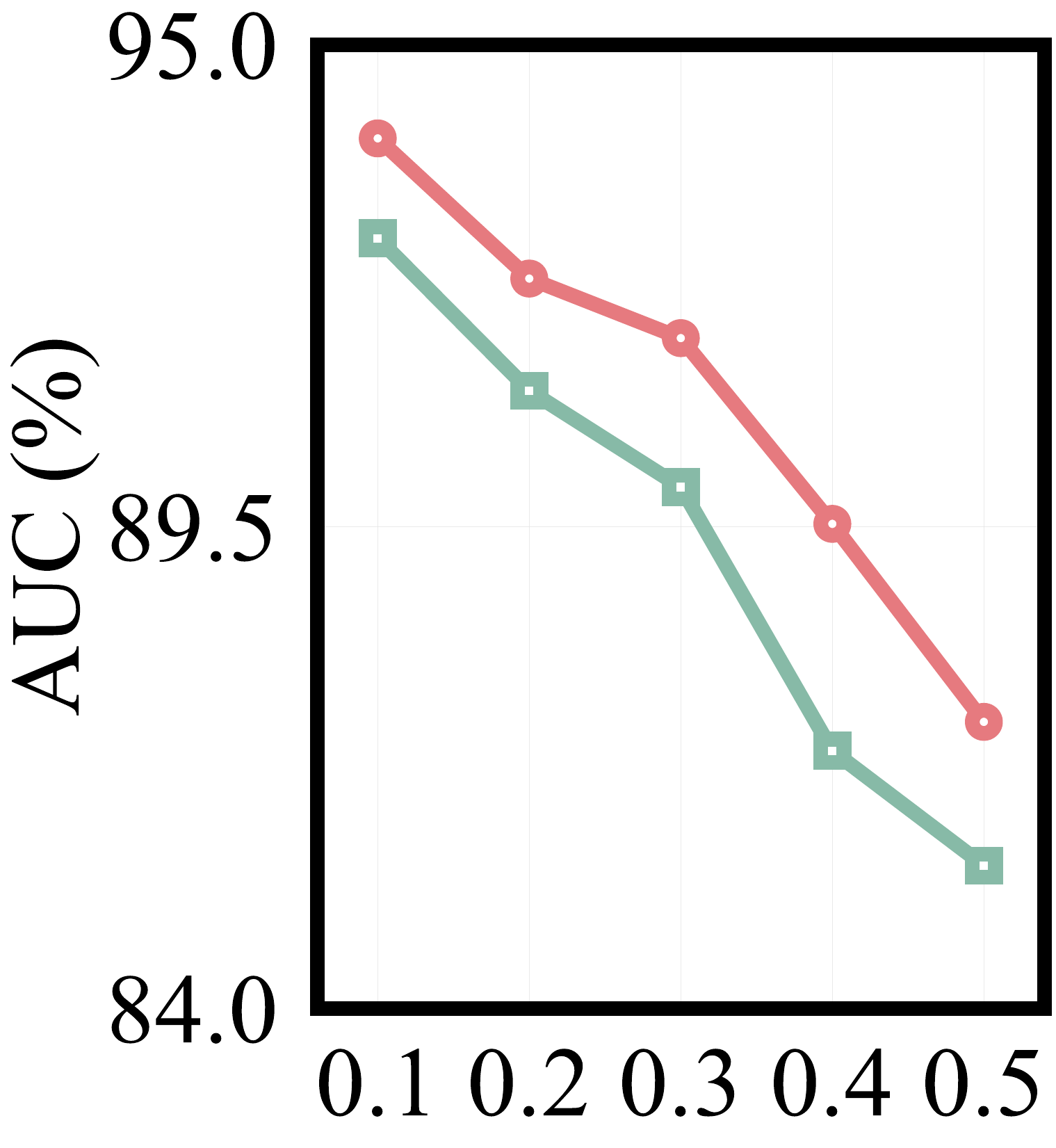}
\caption{AUC comparison under symmetric noise between baseline noisy label learning methods (green) and their SEI-enhanced variants (red). From left to right: SCE, M-correction, DivideMix, and ProMix.}
\label{fig6_app_auc_s}
\end{figure}

\section{Additional Results on Downstream Image Classification}
\label{sec:auc}
Beyond F1 score, we also report accuracy and AUC for SCE, M-correction, DivideMix, and ProMix, both with and without SEI, under the same protocol as Section~\ref{sec:4.3.4}. Results are presented in Figure~\ref{fig6_app_accuracy} and Figure~\ref{fig6_app_auc}.

In addition, we provide comprehensive results (F1 score, accuracy, and AUC) under symmetric noise, which are reported in Figures \ref{fig6_app_f1_s}, \ref{fig6_app_accuracy_s}, and \ref{fig6_app_auc_s}.

\section{Sensitivity to Training Configuration}

We conducted a series of hyperparameter sensitivity experiments on the PANDA dataset under the confusion-calibrated 50\% noise setting to assess how SEI behaves under different training configurations.

\paragraph{Training duration and early stopping.}
We first varied the number of training epochs while keeping all other hyperparameters fixed. As shown in Table~\ref{tab:sei_epoch_sensitivity}, the small standard deviation (0.42\%) indicates that SEI is insensitive to moderate changes in training duration as long as the model reaches convergence. In contrast, early stopping at epoch 75 leads to a noticeable drop (Table~\ref{tab:ab_app}), confirming that SEI assumes a reasonably converged model rather than an undertrained one.

\begin{table}[!h]
    \centering
    \caption{Sensitivity of SEI to training duration on PANDA (confusion-calibrated noise, $\eta = 0.5$). The default setting is 150 epochs.}
    \label{tab:sei_epoch_sensitivity}
    \begin{tabular}{l|ccccc}
        \toprule
        \textbf{Epochs} & \textbf{100} & \textbf{150} & \textbf{200} & \textbf{mean} & \textbf{std} \\
        \midrule
        F1 (\%)     & 81.34 & 81.85 & 81.02 & 81.40 & 0.42 \\
        \bottomrule
    \end{tabular}
\end{table}

\paragraph{Learning rate.}
We investigate the sensitivity of SEI to the choice of initial learning rate by sweeping it from 1e-4 to 1e-2. As shown in Table~\ref{tab:sei_lr_sensitivity}, we observe that the performance of SEI remains very stable within the range typically used for training CLIP and standard classifiers (5e-4–1e-3), and only starts to degrade when the learning rate is excessively large (5e-3 or 1e-2). This trend mirrors standard classification training, where an appropriate learning rate is needed for effective learning of the base model itself.

\begin{table}[!h]
    \centering
    \caption{Sensitivity of SEI to learning rate on PANDA (confusion-calibrated noise, $\eta = 0.5$). The default learning rate is 1e-3.}
    \label{tab:sei_lr_sensitivity}
    \begin{tabular}{l|ccccc}
        \toprule
        \textbf{Learning Rate} & $\textbf{1e-4}$ & $\textbf{5e-4}$ & $\textbf{1e-3}$ & $\textbf{5e-3}$ & $\textbf{1e-2}$ \\
        \midrule
        F1 (\%) & 80.06 & 81.50 & 81.85 & 78.86 & 77.00 \\
        \bottomrule
    \end{tabular}
\end{table}

\paragraph{Optimizer.}
We study the sensitivity of SEI to the choice of optimizer. As shown in Table~\ref{tab:sei_optimizer}, SEI obtains similar F1 scores when the default SGD optimizer is replaced by Adam or AdamW, suggesting that our method is not highly sensitive to the optimizer choice.

\begin{table}[!h]
    \centering
    \caption{Effect of optimizer on SEI (PANDA, confusion-calibrated noise, $\eta = 0.5$). The default setting is SGD.}
    \label{tab:sei_optimizer}
    \begin{tabular}{l|ccc}
        \toprule
        \textbf{Optimizer} &  \textbf{SGD} & \textbf{Adam} & \textbf{AdamW} \\
        \midrule
        \textbf{F1 (\%)} & 81.85 & 81.53 & 82.09 \\
        \bottomrule
    \end{tabular}
\end{table}

\paragraph{Data augmentation, label smoothing, and weight decay.}
We also examined several common regularization choices. For data augmentation, as shown in Table~\ref{tab:sei_aug_sensitivity}, “Strong aug.” adds random affine transforms and random erasing. SEI remains robust under both weak and strong augmentations, with a slight gain under stronger augmentation. MixUp—whose heavy mixing disrupts label–prediction alignment—slightly reduces performance. For label smoothing, as shown in Table~\ref{tab:sei_ls_sensitivity}, moderate label smoothing slightly degrades performance, and stronger smoothing further weakens entropy signals, as expected. As shown in Tabel~\ref{tab:sei_wd_sensitivity}, weight decay within [5e-5, 5e-4] has only mild impact, with slightly higher F1 at the upper end.

\begin{table}[!h]
    \centering
    \caption{Effect of data augmentation on SEI (PANDA, confusion-calibrated noise, $\eta = 0.5$). The default setting is weak augmentation.}
    \label{tab:sei_aug_sensitivity}
    \begin{tabular}{l|ccc}
        \toprule
        \textbf{Augmentation Setting} &  \textbf{Strong aug.} & \textbf{Weak aug.} & \textbf{MixUp} \\
        \midrule
        \textbf{F1 (\%)} & 82.38 & 81.85 & 80.88 \\
        \bottomrule
    \end{tabular}
\end{table}

\begin{table}[!h]
    \centering
    \caption{Effect of label smoothing on SEI (PANDA, confusion-calibrated noise, $\eta = 0.5$). The default setting is 0.0.}
    \label{tab:sei_ls_sensitivity}
    \begin{tabular}{l|ccc}
        \toprule
        \textbf{Label Smoothing} & \textbf{0.0} & \textbf{0.1} & \textbf{0.2} \\
        \midrule
        \textbf{F1 (\%)} & 81.85 & 81.45 & 80.36 \\
        \bottomrule
    \end{tabular}
\end{table}

\begin{table}[!h]
    \centering
    \caption{Effect of weight decay on SEI (PANDA, confusion-calibrated noise, $\eta = 0.5$). The default setting is $1\text{e}-4$.}
    \label{tab:sei_wd_sensitivity}
    \begin{tabular}{l|ccc}
        \toprule
        \textbf{Weight Decay} & $\mathbf{5\textbf{e}-5}$ & $\mathbf{1\textbf{e}-4}$ & $\mathbf{5\textbf{e}-4}$ \\
        \midrule
        \textbf{F1 (\%)} & 81.36 & 81.85 & 82.21 \\
        \bottomrule
    \end{tabular}
\end{table}

\paragraph{Temporal integration window.}
We compare the full-trajectory SEI with variants that integrate signed entropy over fixed 50-epoch windows with a stride of 20. As reported in Table~\ref{tab:sei_window_sensitivity}, all window-based variants perform significantly worse than full-trajectory SEI, consistent with the SEI@Early/SEI@Late results in Table~\ref{tab:ab_app}. This confirms that SEI benefits from integrating the entire training trajectory rather than relying on a narrow early or late slice.

\begin{table}[!t]
    \centering
    \caption{SEI with different 50-epoch integration windows (PANDA, confusion-calibrated noise, $\eta = 0.5$).}
    \label{tab:sei_window_sensitivity}
    \begin{tabular}{l|cccccc}
        \toprule
        \textbf{Window (epochs)} & \textbf{[0, 49]} & \textbf{[20, 69]} & \textbf{[40, 89]} & \textbf{[60, 109]} & \textbf{[80, 129]} & \textbf{[100, 149]} \\
        \midrule
        F1 (\%) & 75.34 & 50.78 & 49.92 & 51.03 & 54.09 & 56.81 \\
        \bottomrule
    \end{tabular}
\end{table}

In summary, SEI is robust under standard training-to-convergence settings and typical hyperparameter choices. The ranking is stable with respect to reasonable training lengths, LR schedules, and regularization. Our recommendation is to use the full training trajectory; extreme early stopping or highly atypical hyperparameters are not ideal conditions for SEI.

\section{Sensitivity to the Auxiliary-Class Sampling Ratio}
To evaluate the robustness of the auxiliary-class–based threshold, we performed a sensitivity analysis on the PANDA dataset with 50\% confusion-calibrated noise by varying the sampling ratio of auxiliary-class samples. In our default setting, the number of auxiliary samples is N/(K+1). We scaled this number using factors of 0.5×, 0.75×, 1.0×, 1.5×, and 2.0×. As shown in Tabel~\ref{tab:sei_aux_ratio_sensitivity}, the F1 scores remain within a tight range, with a mean of 81.87\% and a standard deviation of only 0.72\%, even when the number of auxiliary samples varies by a factor of four. This indicates that SEI is not sensitive to the exact sampling ratio: as long as a reasonable number of auxiliary samples is used, the estimated mean SEI for the auxiliary class remains stable. It is also worth noting that although the sampling ratio is manually specified, the threshold itself is entirely data-driven and learned adaptively.

\section{Sensitivity to Auxiliary-Class Semantics}
To examine the sensitivity of SEI to the wording of the auxiliary-class semantics, we conduct a prompt paraphrasing analysis on PANDA. Specifically, we replace the default auxiliary-class prompt, ``A histology image showing other conditions'', with two semantically similar paraphrases, as shown in Table~\ref{tab:aux_semantics}. The results show only minor performance variations across different prompt wordings, suggesting that SEI is not highly sensitive to the exact wording of the auxiliary-class prompt.

\begin{table}[!h]
    \centering
    \caption{Sensitivity of SEI to the wording of the auxiliary-class prompt (PANDA, symmetric noise, $\eta = 0.5$).}
    \label{tab:aux_semantics}
    \begin{tabular}{p{0.72\linewidth}|c}
        \toprule
        \textbf{Auxiliary-class Prompt} & \textbf{F1 (\%)} \\
        \midrule
        A histology image showing other conditions. & 81.18 \\
        A histology image showing findings outside the target diagnostic categories. & 80.08 \\
        A histology image showing tissue patterns outside the target diagnostic categories. & 81.90 \\
        \bottomrule
    \end{tabular}
\end{table}

\begin{table}[!t]
    \centering
    \caption{Sensitivity of SEI to the auxiliary-class sampling ratio on PANDA (confusion-calibrated noise, $\eta = 0.5$). The ratio is expressed as a multiplier of the default $N/(K+1)$ setting. The default setting is $1.0\times$.}
    \label{tab:sei_aux_ratio_sensitivity}
    \begin{tabular}{l|ccccc}
        \toprule
        \textbf{Ratio Factor} & $\textbf{0.5}\times$ & $\textbf{0.75}\times$ & \textbf{$\textbf{1.0}\times$} & $\textbf{1.5}\times$ & $\textbf{2.0}\times$ \\
        \midrule
        F1 (\%) & 81.94 & 83.01 & 81.85 & 81.41 & 81.12 \\
        \bottomrule
    \end{tabular}
\end{table}

\section{Inter-Obse Variability vs. Label Noise}

Inter-observer variability and label noise are related but not identical. Variability across annotators often reflects inherent uncertainty, whereas noisy label detection assumes a single hard label per sample and seeks to identify cases where that label is incorrect. Learning with uncertain or probabilistic labels—such as modeling annotator distributions or ambiguity—is an important but distinct problem setting~\citep{Simon} and is not the focus of this work.

Our work operates strictly under the standard hard-label noise detection setting, where each training sample is associated with one label, and the goal is to detect mislabeled instances under this assumption. SEI is therefore designed and evaluated within this framework.

\section{SEI under Extreme Noise Rates}

In clinical practice, datasets with more than 50\% label disagreement are typically considered unreliable for training. Accordingly, our main experiments focus on noise rates up to 0.5. To further assess robustness, we additionally evaluate SEI on the PANDA dataset under more extreme noise levels $\eta \in \{0.6, 0.7, 0.8\}$, for both confusion-calibrated noise and symmetric noise. The F1 scores for mislabeled-data detection are summarized in Table~\ref{tab:panda_extreme_noise}.

\begin{table}[h]
    \centering
    \caption{SEI performance on PANDA under extreme noise rates. We report F1 scores (\%) for mislabeled data detection.}
    \label{tab:panda_extreme_noise}
    \begin{tabular}{l|ccc}
        \toprule
        \textbf{Noise Rate} & \textbf{0.6}   & \textbf{0.7}   & \textbf{0.8}   \\
        \midrule
        Confusion-calibrated & 81.07 & 79.93 & 81.83 \\
        Symmetric & 81.66 & 80.95 & 80.51 \\
        \bottomrule
    \end{tabular}
\end{table}

Even at very high noise rates (60–-80\%), SEI remains stable around ~80\% F1, without a significant performance collapse. Performance does not degrade monotonically; instead, it fluctuates slightly within a narrow band, suggesting that SEI can still capture useful training-dynamics signals even when a large fraction of labels is corrupted.

\begin{table}[!t]
    \centering
    \caption{Per-class false positive rate and F1 for noisy-label detection on PANDA (confusion-calibrated noise, $\eta = 0.5$).}
    \label{tab:panda_per_class}
    \begin{tabular}{l|cc}
        \toprule
        \textbf{Class} & \textbf{Per-class FPR (\%)} & \textbf{Per-class F1 (\%)} \\
        \midrule
        Benign epithelium & 13.66 & 85.47 \\
        Gleason 3         & 22.87 & 80.38 \\
        Gleason 4         & 25.87 & 80.14 \\
        Gleason 5         & 21.63 & 82.91 \\
        \bottomrule
    \end{tabular}
\end{table}

\section{Per-class Analysis of Noisy-Label Detection}

We perform a per-class analysis of noisy label detection performance on PANDA with 50\% confusion-calibrated noise. For each class, we report the class-wise false positive rate (FPR) and per-class F1 score as shown in Table~\ref{tab:panda_per_class}.

These results show that SEI does not systematically over-filter any particular class. Notably, benign epithelium—the smallest class—has the lowest false positive rate (13.66\%), suggesting that minority patterns are not disproportionately removed. Overall, SEI achieves consistently strong detection quality across all classes.

This outcome aligns with the design of SEI: by integrating signed entropy over the entire training trajectory, SEI naturally separates hard-but-correct samples from mislabeled ones. Hard, correctly labeled samples may have high entropy early in training but eventually align with their labels and accumulate positive signed contributions. In contrast, mislabeled samples remain misaligned for most of training and accumulate negative contributions. This reduces the risk of misclassifying intrinsically hard or minority-subtype samples as noisy.

\section{Computational Overhead}
Under standard training, the average epoch time on PANDA is 127.6 s, which increases to 135.9 s with SEI, corresponding to about 6.5\% additional computation.With batch size 128 on a single GPU, the peak memory increases only from 15.4 GB to 15.7 GB. This overhead remains small because SEI requires no extra forward/backward passes and only maintains one running scalar per sample; signed entropy is computed from model outputs already produced during standard training.

Additionally, we compare the computational cost of SEI against several representative baselines, including training trajectory-based methods (GMM and AUM), the multi-stage method INCV, and the repeated cross-validation method ReCoV. As shown in Table~\ref{tab:training_time_memory}, we report the end-to-end wall-clock training time and peak GPU memory in the table below. This comparison shows that SEI is advantageous in terms of computational cost.
\begin{table}[!t]
    \centering
    \caption{Training time and peak GPU memory usage of different methods on PANDA.}
    \label{tab:training_time_memory}
    \begin{tabular}{l|cc}
        \toprule
        \textbf{Method} & \textbf{Training Time (h)} & \textbf{Peak GPU Memory (GB)} \\
        \midrule
        SEI & 5.66 & 15.7 \\
        GMM & 5.84 & 15.5 \\
        AUM & 5.68 & 15.6 \\
        INCV & 6.42 & 15.8 \\
        ReCoV & 7.44 & 16.3 \\
        \bottomrule
    \end{tabular}
\end{table}

\end{document}